\title{Fairness Reprogramming}
\newtheorem{theorem}{Theorem}
\newtheorem{lemma}{Lemma}[theorem]
\newcommand{\e}[1]{{\small $#1$}}
\newcommand{\ours}{\textsc{FairReprogram}}
\definecolor{Sijia_color}{rgb}{0.858, 0.188, 0.478}
\newcommand{\adv}{\textsc{AdvIn}}
\newcommand{\advp}{\textsc{AdvPost}}
\newcommand{\erm}{\textsc{ERM}}
\definecolor{grey}{rgb}{0.8,0.8,0.8}
\definecolor{aqua}{rgb}{0, 1, 1}
\definecolor{steel}{rgb}{0.2734, 0.5078, 0.7031}
\definecolor{slate}{rgb}{0.1836, 0.3086, 0.3086}
\newcommand{\hlg}[2]{\setlength{\fboxsep}{0.3pt}\colorbox{green!#2}{\rule[-.05\baselineskip]{0pt}{.7\baselineskip}{#1}}}
\newcommand{\hlr}[2]{\setlength{\fboxsep}{0.3pt}\colorbox{red!#2}{\rule[-.05\baselineskip]{0pt}{.7\baselineskip}{#1}}}
\newcommand{\CR}[1]{{#1}}
\author{
  Guanhua Zhang\thanks{Equal contribution}\\
  UC Santa Barbara\\
  \texttt{guanhua@ucsb.edu}\\
  \And
  Yihua Zhang\footnotemark[1] \\
  Michigan State University\\
  \texttt{zhan1908@msu.edu}\\
  \And 
  Yang Zhang\\
  MIT-IBM Watson AI Lab \\
  \texttt{yang.zhang2@ibm.com}\\
  \And
  Wenqi Fan\\
  The Hong Kong Polytechnic University\\
  \texttt{wenqifan@polyu.edu.hk}\\
  \And 
  Qing Li\\
  The Hong Kong Polytechnic University\\
  \texttt{csqli@comp.polyu.edu.hk}\\
  \And 
  Sijia Liu\\
  Michigan State University \&   MIT-IBM Watson AI Lab\\
  \texttt{liusiji5@msu.edu}\\
  \And 
  Shiyu Chang\\
  UC Santa Barbara\\
  \texttt{chang87@ucsb.edu}\\
}
\begin{document}

\maketitle

\begin{abstract}

Despite a surge of recent advances in promoting machine Learning (ML) fairness, the existing mainstream approaches mostly require retraining or finetuning the entire weights of the neural network to meet the fairness criteria.  However, this is often infeasible in practice for those large-scale trained models due to large computational and storage costs, low data efficiency, and model privacy issues.  In this paper, we propose a new generic fairness learning paradigm, called \textsc{FairReprogram}, which incorporates the model reprogramming technique.  Specifically, \textsc{FairReprogram} considers the case where models can not be changed and appends to the input a set of perturbations, called the fairness trigger, which is tuned towards the fairness criteria under a min-max formulation.  We further introduce an information-theoretic framework that explains why and under what conditions fairness goals can be achieved using the fairness trigger.  We show both theoretically and empirically that the fairness trigger can effectively obscure demographic biases in the output prediction of fixed ML models by providing false demographic information that hinders the model from utilizing the correct demographic information to make the prediction.  Extensive experiments on both NLP and CV datasets demonstrate that our method can achieve better fairness improvements than retraining-based methods with far less data dependency under two widely-used fairness criteria. \CR{Codes are available at \url{https://github.com/UCSB-NLP-Chang/Fairness-Reprogramming.git}.}

\end{abstract}

\section{Introduction}

Fairness in machine learning (ML) has become a critical concern. Due to the biases in data collection, the output prediction is often spuriously correlated with some demographic attributes, which are thus undesirably incorporated into the decision-making process of machine learning models. For example, it is found that some abusive language detection systems tend to classify texts that contain mere mentioning of certain minority groups, \emph{e.g.,} homosexual groups, as abusive content, even though the texts themselves are not abusive at all~\cite{Dixon2018MeasuringAM,Park2018ReducingGB}.  Despite the recent advances in fairness promoting learning methods \cite{Zafar2017FairnessCM,Agarwal2018ARA,Kamishima2012FairnessAwareCW,Baharlouei2019RenyiFI,PrezSuay2017FairKL}, the existing mainstreaming approaches mostly require retraining or finetuning the entire model parameters towards an extra fairness objective. However, this is often infeasible in practice, particularly for those well-trained large-scale models, due to the huge computation and storage costs. In addition, for machine learning models that are deployed as a service, model retraining is hindered by limited access to the model parameters.

Recently, model reprogramming has emerged as an alternative technique to model finetuning. In particular, model reprogramming considers the pre-trained model fixed, and instead modifies their input to re-purpose the model towards different objectives. 
For example, it is shown that a well-crafted input perturbation can re-program an ImageNet classifier to solve the task of counting squares in an image~\cite{elsayed2018adversarial, tsai2020transfer}.
It is also shown that by learning task-specific embedding prompts concatenated to the inputs, pre-trained language models can achieve better performances than full-parameter tuning in natural language understanding tasks~\citep{hambardzumyan2021warp,Ding2022OpenPromptAO,Liu2021PretrainPA} 
Compared with finetuning methods, model reprogramming enjoys lower cost, better scalability, and requires less access to the model parameters. Hence here come our research questions - \emph{Can model reprogramming techniques be applied to fairness objectives? If so, why and how would it work?}

In this paper, we revisit the model reprogramming and propose a novel generic fairness learning paradigm, called \textsc{FairReprogram}. In particular, \textsc{FairReprogram} perturbs the input by appending to the input a global constant vector/feature, called the \emph{fairness trigger}, which is optimized towards the fairness objective under a min-max framework.  \textsc{FairReprogram} is a generic framework that works for various tasks and domains. We further introduce an information-theoretic framework that explains why and under what conditions fairness goals can be achieved using a constant fairness trigger.  We show theoretically and empirically that the fairness trigger can effectively obscure demographic biases in the output prediction of fixed ML models by providing false demographic information that hinders the model from utilizing the correct demographic information to make predictions.

We perform extensive experiments across various NLP and CV datasets with in-the-wild biases. The results show that \textsc{FairReprogram} can consistently achieve better fairness improvement with the retraining-based methods under the two widely-used fairness notions, but with far less trade-off in accuracy.  For example, with comparable accuracy, our method can outperform the retraining based baseline with 10.5\% and 36.5\% lower bias scores over two fairness criteria in the \texttt{CelebA} dataset with the hair color prediction task and gender as demographic information.  
In addition, our method demonstrates great transferability and interpretability. Our theoretical analysis and empirical findings can provide useful insights toward more practical, scalable, and flexible fairness learning paradigms.

\section{Related Work}

\paragraph{Fairness in ML}
Fairness problems in ML models have received increasing attention from both industry~\cite{Holstein2019ImprovingFI} and academia~\cite{Chouldechova2018TheFO,Sun2019MitigatingGB,Mehrabi2021ASO,Field2021ASO}.  
There has been a myriad of fairness definitions in the literature~\cite{Dwork2012FairnessTA,Chouldechova2018TheFO,Makhlouf2020SurveyOC,Hashimoto2018FairnessWD}.
Among them, group fairness notions are one of the most popular~\cite{Calders2010ThreeNB,Hardt2016EqualityOO,Rz2021GroupFI}, which require ML models to perform similarly for different demographic groups.   
In this paper, we mainly focus on the two most widely-used group fairness definitions, demographic parity~\cite{Calders2010ThreeNB} and equalized odds~\cite{Hardt2016EqualityOO}, but it is worth mentioning that our method is general for other fairness notions.  
Existing fairness promoting methods can be broadly categorized into pre-processing, in-processing, and post-processing methods~\cite{Caton2020FairnessIM}.  
Pre-processing methods calibrate the training data to remove the spurious correlations and train fair model on the modified data~\cite{Kamiran2011DataPT,Zemel2013LearningFR,Feldman2015CertifyingAR,Calmon2017OptimizedPF,Park2018ReducingGB,Dixon2018MeasuringAM,Grover2020FairGM,Zhang2020DemographicsSN}.  
In-processing methods work on training ML with extra fairness-aware regularization~\cite{Zafar2017FairnessCM,Agarwal2018ARA,Kamishima2012FairnessAwareCW,Baharlouei2019RenyiFI,PrezSuay2017FairKL,Roh2020FRTrainAM,Zhang2019FAHTAA}.  
For example, an adversarial framework is introduced to train model parameters to meet fairness requirements~\cite{Zhang2018MitigatingUB}.  
In our method, we adopt a similar adversarial loss but optimize the fairness triggers with a fixed model.
Despite the effectiveness, these methods usually consider training fair models from scratch and do not directly apply to already-trained models.  
Post-processing methods focus on calibrating trained ML models to be fair~\cite{Caton2020FairnessIM}.  
Many of them modify the model outputs to meet the fairness criteria~\cite{Dwork2012FairnessTA,Hardt2016EqualityOO,Petersen2021PostprocessingFI,Wei2020OptimizedST,Awasthi2020EqualizedOP,Woodworth2017LearningNP,Mishler2021FairnessIR,Zhao2017MenAL,Kim2019MultiaccuracyBP,Lohia2019BiasMP, Lohia2021PrioritybasedPB,Dwork2018DecoupledCF,Chzhen2019LeveragingLA}.  
For example,  the model outputs are directly modified to meet equalized odds by solving an optimization problem~\cite{Hardt2016EqualityOO}.  
Alternatively, a boosting-based method is introduced to calibrate model outputs \cite{Kim2019MultiaccuracyBP}.

\paragraph{Model reprogramming} 
Model reprogramming~\cite{bahng2022visual, tsai2020transfer, elsayed2018adversarial, neekhara2018adversarial, neekhara2022cross, zheng2021adversarial} aims to repurpose an already trained neural network for different tasks. 
Different from the typical transfer learning that requires modifying the structure and parameters of the given pre-trained model, reprogramming technology instead designs a trainable program appended to the input, while keeping the pre-trained model intact.  
The model reprogramming technology can be designed in the form of an input-agnostic perturbation~\cite{bahng2022visual, elsayed2018adversarial} or a trainable input transformation function together with the label mapping from the source domain to the target domain~\cite{tsai2020transfer}.
In particular, the feasibility of designing a universal input perturbation to reprogram a well-trained ImageNet classifier to the CIFAR-10 dataset is demonstrated in the white-box setting~\cite{elsayed2018adversarial}.  
As an exploration to implement reprogramming in the discrete scenario, another work~\cite{neekhara2018adversarial} successfully reprograms the text classification neural network for alternate classification tasks. 
This work also shows the possibility of developing reprogramming in the black-box setting, where the reprogrammer may not have the access to the parameters of the target model. 
Recent work~\cite{neekhara2022cross} shows the possibility of repurposing deep neural networks designed for image classifiers for the natural language processing and other sequence classification tasks.
It is argued the success of the reprogramming lies in the size of the average input gradient and the input dimension is crucial to the performance of the reprogrammer~\cite{zheng2021adversarial}. 
\CR{It is also shown that generative models like FairGANs~\cite{Xu2018FairGANFG} can be transfered to other tasks by reprogramming with variational auto-encoders~\cite{Nobile2022ReprogrammingFW}.}
A highly related topic to model reprograming is prompt learning in NLP~\cite{Ding2022OpenPromptAO}. 
It is shown that by designing designated text prompts appended to inputs, pre-trained language models could be re-directed to perform well under downstream tasks in a few-shot setting~\cite{Gao2021MakingPL}.
Prompt-based tuning methods have become the mainstream and achieve better performance than fine-tuning in many scenarios~\cite{Li2021PrefixTuningOC,Schick2021ItsNJ,Shin2020ElicitingKF,Diao2022BlackboxPL}. 
Seminal works
about prompt learning can be found in~\cite{Ding2022OpenPromptAO,Liu2021PretrainPA}.  
\textit{However},
nearly all existing methods focus on using model reprogramming to improve accuracy in domain-transfer tasks and to our best knowledge, our work is the first to generalize model reprogramming to improve fairness of a trained model.

\section{Fairness Reprogramming}

In this section, we will introduce the {\ours} algorithms.
As some notations, upper-cased letters, \e{\bm X} and \e{X}, denote random vectors and random variables, respectively; lower-cased letters, \e{\bm x} or \e{x}, denote deterministic vectors and scalars respectively. \e{p_{\bm X}(\cdot)} or  \e{p(\bm X)} denote the probability density function of the (discrete) random variable \e{\bm X}.

\subsection{Problem Formulation}

Consider a classification task, where \e{\bm X} represents the input feature, and \e{Y} represents the output label. In addition, there exists some sensitive attributes or demographic group, \e{Z}, that may be spuriously correlated with \e{Y}. There is a pre-trained classifier, \e{f^*(\cdot)}, that predicts \e{Y} from \e{\bm X}, \emph{i.e.} \e{\hat{Y} = f^*(\bm X)}. The weights of the classifier are considered fixed (hence the superscript \e{*}). Unfortunately, due to the spurious correlation between \e{Z} and \e{Y}, the classifier may be biased against certain demographics.

Our goal is to improve the fairness of the classifier by modifying the input \e{\bm X}, rather than modifying the classifier's fixed weights. In particular, we aim to achieve either of the following fairness criteria.
\begin{equation}
    \mbox{\textbf{Equalized Odds:}} \quad \hat{Y} \perp Z | Y, \quad \mbox{or} \quad \mbox{\textbf{Demographic Parity:}} \quad \hat{Y} \perp Z,
    \label{eq:fair_criteria}
\end{equation}
where \e{\perp} denotes independence. The following two subsection will explain how to modify input and design the optimization objective respectively.

\subsection{Modifying the Input Features}

Input modification primarily involves appending a \emph{fairness trigger} to the input. Formally, the input modification takes the following generic form:
\begin{equation}
    \tilde{\bm X} = m(\bm X; \bm \theta, \bm \delta) = [\bm \delta, g(\bm X; \bm \theta)],
    \label{eq:input_mod}
\end{equation}
where \e{\tilde{\bm X}} denotes the modified input; \e{[\cdot]} denotes vector concatenation. As can be observed, the input modification consists of two steps. First, \e{\bm X} is fed through a transformation function \e{g(\cdot; \bm \theta)}, where \e{\bm \theta} represents the hyper-parameters of the transformation function. The actual form of \e{g(\cdot; \bm \theta)} is contingent upon different applications and modalities, but a general requirement is that \e{g(\cdot; \bm \theta)} should largely retain the information necessary for classification. The second step is to append a fairness trigger, \e{\bm \delta}, to the input, which is a vector that can be optimized over. It is important to note that \e{\bm \delta} is a \emph{constant} -- different inputs get appended the same trigger. Although it does not seem intuitive, we will soon show that a constant trigger is all you need to achieve fair prediction on all different inputs.

Below are specific forms of transformations (Eq.~\eqref{eq:input_mod}) we use.

\textbf{Text Classification} \quad In text classification, \e{\bm X} represents a sequence of input token embeddings. To modify the input, we simply append a fixed number of embeddings after the input text. In this case, \e{g(\cdot; \bm \theta)} is the identity mapping, and \e{\bm \delta} corresponds to the appended embeddings.

\begin{wrapfigure}{r}{0.35\textwidth}
\centerline{
\begin{tabular}{cc}
\hspace*{-2mm}
\includegraphics[width=.15\textwidth,height=!]{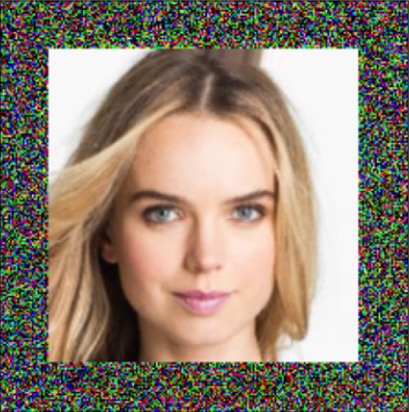} 
&\hspace*{-2mm}
\includegraphics[width=.15\textwidth,height=!]{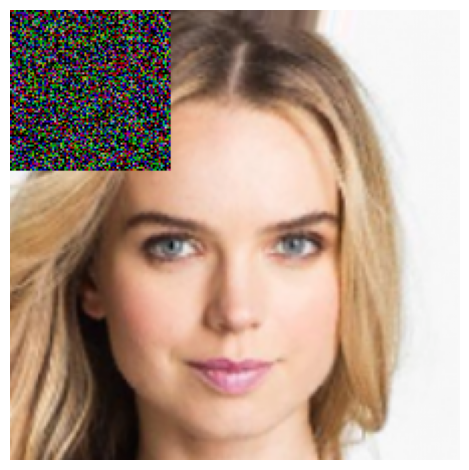} 
\\
\footnotesize{(a) Border trigger}
& 
\hspace*{-2mm}\footnotesize{(b) Patch trigger} 
\end{tabular}}
\caption{\footnotesize{Demonstration of 
the border and patch trigger applied on an image from \texttt{CelebA}\,\cite{liu2015faceattributes}. }}
\vspace*{-3mm}
\label{fig: trigger}
\end{wrapfigure}

\textbf{Image Classification} \quad In image classification, \e{\bm X} represents the (vectorized) input image. Unlike text classification, where the input can have a variable length, the length of the input to the image classification network is fixed. We thus apply the following two approaches to append the trigger, as shown in Fig.~\ref{fig: trigger}. The first approach, called the \emph{patch approach}, removes a patch from the original image, and appends a trigger the same size as the patch to the patch location (as shown in Fig.~\ref{fig: trigger}(a)). In this case, \e{g(\cdot; \bm \theta)} is a function that removes the patch dimension and retain the rest, with \e{\bm \theta} representing the patch location; \e{\bm \delta} represents the trigger feature that replaces the patch. 
The second approach, called the \emph{border approach}, shrinks the image to a smaller image, and then appends the trigger at the border (as shown in Fig.~\ref{fig: trigger}(b)). In this case, \e{g(\cdot; \bm \theta)} is a function that shrinks the image, and \e{\bm \delta} represents the trigger feature at the border. 

\subsection{Optimization Objective}

Our optimization objective is as follows
\begin{equation}
    \min_{\bm \delta, \bm \theta} \,\,\, \mathcal{L}_{util} (\mathcal{D}_{tune}, f^* \circ m) + \lambda \mathcal{L}_{fair} (\mathcal{D}_{tune}, f^* \circ m),
    \label{eq:loss}
\end{equation}
where \e{m = m(\cdot; \bm \theta, \bm \delta)} represents the input modification function as in Eq.~\eqref{eq:input_mod}; \e{\circ} represents nested functions; \e{\mathcal{D}_{tune}} represents the dataset that are used to train the fairness trigger. Note that this is different from the dataset where the classifier, \e{f^*}, is pre-trained.

The first loss term, \e{\mathcal{L}_{util}}, is the utility loss function of the task. For classification tasks, \e{\mathcal{L}_{util}} is usually the cross-entropy loss, \emph{i.e.},
\begin{equation}
    \mathcal{L}_{util}(\mathcal{D}_{tune}, f^* \circ m) = \mathbb{E}_{\bm X, Y \sim \mathcal{D}_{tune}} [\textrm{CE}(Y, f^*(m(\bm X)))],
    \label{eq:loss_util}
\end{equation}
where {\small CE}\e{(\cdot, \cdot)} denotes the cross-entropy loss.

The second loss term, \e{\mathcal{L}_{fair}}, encourages the prediction to follow the fairness criteria as in Eq.~\eqref{eq:fair_criteria}. According to Eq.~\eqref{eq:fair_criteria}, \e{\mathcal{L}_{fair}} should measure how much information about \e{Z} is in \e{\hat{Y}}. To measure this, we introduce another network, called the discriminator, \e{d(\cdot; \bm \phi)}, where \e{\bm \phi} represents its parameters. If the equalized odds criterion is applied,  then \e{d(\cdot; \bm \phi)} should predict \e{Z} from \e{\hat{Y}} and \e{Y}; if the demographic parity criterion is applied, then the input to \e{d(\cdot; \bm \phi)} would just be \e{\hat{Y}}. In the following, we will focus on equalize odds criterion for conciseness. Then, the information of \e{Z} can be measured by maximizing the \emph{negative} cross-entropy loss for the prediction of \e{Z} over the discriminator parameters, \emph{i.e.},
\begin{equation}
    \mathcal{L}_{fair} (\mathcal{D}_{tune}, f^* \circ m) = \max_{\bm \phi} \mathbb{E}_{\bm X, Y, Z \sim \mathcal{D}_{tune}} [-\textrm{CE}(Z, d(f^*(m(\bm X)), Y; \bm \phi))].
    \label{eq:loss_fair}
\end{equation}
By plugging Eqs.~\eqref{eq:loss_util} and \eqref{eq:loss_fair} into \eqref{eq:loss}, we can see that the entire optimization objective becomes a min-max framework, where the discriminator tries to improve its prediction of \e{Z} while the fairness trigger tries to make the prediction worse. As shown in \cite{Zhang2018MitigatingUB}, when the discriminator cannot predict \e{Z} better than chance, the aforementioned fairness criteria can be achieved.

\subsection{Why Does It Work?}
\label{sec:perspective}

It is not immediately straightforward why a \emph{global} trigger can obscure the demographic information for \emph{any} input. In this section, we will propose an information-theoretic framework that illustrates one of the mechanisms through which the trigger can remove the demographic information.

\begin{figure}
    \centering
    \includegraphics[width=\textwidth]{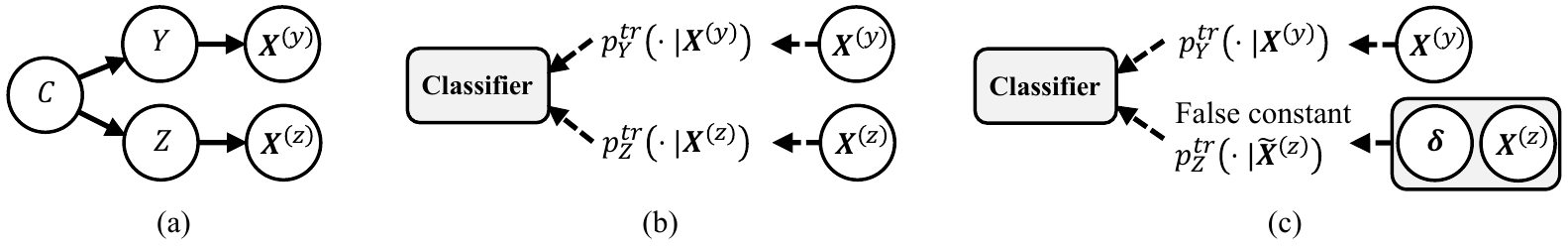}
    \caption{Illustration of why fairness trigger works. (a) The data generation process. (b) The information flow from data to the classifier through the sufficient statistics. (c) Fairness trigger strongly indicative of a demographic group can confuse the classifier with a false demographic posterior, and thus preventing the classifier from using the correct demographic information.}
    \label{fig:theory}
\end{figure}

Our theoretical framework builds upon the data generation process as shown in Fig.~\ref{fig:theory}(a). Specifically, we assume that \e{\bm X} consists of a set of features, \emph{i.e.} \e{\bm X = [\bm X_1, \cdots, \bm X_T]}, where \e{T} is the total number of features. In text classification, a feature can be a word or a word piece; in image classification, a feature can be specific shapes, colors, patterns, \emph{etc}. Assume that these features can be divided into two groups. The first group, denoted as \e{\bm X^{(y)}}, consists of features that are directly governed by the output label \e{Y}; the second group, denoted as \e{\bm X^{(z)}}, consists of featuers that are directly governed by the demographic information \e{Z}. \e{Z} and \e{Y} can be spuriously correlated, \emph{i.e.} there can be common confounders, \e{C}, between \e{Z} and \e{Y}. As a result, both \e{\bm X^{(y)}} and \e{\bm X^{(z)}} are indicative of \e{Y}. 

To further simplify our theoretical analysis, we consider a bag-of-feature scenario, where each feature in \e{\bm X^{(y)}} is drawn from the vocabulary set \e{\mathcal{X}^{(y)}}, and each feature in \e{\bm X^{(z)}} is drawn from the vocabulary set \e{\mathcal{X}^{(z)}}. There should not be any overlap between the two vocabulary sets, \emph{i.e.} \e{\mathcal{X}^{(y)} \cap \mathcal{X}^{(z)} = \varnothing}. Otherwise it violates our assumption that demographic-related features are biased features.

It can be shown (in Appendix~\ref{app: proof}) that the posterior distributions, \e{p_Y(\cdot | \bm X^{(y)})} and \e{p_Z(\cdot | \bm X^{(z)})}, are the sufficient statistics of \e{\bm X^{(y)}} and \e{\bm X^{(z)}} respectively for inferring \e{Y}. In other words, these two posterior distributions summarize all the information about \e{\bm X^{(y)}} and \e{\bm X^{(z)}} that the classifier needs to know to predict \e{Y}. Therefore, we assume that the classifier takes the following generic form
\begin{equation}
    \hat{Y} = f^*(\bm X) = h(p^{tr}_Y(\cdot | \bm X^{(y)}), p^{tr}_Z(\cdot | \bm X^{(z)})).
    \label{eq:classifier_assump}
\end{equation}
Note that we add a superscript, \e{tr}, to emphasize that the probability distributions are over the data set where the classifier is trained, because the classifier has never been trained on inputs modified with the fairness trigger. Eq.~\eqref{eq:classifier_assump} encompasses many common decision functions. For example, it can be shown (in Appendix~\ref{app: proof}) that the posterior distribution \e{p(Y | \bm X)}, which is the minimizer of the cross-entropy loss, is a special case of Eq.~\eqref{eq:classifier_assump}.

As illustrated in Fig.~\ref{fig:theory}(b), \e{p_Y(\cdot | \bm X^{(y)})} and \e{p_Z(\cdot | \bm X^{(z)})} provide two sets of information from input features. \e{p_Y(\cdot | \bm X^{(y)})} provides the \emph{unbiased} information, because a desirable fair classifier should rely only upon \e{p_Y(\cdot | \bm X^{(y)})} to make a decision. On the other hand, \e{p_Z(\cdot | \bm X^{(z)})} provides the \emph{biased} information, because it conveys the demographic information. In other words, the fairness goals can be achieved by cutting off the biased information path. Therefore, our research question boils down to: is it possible to cut off the biased information path with a global fairness trigger \e{\bm \delta}?

Without loss of generality, assume that \e{\bm \delta} consists of only one feature. Consider the case where \e{\bm \delta} is a demographic feature, \emph{i.e.} \e{\bm\delta \in \mathcal{X}^{(z)}}. In this case, we assume the transformed input as defined in Eq.~\eqref{eq:input_mod} can also be divided into two groups:
\begin{equation}
    \tilde{\bm X} = [\tilde{\bm X}^{(y)}, \tilde{\bm X}^{(z)}], \quad \mbox{where} \quad \tilde{\bm X}^{(y)} = g(\bm X^{(y)}), \quad \tilde{\bm X}^{(z)} = [\bm \delta,  g(\bm X^{(z)})].
    \label{eq:modify_assump}
\end{equation}

The following theorem states our main conclusion:
\begin{theorem}
Under the assumptions in Eq.~\eqref{eq:classifier_assump} and \eqref{eq:modify_assump}, and some additional regularity conditions\footnote{Formal assumptions stated in the appendix.}, if the fairness trigger \e{\bm \delta} is indicative of a certain demographic group \e{z}, then
\begin{equation}
    \lim_{p^{tr}(Z=z|\bm X_0^{(z)}=\bm \delta) \rightarrow 1} \textrm{MI}(\hat{\tilde{Y}}, Z | Y) = 0,
\end{equation}
where \e{\textrm{MI}} means mutual information; \e{\hat{\tilde{Y}} = f^*(\tilde{\bm X})} is the classifier's prediction after input is modified.
\label{thm:main}
\end{theorem}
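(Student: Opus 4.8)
The plan is to follow the demographic information through the (fixed) decision rule and show that, in the limit, the classifier's output depends only on the label‑governed features $\bm X^{(y)}$, which are conditionally independent of $Z$ given $Y$.

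\textbf{Step 1: Reduce the modified prediction to two posteriors.} Combining the decomposition of the modified input in Eq.~\eqref{eq:modify_assump} with the assumed classifier form in Eq.~\eqref{eq:classifier_assump}, I would first write
\[
\hat{\tilde Y} \;=\; f^*(\tilde{\bm X}) \;=\; h\Big(p^{tr}_Y\big(\cdot \mid g(\bm X^{(y)})\big),\; p^{tr}_Z\big(\cdot \mid [\bm\delta,\, g(\bm X^{(z)})]\big)\Big),
\]
so that the trigger $\bm\delta$ enters only through the second argument, the demographic posterior.

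\textbf{Step 2: The trigger collapses the demographic posterior to a point mass.} Under the bag‑of‑feature assumption (the $\mathcal{X}^{(z)}$‑features are conditionally independent given $Z$), Bayes' rule plus the identity $p^{tr}(x\mid z')=p^{tr}(z'\mid x)p^{tr}(x)/p^{tr}(z')$ gives, for every group $z'$ and every realization $\bm x^{(z)}$,
\[
p^{tr}_Z\big(z' \mid [\bm\delta,\, \bm x^{(z)}]\big)\;\propto\; p^{tr}(z')^{-|\bm x^{(z)}|}\; p^{tr}\big(Z=z'\mid \bm X_0^{(z)}=\bm\delta\big)\;\prod_{t} p^{tr}\big(Z=z'\mid X^{(z)}_t=x^{(z)}_t\big),
\]
with the normalization taken over $z'$. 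As $p^{tr}(Z=z\mid \bm X_0^{(z)}=\bm\delta)\to 1$ (hence $\to 0$ for all $z'\ne z$), every numerator with $z'\ne z$ vanishes, while the $z'=z$ numerator stays bounded away from $0$ under the regularity condition that $p^{tr}(Z=z\mid X^{(z)}_t=x)>0$ and $p^{tr}(Z=z)>0$ for all feasible features. Hence $p^{tr}_Z(\cdot \mid [\bm\delta,\, g(\bm X^{(z)})])$ converges to the degenerate law $\pi_z$ with $\pi_z(z')=\mathbb{1}[z'=z]$, and — since the vocabulary is finite — this convergence is uniform over the realizations of $\bm X^{(z)}$. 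The point is that the limit no longer depends on $\bm X^{(z)}$ at all.

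\textbf{Step 3: Pass to the limit and invoke conditional independence.} Under a continuity regularity condition on $h$ at the degenerate posterior $\pi_z$, Step 2 yields $\hat{\tilde Y}\to h\big(p^{tr}_Y(\cdot\mid g(\bm X^{(y)})),\, \pi_z\big)=:\hat Y_\infty$, a \emph{deterministic function of $\bm X^{(y)}$ alone} (the trigger has frozen the second argument to the constant $\pi_z$). In the data‑generation DAG of Fig.~\ref{fig:theory}(a), the only path linking $\bm X^{(y)}$ to $Z$ passes through $Y$ and the confounder $C$, namely $\bm X^{(y)}\!\leftarrow\! Y\!\leftarrow\! C\!\rightarrow\! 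Z$; conditioning on $Y$ blocks this chain, so $\bm X^{(y)}\perp Z\mid Y$, whence $\hat Y_\infty\perp Z\mid Y$ and $\mathrm{MI}(\hat Y_\infty, Z\mid Y)=0$. Finally, because all alphabets are finite, conditional mutual information is continuous in the joint law, so $\mathrm{MI}(\hat{\tilde Y}, Z\mid Y)\to\mathrm{MI}(\hat Y_\infty, Z\mid Y)=0$, which is the claim.

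\textbf{Main obstacle.} The conceptual core (Steps 2–3) is short; the real work is in the interchange of limits and the regularity conditions. The two delicate points I expect to carry the weight of the appendix argument are: (i) upgrading pointwise convergence of the demographic posterior to uniform convergence over inputs, so that the induced prediction actually converges — this is where finiteness of the vocabulary and strict positivity of the per‑feature posteriors are needed; and (ii) ensuring $h$ is continuous at $\pi_z$, so that the (possibly hard) prediction converges rather than oscillating — without this, $\mathrm{MI}$ need not vanish. Stating these conditions precisely and checking continuity of conditional MI under them is, I anticipate, the bulk of the formal proof.
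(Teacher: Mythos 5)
Your proposal follows the same overall strategy as the paper's proof: reduce the modified prediction to $h$ applied to the two sufficient-statistic posteriors, show that the trigger drives the demographic posterior $p^{tr}_Z(\cdot\mid[\bm\delta, g(\bm X^{(z)})])$ toward the point mass at $z$, and then conclude from $\bm X^{(y)}\perp Z\mid Y$ that the limiting prediction carries no conditional information about $Z$ (the paper phrases this last step as $H(Z\mid h(p^{tr}_Y(\cdot\mid\bm X^{(y)}),c),Y)=H(Z\mid Y)$ via the data-processing inequality). The genuine divergence is in the regularity conditions you invent for Step 2. You assume strict positivity of every per-feature posterior $p^{tr}(Z=z\mid X^{(z)}_t=x)$ over a finite vocabulary, which yields uniform convergence of the posterior in one stroke. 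The paper instead posits a weaker ``infrequent strong demographic features'' condition --- realizations with $p(Z=z\mid\bm X^{(z)}=\bm x^{(z)})\le\sigma$ may exist but carry total probability at most $\varepsilon$ --- and therefore needs a two-set splitting argument: on the ``good'' set the likelihood ratio against $z$ is bounded and the posterior converges uniformly (essentially your Step 2), while the ``bad'' set is handled by bounding its contribution to the entropy difference by $2H(Z)$ times its small probability. On a finite support your positivity assumption and the paper's condition coincide, but the paper's version also covers feature spaces where the posterior for $z$ can be arbitrarily small on a rare set, which is what the extra bookkeeping buys.

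One caution on Step 3: your justification that ``conditional mutual information is continuous in the joint law'' on finite alphabets does not apply as stated, because the alphabet of $\hat{\tilde Y}$ is not fixed --- its support points move with $\bm\delta$ and merge only in the limit. Pointwise (even uniform) convergence $\hat{\tilde Y}\to\hat Y_\infty$ does not by itself force $\mathrm{MI}(\hat{\tilde Y},Z\mid Y)\to\mathrm{MI}(\hat Y_\infty,Z\mid Y)$: take $h(p,v)=v_z$, which is continuous, yet for every finite trigger strength $\hat{\tilde Y}$ remains a (generically injective) function of $\bm X^{(z)}$, so the conditional mutual information stays at $\mathrm{MI}(\bm X^{(z)},Z\mid Y)$ even though the limiting prediction is constant. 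You correctly identify this as the delicate point, but continuity of $h$ at $\pi_z$ alone does not resolve it; one needs the $\sigma$-algebra generated by the prediction to collapse along with its values. To be fair, the paper's own proof glosses over the same issue when it asserts uniform continuity of the composite conditional entropy and decomposes $H(Z\mid h(\cdot),Y)$ as an expectation over $\bm x^{(z)}$, so this is a weakness shared with the published argument rather than a defect unique to yours.
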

\e{p^{tr}(Z=z|\bm X_0^{(z)}=\bm \delta) \rightarrow 1} means that the fairness trigger is very strongly indicative of the demographic group \e{z}. Therefore, Thm.~\ref{thm:main} essentially states that if the prepended trigger feature is very strongly indicative of a certain demographic group, then equalized odds can be achieved. A formal proof is presented in Appendix~\ref{app: proof}. Here we would like to give an intuitive explanation. When \e{p^{tr}(Z=z|\bm X_0^{(z)}=\bm \delta) \rightarrow 1}, it will also happen that \e{p^{tr}(Z=z|\bm X^{(z)} = \tilde{\bm X}^{(z)}) \rightarrow 1}. In other words, the fairness trigger \e{\bm \delta} would overshadow the rest of the demographic features and `trick' the classifier into believing all the different inputs belong to the same demographic group \e{z}. As a result, the second argument in Eq~\eqref{eq:classifier_assump} would reduce to a constant (1 for demographic group \e{z} and 0 elsewhere), effectively blocking the biased information path, as shown in Fig.~\ref{fig:theory}(c). Note that the premise for the fairness trigger to work is that the classifier has never seen the modified input. Otherwise, the classifier will be able to learn to ignore the constant trigger and still elicit the true demographic information from input.

\section{Experiments}
In this section, we evaluate the effectiveness of {\ours} on both NLP and CV applications in terms of accuracy, fairness, performances under low-data regime, transferability and interpretability.

\subsection{Experiment Setup}
\label{sec: setup}
\paragraph{Datasets}We consider the following two commonly used  NLP and CV datasets:
\begin{itemize}[leftmargin=9pt,topsep=0pt]
    \vspace{-1mm}
    \item
    \texttt{\textbf{Civil Comments}}~\cite{Koh2021WILDSAB,AI2019JigsawUB}:
    The dataset contains 448k texts with labels that depict the toxicity of each input. The demographic information of each text is provided.  
    \vspace{-1mm}
    
    \item 
    \textbf{\texttt{CelebA}}~\cite{liu2015faceattributes}: 
    The dataset contains over 200k human face images and each contains 39 binary attribute annotations.  We follow the conventional setting \cite{liu2015faceattributes} that adopts the hair color prediction task in our experiment and uses gender annotation as the demographic information.\,\cite{xu2020investigating, dash2020counterfactual, hwang2020fairfacegan}   
    \vspace{-1mm}
\end{itemize}

For both datasets, we split the entire data into a training set, a tuning set, a validation set, and a testing set.  The training set is used for the base model training, \emph{i.e.}, to obtain a biased model for reprogramming.  The tunning set and validation set are used for trigger training and hyper-parameter selection.  We report our results on the testing set.   It is worth mentioning that there is no overlapping data between different sets and the size of the tuning set is much smaller than the training one.  Specifically, we set the size ratio between the tunning set and the training as $1/5$ and $1/100$ for \texttt{Civil Comments} and \texttt{CelebA}, respectively.  
The full statistics of the datasets can be found in Appendix~\ref{app: data_statistic}.

\paragraph{Metrics} Besides the model accuracy, we introduce two empirical fairness metrics, one under each of the two fairness criteria as in Eq.~\eqref{eq:fair_criteria}. For binary classification, the metrics are calculated as:
\begin{equation*}
    \mbox{\textbf{DP:}}\sum_{z\in \mathcal{Z}} |\text{p}(\hat{Y}=1)-\text{p}(\hat{Y}=1|Z=z)|, \quad \mbox{\textbf{EO:}} \sum_{z\in \mathcal{Z}} (|\text{FPR}-\text{FPR}_z| + |\text{FNR} - \text{FNR}_z|)/2,
\end{equation*}
where DP and EO stand for demographic parity and equalized odds respectively. $\text{FPR}$ and $\text{FNR}$ are the false positive/negative rate, and the subscript $z$ denotes the score is calculated within a specific demographic group $Z=z$.  For example, $\text{FPR}_{male}$ indicates the false positive rate calculated over all examples with the ``male'' annotation. For a multi-class setting, the bias scores are first calculated similarly using one-vs-all for each class and then averaged across different classes.  
All reported results are the average of three different random runs.  
It can be shown that these metrics are non-negative, and will become zero when their corresponding fairness criteria are achieved.
For better elaboration, we report the negative bias scores in our experiments,
so the larger these negative scores are, 
the better the model satisfies the corresponding fairness criteria.

\paragraph{Baselines and implementation details}
We consider the following models for comparison:

$\bullet$ \textsc{Base}: the base model to be reprogrammed, trained with the cross-entropy loss on the training set.  

$\bullet$ {\adv}~\cite{Zhang2018MitigatingUB}: an in-processing adversarial training method that optimizes both model accuracy and fairness using the training set.  

$\bullet$ {\advp}: a post-processing variant of {\adv}, which fine-tunes the \textsc{Base} model with the same fairness-aware adversarial objectives as {\adv}, but using the (low-resource) tunning set only.  

For NLP experiments, we use a pre-trained \textsc{Bert} \cite{Devlin2019BERTPO} to obtain the \textsc{Base} and {\adv} models.  We use \textsc{AdamW} \cite{loshchilov2017decoupled} as the optimizer, and set the learning rate to $10^{-5}$ for all baselines and $0.1$ for {\ours}.  For CV experiments, we consider a \textsc{ResNet-18} \cite{he2016deep} that pre-trained on ImageNet.  
The discriminator used in {\adv}, {\advp} and {\ours} is a three-layer MLP, and the parameters are optimized using \textsc{Adam} with a learning rate of $0.01$.  We pick the best model based on the accuracy (for the \textsc{Base}) or the bias scores (for all other debiasing methods) of the validation set.  \CR{We refer to Appendix~\ref{app: training_detail} for more details and Appendix\,\ref{app: additional_results} for more baseline studies.}

Next we introduce the implementation details of the triggers for different variants of {\ours}. For image classification task, we adopt the border and patch trigger as shown in Fig.\,\ref{fig: trigger}, termed \textsc{FairReprogram (Border)} and \textsc{FairReprogram (Patch)} correspondingly. We define the trigger size as the width of the trigger frame for border trigger and the width of the square patch for patch trigger. Unless otherwise stated, the default trigger size for each setting are $20$ and $80$. 

For text classification task, we introduce a probability vector $\bm v_i 
$ to control the selection of trigger word for each position $i$. 
Specifically, we have the trigger $\bm \delta_i = \bm E \bm v_i$ where $\bm E 
$ represents the pretrained word embedding matrix of \textsc{Bert}.
Then we simply concatenate $\bm \delta$ after all input texts\footnote{The trigger is appended as a suffix after all input tokens but before [SEP] for \textsc{Bert}.} in the embeddings space as the fairness trigger. We introduce two types of trigger. The first type, called \textsc{FairReprogram (Soft)} , uses continuous \e{\bm v_i}'s, and each  \e{\bm v_i} is projected onto the continuous probability simplex using the bisection algorithm after each training step. The second type, called \textsc{FairReprogram (Hard)}, discretizes each $\bm v_i$ into a one-hot vector $\hat{\bm v}_i$ via $\mathop{\arg\max}$ operation.  
We adopt the straight through technique~\cite{Bengio2013EstimatingOP} to update $\bm v_i$ during training.
The triggers found by \textsc{FairReprogram (Hard)} enjoy better interpretability as they correspond to a sequence of word tokens.
Unless specified otherwise, we set the trigger word number as five for our experiments.

\subsection{Results}

\begin{figure}[!t]
\centerline{
\begin{tabular}{cccc}
    \hspace*{-2mm} \includegraphics[width=.25\textwidth,height=!]{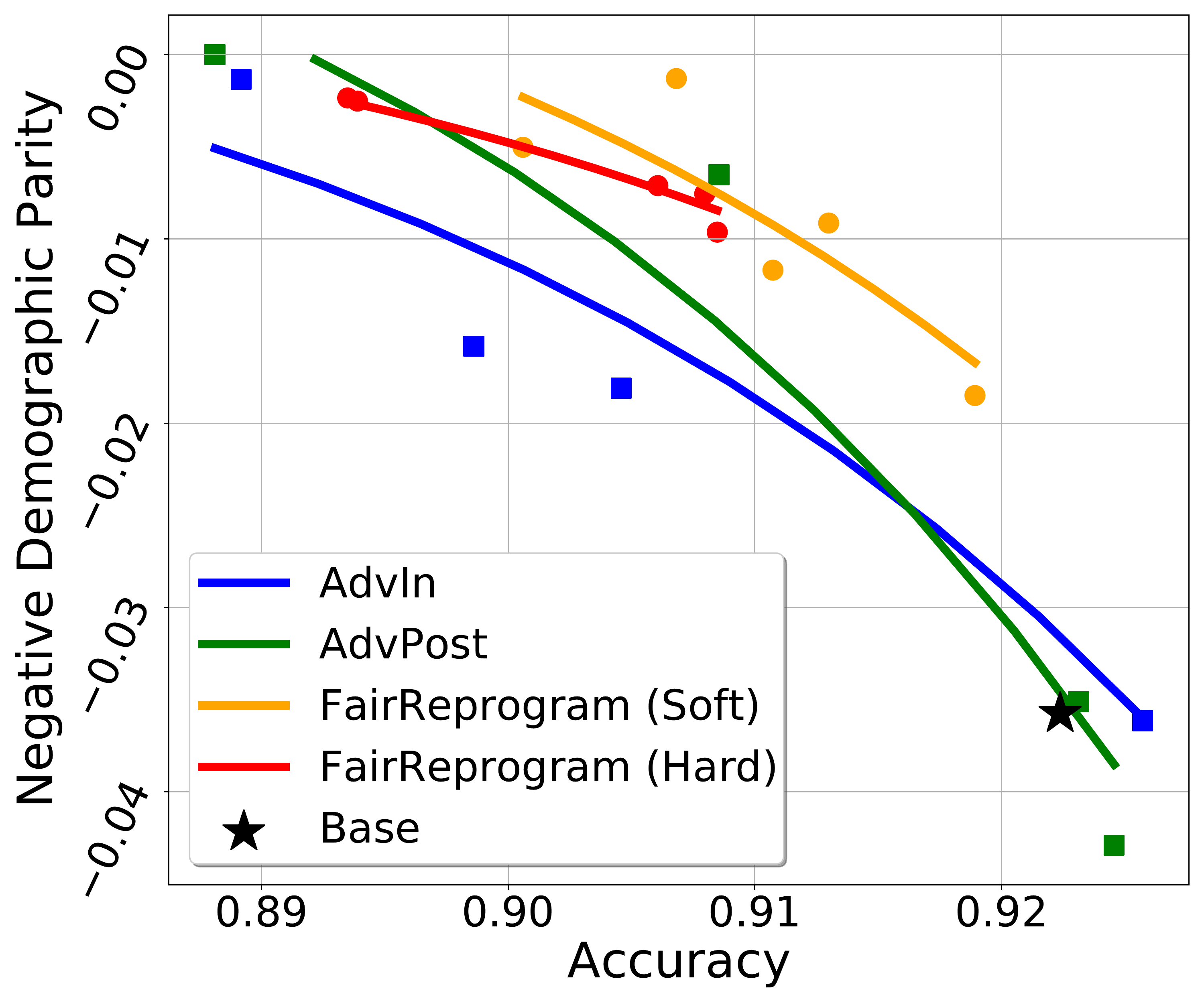} &
    \hspace*{-4mm}  \includegraphics[width=.25\textwidth,height=!]{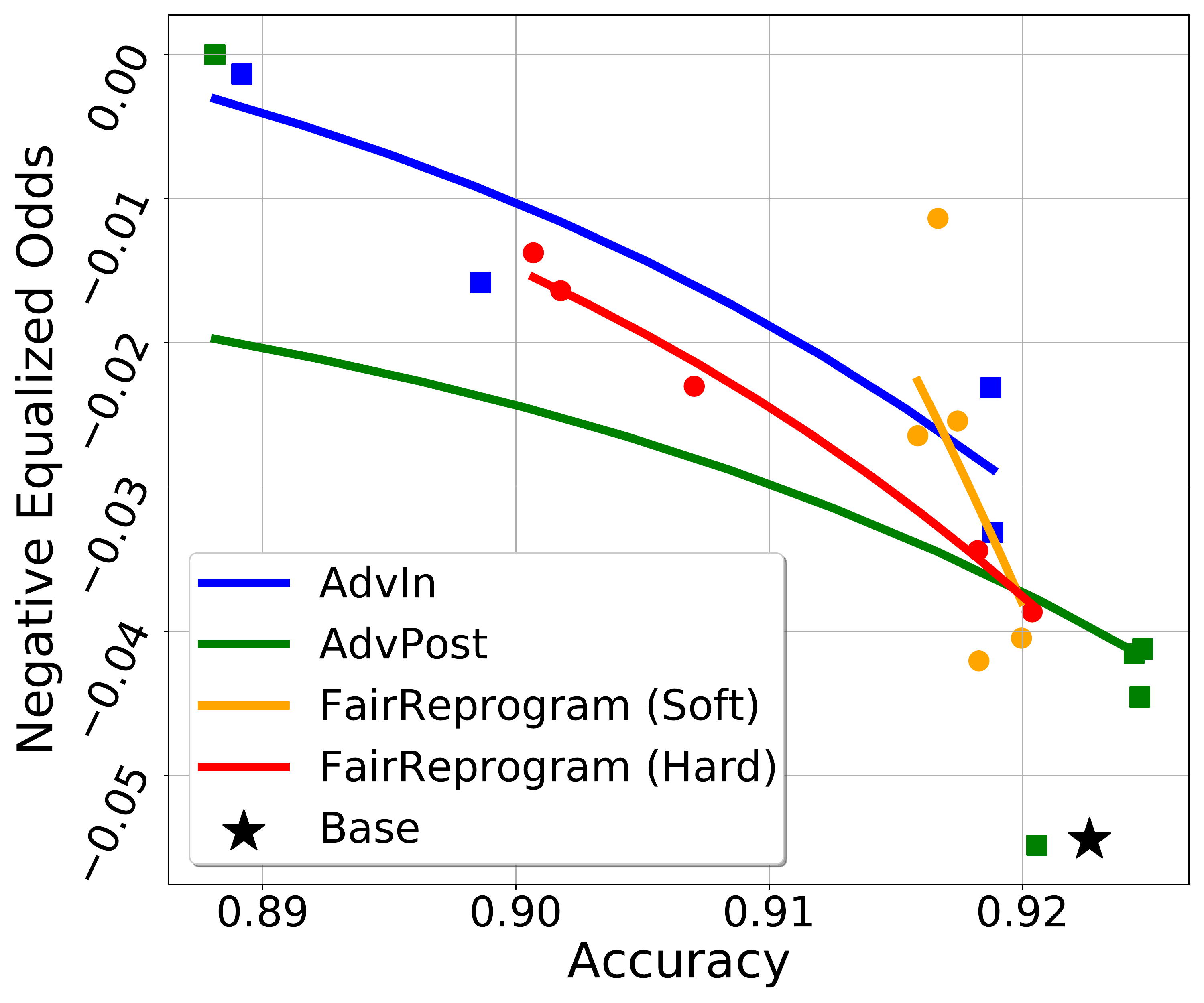} &
    \hspace*{-4mm}  \includegraphics[width=.25\textwidth,height=!]{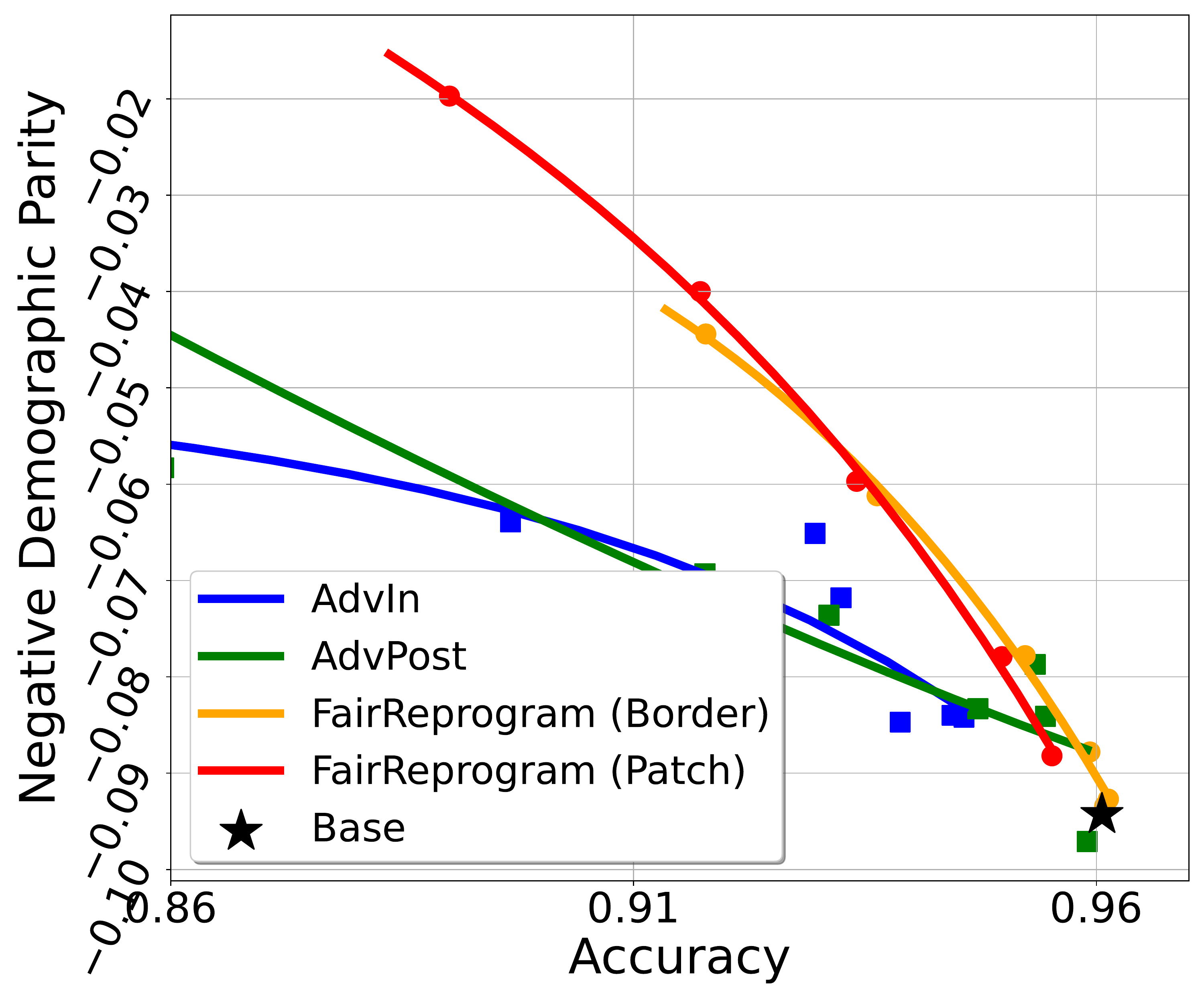} &
    \hspace*{-4mm} \includegraphics[width=.25\textwidth,height=!]{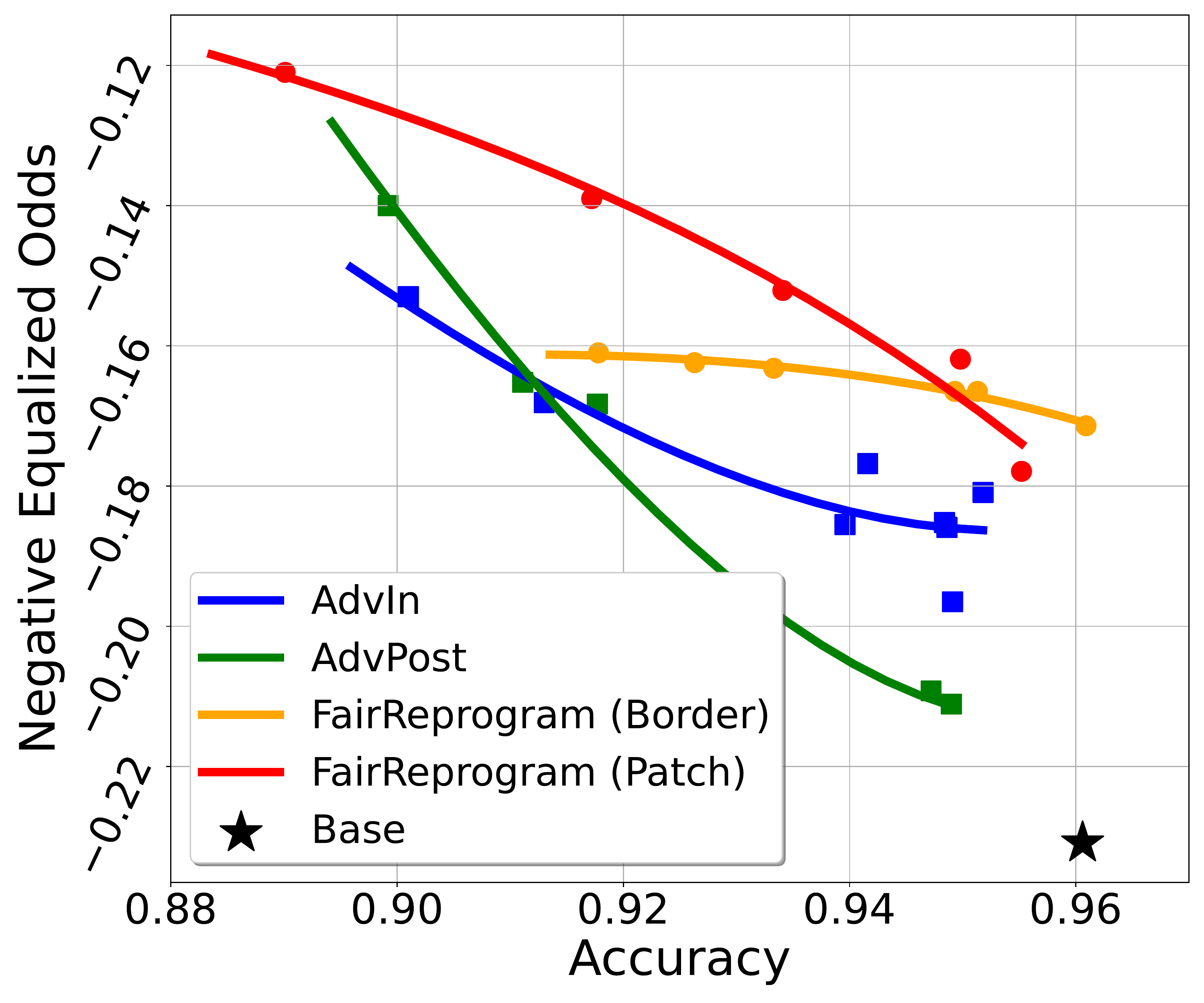} \\
    \multicolumn{2}{c}{\footnotesize{(a) \texttt{Civil Comments}}} & \multicolumn{2}{c}{\footnotesize{(b) \texttt{CelebA}}}
\end{tabular}}
\caption{\footnotesize{Results on (a) \texttt{Civil Comments} and (b) \texttt{CelebA}. We report the negative DP (left) and the negative EO (right) scores.  For each method, we vary the trade-off parameter $\lambda$ (as shown in \eqref{eq:loss}) to record the performance.  The closer a dot to the upper-right corner, the better the model is.   We consider five different $\lambda$s for each method. 
The solid curve is the fitted polynomial with order 30. 
}}
\label{fig: exp_overview}
\end{figure}

\begin{figure}[!t]
\centerline{
\begin{tabular}{cccc}
    \hspace*{-2mm} \includegraphics[width=.25\textwidth,height=!]{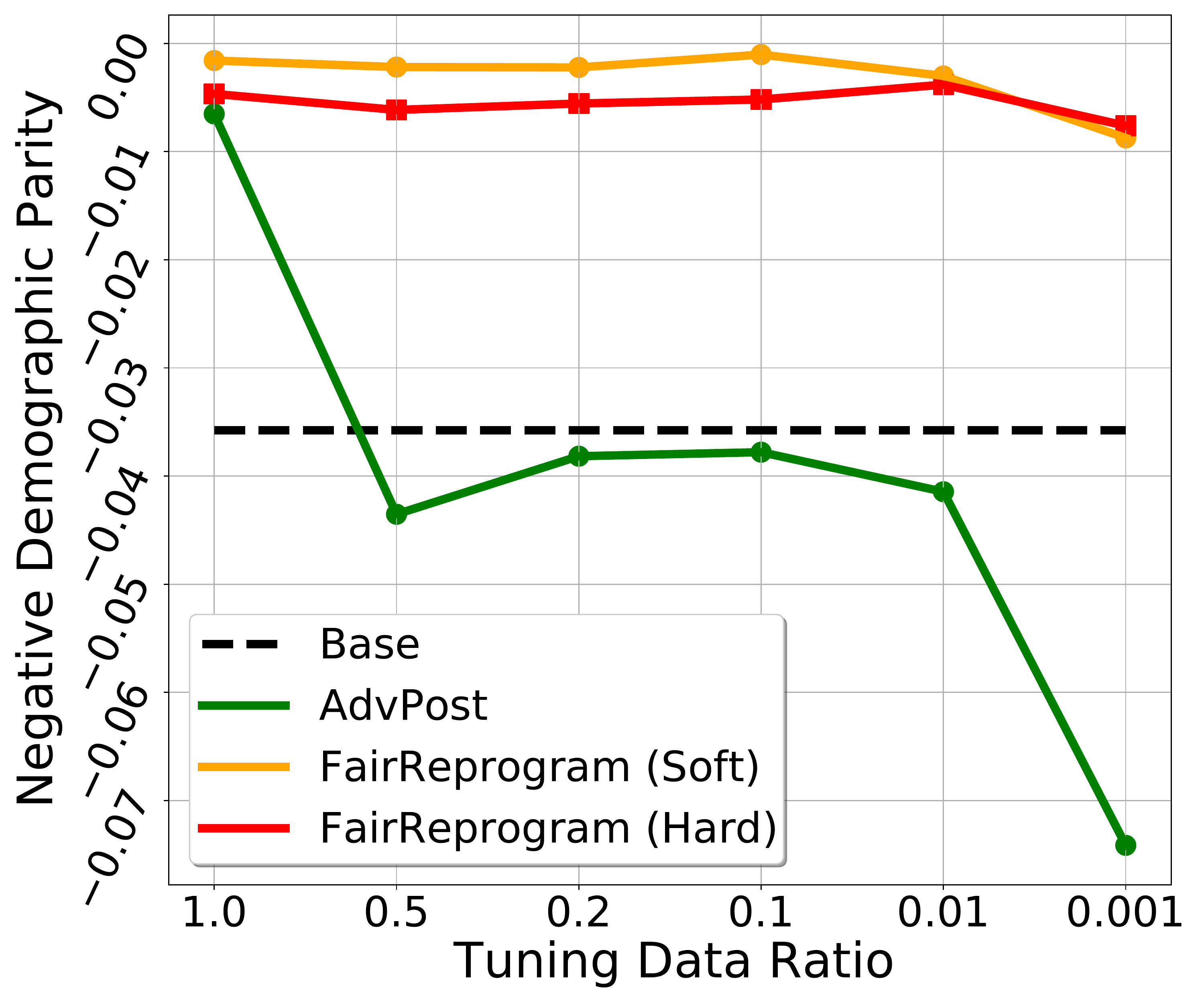} &
    \hspace*{-4mm}  \includegraphics[width=.25\textwidth,height=!]{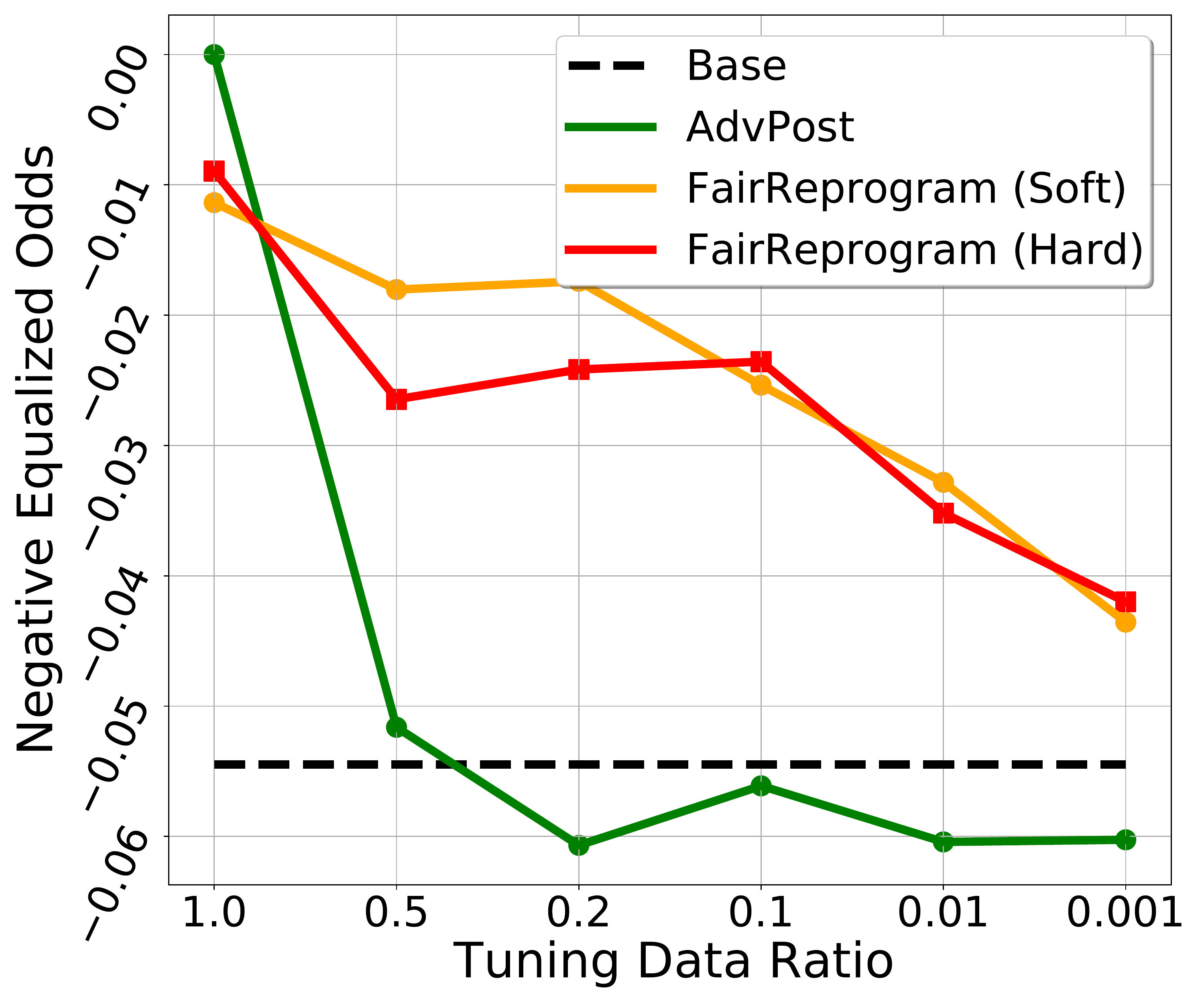} &
    \hspace*{-4mm}  \includegraphics[width=.25\textwidth,height=!]{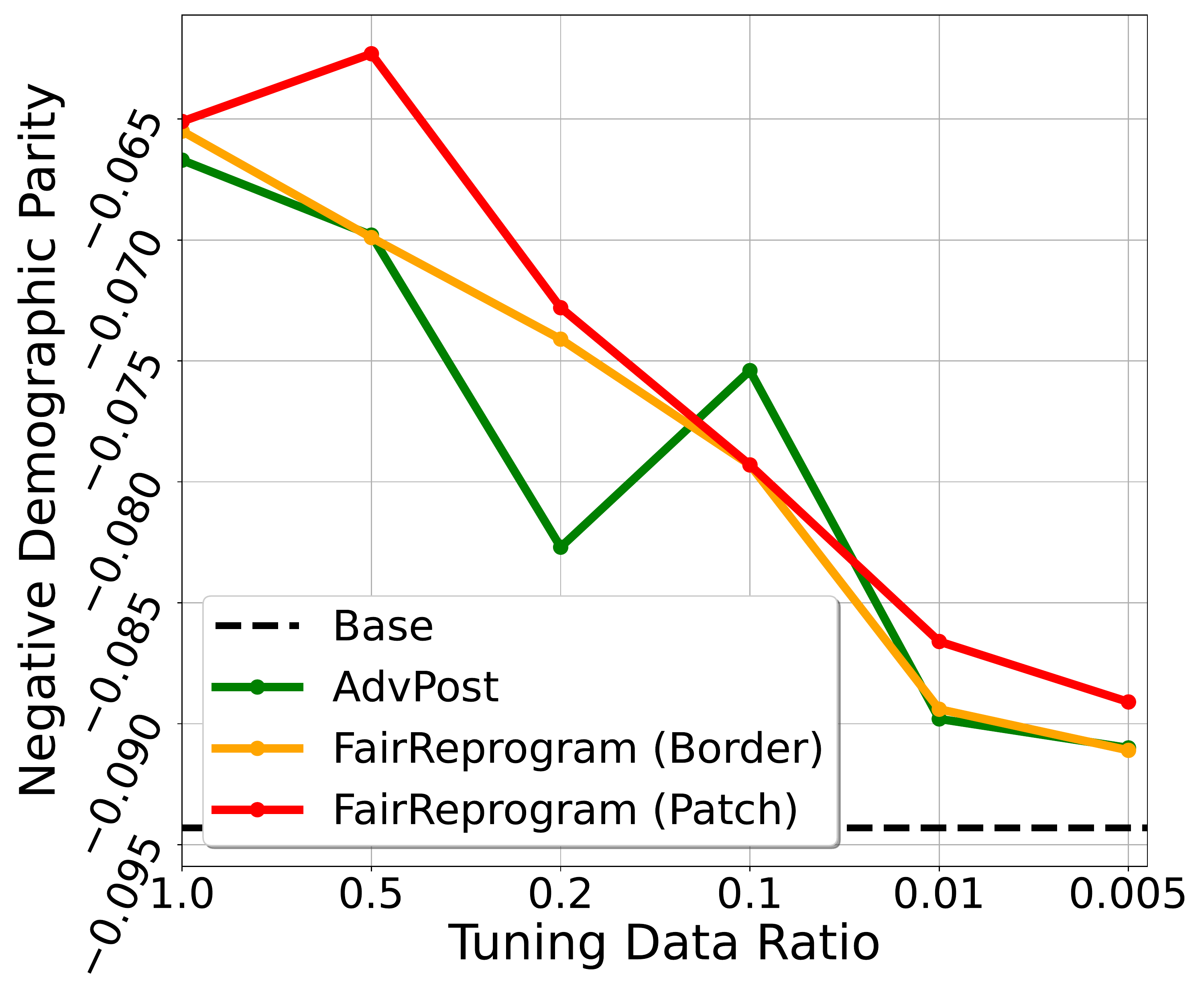} &
    \hspace*{-4mm} \includegraphics[width=.25\textwidth,height=!]{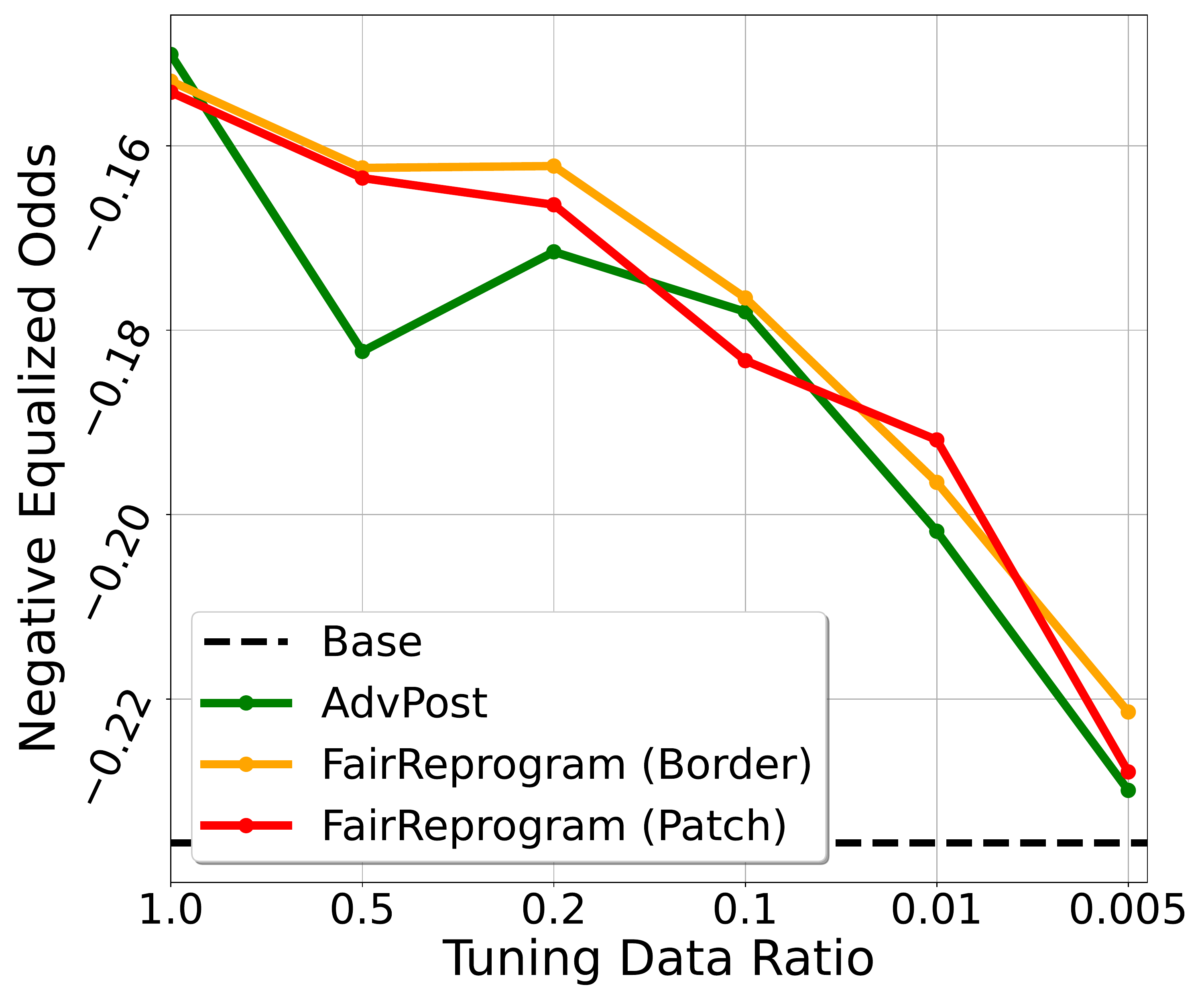} \\
    \multicolumn{2}{c}{\footnotesize{(a) \texttt{Civil Comments}}} & \multicolumn{2}{c}{\footnotesize{(b) \texttt{CelebA}}}
\end{tabular}}
\caption{\footnotesize{
Results on (a) \texttt{Civil Comments} and (b) \texttt{CelebA} with different tuning data ratio. We report the negative DP (left) and negative EO (right) scores.
We consider a fixed \textsc{Base} model trained with training set, whose negative bias scores are presented as a black dashed line.
Then we train other methods with different tuning data ratio to promote fairness of the \textsc{Base} model.
}}
\label{fig: limit_data}
\end{figure}
Fig.~\ref{fig: exp_overview} shows the performance of the proposed \textsc{FairReprogram} with other baselines on both NLP (subfigure (a)) and CV (subfigure (b)) datasets using DP (left) and EO (right) metrics. In each subfigure, the data samples of the same method (dots in the same color) are generated by explicit changing the adversary weight $\lambda$ in \eqref{eq:loss}, which controls the trade-off between fairness and accuracy. We further fit the data with polynomial regression to present the curves. Appendix\,\ref{app: training_detail} shows the detailed $\lambda$ choices for different methods.
Here are our key observations.  \underline{{First}}, our method improves the fairness of the \textsc{Base} model.  In particular, our methods (both \textcolor{orange}{orange} and \textcolor{red}{red} curves) achieve higher negative DP and EO scores with a comparable classification accuracy.    \underline{Second}, our method enjoys a better fairness-accuracy trade-off compared with all other baselines.  Specifically, the curves of our method lie farther to the upper-right corner of the plots, which implies that our method improves model fairness with fewer sacrifices on accuracy.   It is also worth noting that although {\adv} achieves good fairness scores, it uses much more data for training.

\paragraph{Limited data setting} 
We further evaluate {\advp} and {\ours} with decreasing the number of data in the tuning set.
Specifically, we fix a $\lambda$ for each method such that all methods  achieve comparable bias score with full tuning set.
The detailed $\lambda$ choices are provided in Appendix~\ref{app: training_detail}.
Then we apply these methods to subsets of the tuning set with different proportions.
The results are shown in Fig.~\ref{fig: limit_data}. There are two key observations.
\underline{First}, our method can consistently improve fairness upon \textsc{Base} model even with 1\% tuning data, indicating a high data efficiency of {\ours}.
\underline{Second}, {\ours} achieves better fairness than {\advp} does when tuning data number decreases.
For example, in Fig.~\ref{fig: limit_data} (a), the curve of our method is significantly above the {\advp} as tuning data decreases.
When the tuning set size is extremely small, {\advp} significantly deteriorates and even underperforms the \textsc{Base} model.

\paragraph{Transferability} 
\begin{figure}[!t]
\centerline{
\begin{tabular}{cccc}
    \hspace*{-2mm} \includegraphics[width=.25\textwidth,height=!]{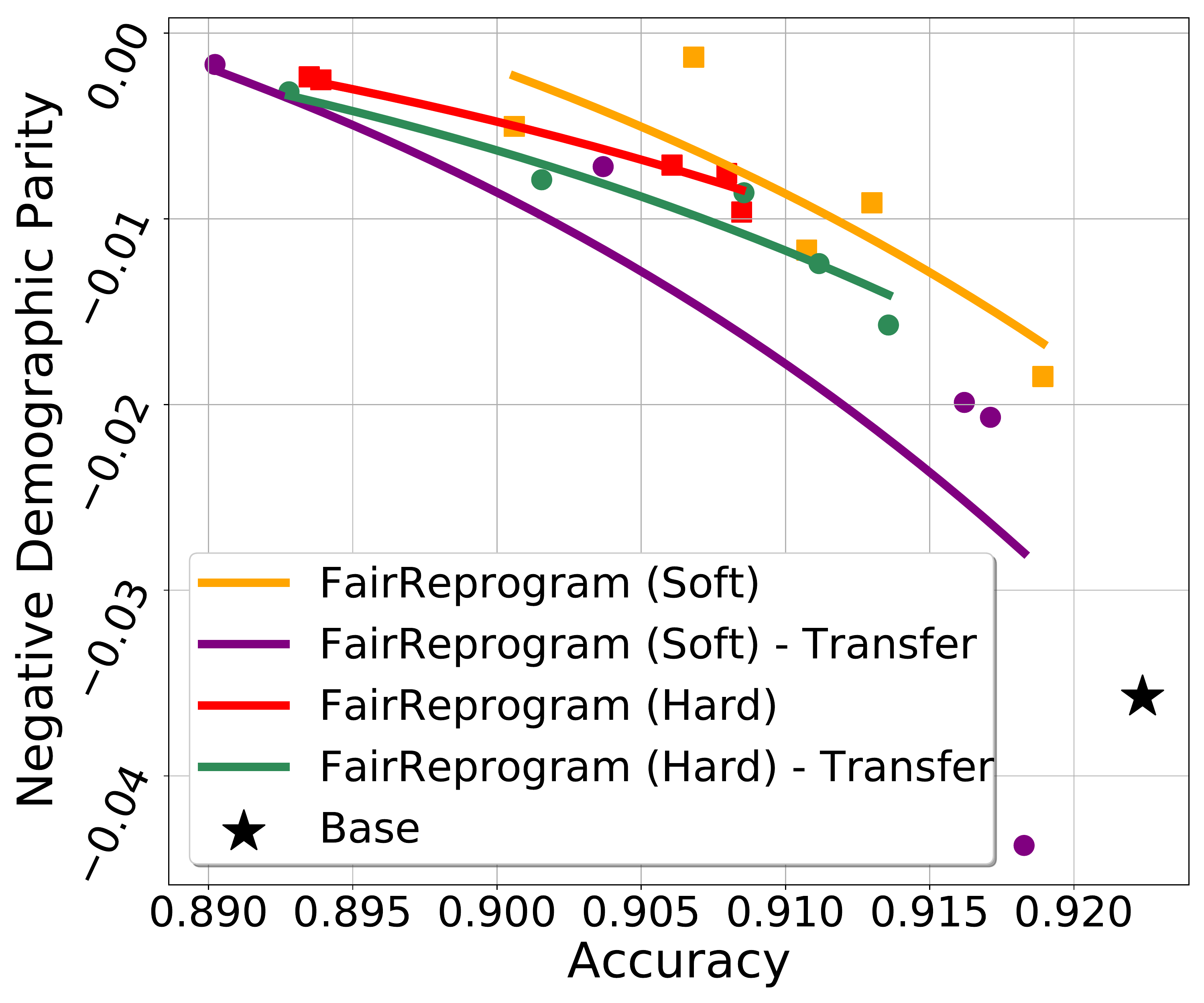} &
    \hspace*{-4mm}  \includegraphics[width=.25\textwidth,height=!]{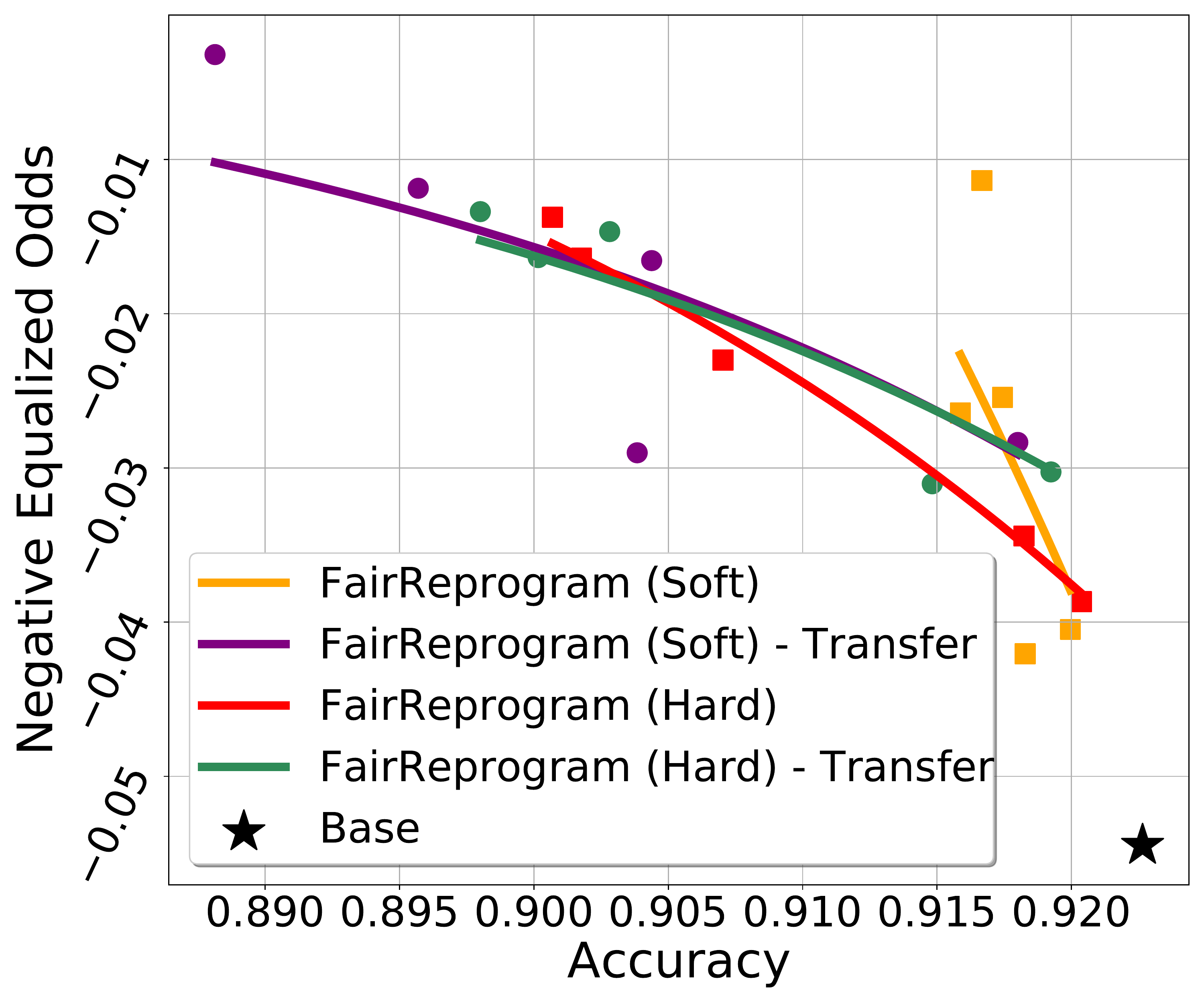} &
    \hspace*{-4mm}  \includegraphics[width=.25\textwidth,height=!]{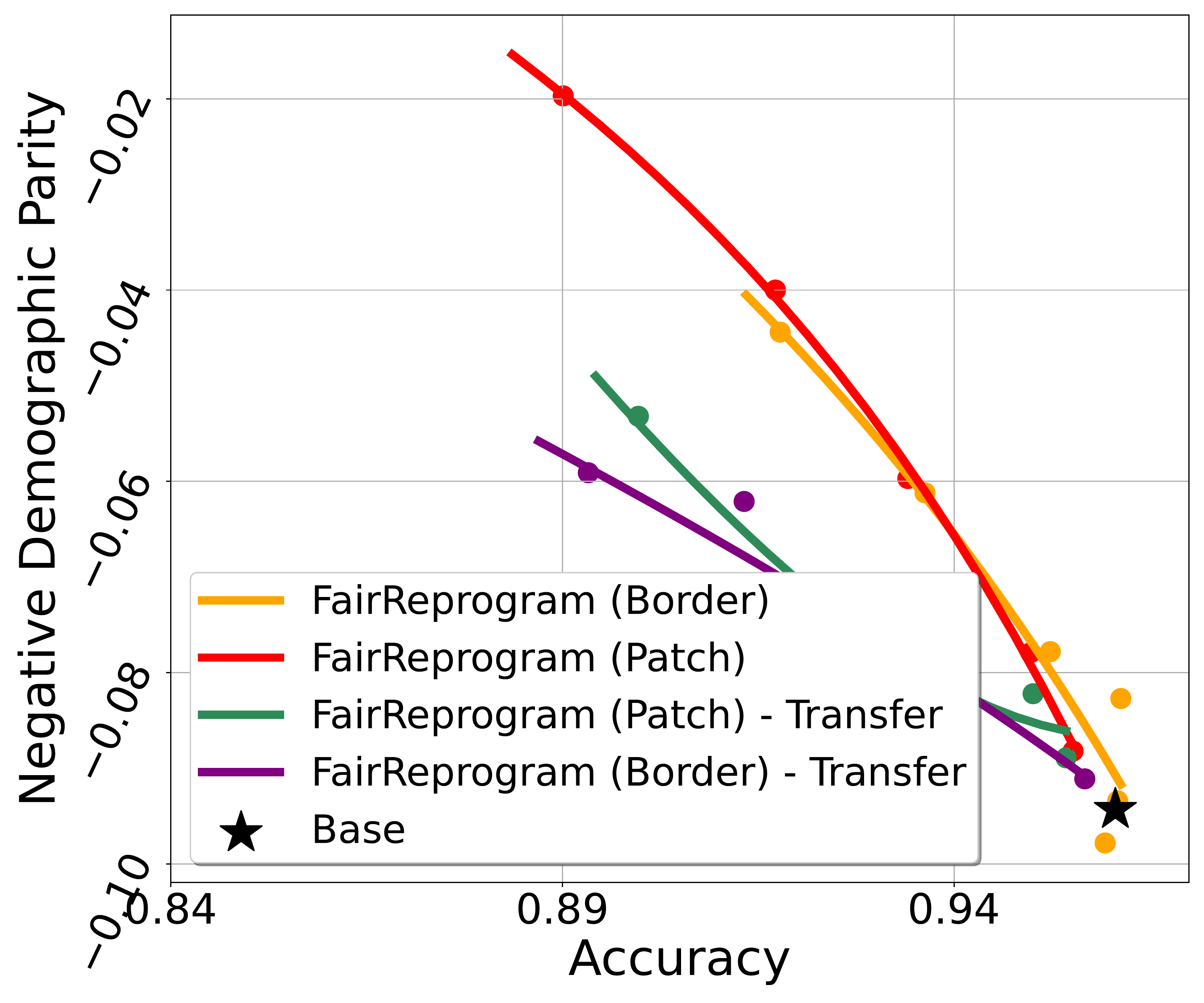} &
    \hspace*{-4mm} \includegraphics[width=.25\textwidth,height=!]{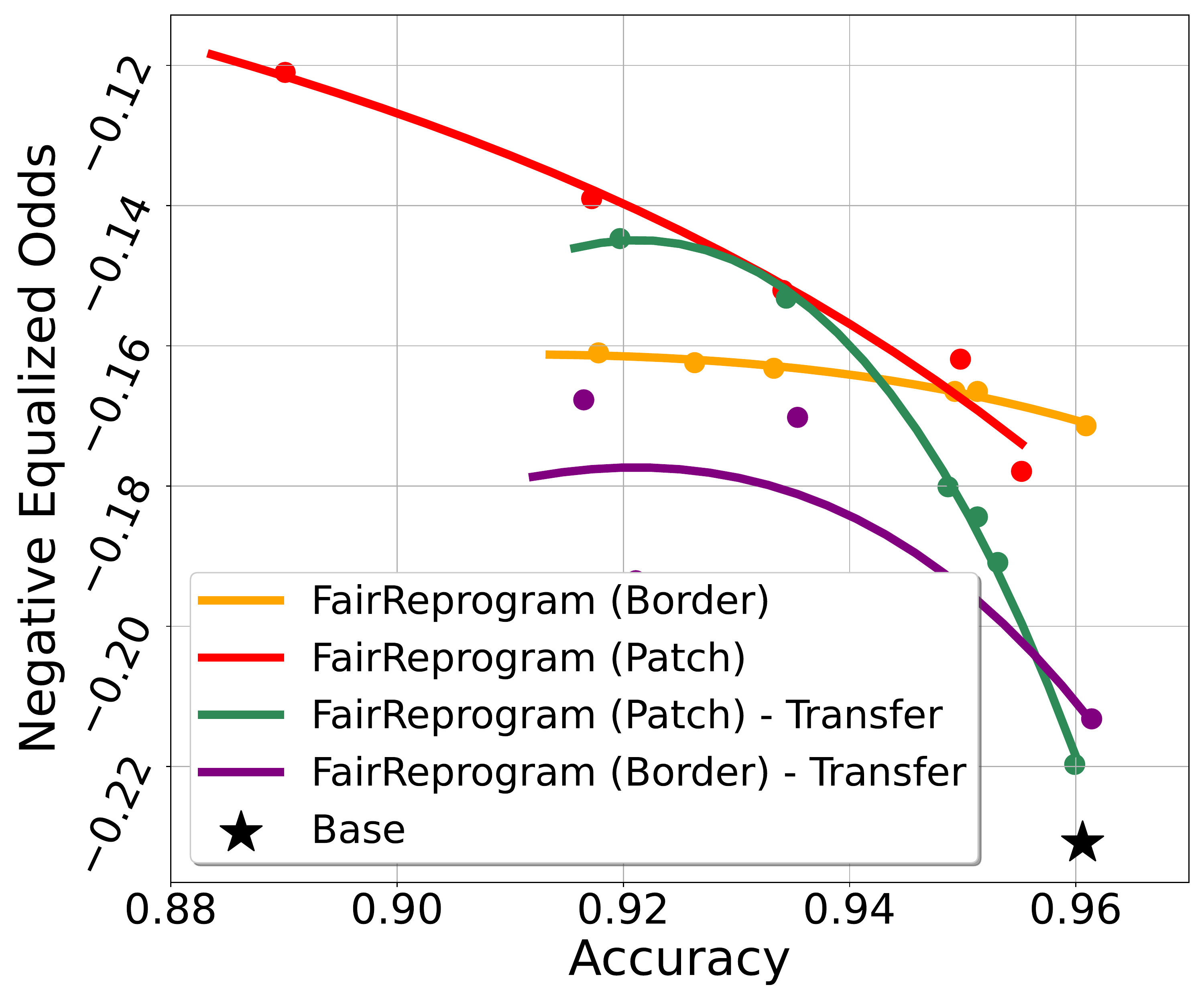} \\
    \multicolumn{2}{c}{\footnotesize{(a) \texttt{Civil Comments}}} & \multicolumn{2}{c}{\footnotesize{(b) \texttt{CelebA}}}
\end{tabular}}
\caption{\footnotesize{ 
Results 
in the transfer setting. We report negative DP (left) and negative EO (right) scores. 
The triggers are firstly trained in a \textsc{Base} model.
Then we evaluate the triggers based on another unseen \textsc{Base} model.
We change the parameter $\lambda$ to trade-off accuracy with fairness and draw the curves in the same way with Fig.~\ref{fig: exp_overview}.
The \FiveStar\,  point corresponds to the average of all \textsc{Base} models with different random seeds.
}}
\label{fig: transfer}
\end{figure}

\begin{wrapfigure}{r}{80mm}
    \vspace*{-6mm}
    \centering
    \includegraphics[width=\linewidth]{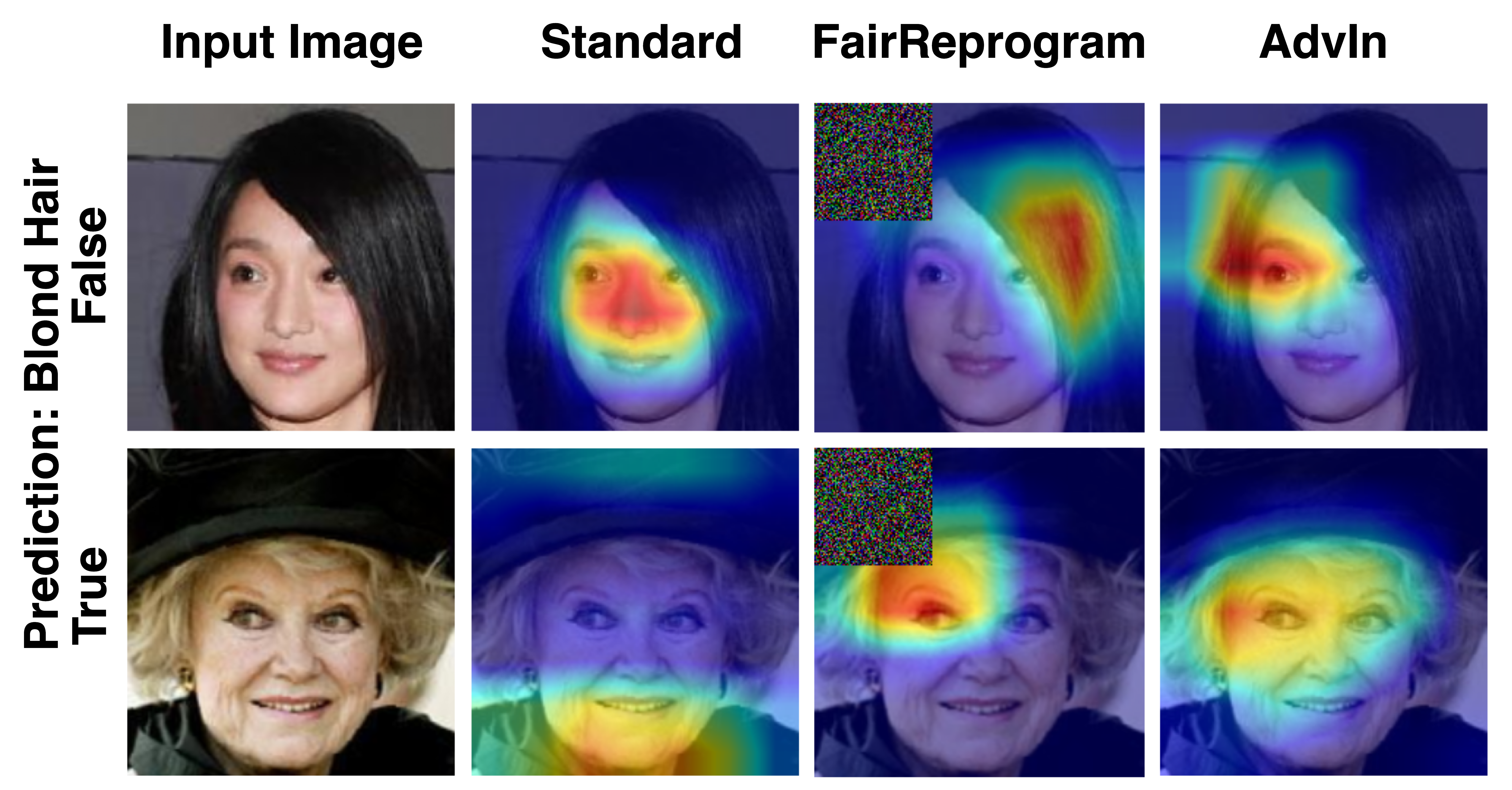}
    \vspace*{-4.5mm}
    \caption{\footnotesize{Gradient-based saliency map visualized with \textsc{Grad Cam}~\cite{selvaraju2017grad} of different methods. 
    The highlighted zones (marked in red) depicting regions exerting major influence on the predicted labels (non-blond hair v.s. blond hair) in each row, which also depict the attention of the model on the input image.
    }}
    \label{fig: vis_cv}
    \vspace*{-4mm}
\end{wrapfigure}

Next, we show the transferability of the fairness triggers found by {\ours}.
We first tune the triggers with a \textsc{Base} source model and then apply the trigger on a target model trained with a different random seed.
The results are shown in Fig.~\ref{fig: transfer}.
As can be seen, {\ours} achieves comparable fairness-accuracy trigger on both the source model and the target model, indicating our method has a good transferability.
This intriguing property brings two benefits of our method: 
\ding{172} if ML model parameters are infeasible (\emph{e.g.} when ML models are provided as services), the users could train a surrogate model and tune the trigger based on it to promote fairness of the original model; 
\ding{173} when ML model parameters are updated with new data (\emph{e.g.} online learning), the user could still use the original trigger for fixing fairness problems.
We further elaborate the results of {\ours} for transferring to different tasks and model architectures in Appendix~\ref{app:transfer}.

\paragraph{Input saliency attribution.}
Fig.~\ref{fig: vis_nlp} and \ref{fig: vis_cv} compare the saliency maps of some example inputs with and without the fairness triggers. Specifically, 
For the NLP applications, we extract a subset of \texttt{Civil Comments} with religion-related demographic annotations,
and apply IG~\cite{Sundararajan2017AxiomaticAF} to localize word pieces that contribute most to the text toxicity classification.  For the CV application, we use GradCam~\cite{selvaraju2017grad} to identify class-discriminative regions of {\texttt{CelebA}}'s  test images.
As shown in Fig.~\ref{fig: vis_nlp}, our fairness trigger consists of a lot of religion-related words (\textit{e.g.}, diocesan, hebrew, parish). Meanwhile, the predicted toxicity score of the benign text starting from `muslims' significantly reduces. These observations verify our theoretical hypothesis that the fairness trigger is strongly indicative of a certain demographic group to prevent the classifier from using the true demographic information.
In addition, Fig.~\ref{fig: vis_cv} presents the input saliency maps on two input images with respect to their predicted labels, non-blond hair and blond hair, respectively. As can be observed, when there is no fairness trigger, the saliency region incorrectly concentrates on the facial parts, indicating the classifier is likely to use biased information, such as gender, for its decision. With the fairness trigger, the saliency region moves to the hair parts, which matches the behavior of \textsc{AdvIn}. These results confirm that our fairness trigger can drive models to make fairer predictions.

\begin{wraptable}{r}{0.5\textwidth}
\vspace*{-4mm}
\centering
\caption{\footnotesize{Predictions of the demographic classifier on a null input with triggers generated by different $\lambda$. The demographic prediction for CV triggers indicate the predicted score for \emph{Male} and \emph{Female}, and it is \emph{Christian}, \emph{Muslim} and \emph{Other religion} for NLP.
}}
\label{tab: demo_classifier}
\resizebox{0.5\textwidth}{!}{
\begin{tabular}{cc}
\toprule[1pt]
\midrule
Trigger & Demographic Prediction\\
\midrule
\includegraphics[width=0.05\textwidth]{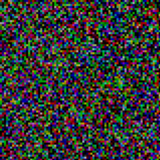} & 0.85, 0.15\\
\includegraphics[width=0.05\textwidth]{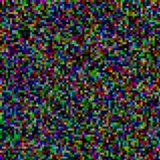} & 0.92, 0.08\\
\includegraphics[width=0.05\textwidth]{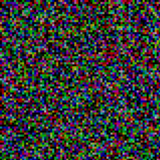} & 0.80, 0.20\\
\midrule
same diocesan bula rev proceedings & 0.96, 0.11, 0.02 \\
soto cc rib hebrew armenian & 0.51, 0.08, 0.81 \\
paul long course parish body & 0.98, 0.04, 0.03 \\
\midrule
\bottomrule[1pt]
\end{tabular}}
\end{wraptable}

\begin{table*}[t]
	\small
	\begin{tabular}{p{0.77\textwidth}c}
        \emph{Text (Non-toxic)} & \emph{Predicted Toxicity} \\
		\arrayrulecolor{grey}  
		\midrule
        \hlg{\ul{\textbf{muslims}}}{20} \hlr{need}{0} \hlr{to}{0} \hlr{take}{0} \hlr{a}{0} \hlr{look}{0} \hlr{in}{0} \hlg{\ul{\textbf{the}}}{15} \hlr{\ul{\textbf{mirror}}}{20} & 0.149 \\
        \hlg{\ul{\textbf{muslims}}}{20} \hlr{need}{0} \hlr{to}{0} \hlr{take}{0} \hlr{a}{0} \hlr{look}{0} \hlr{in}{0} \hlr{the}{0} \hlr{mirror}{0} \,\,\,\,
        \hlr{same}{0} \hlr{\ul{\textbf{diocesan}}}{15} \hlg{\ul{\textbf{bula}}}{20} \hlr{\ul{\textbf{rev}}}{8} \hlr{\ul{\textbf{proceedings}}}{8} & 0.069 \\
        \hlg{\ul{\textbf{muslims}}}{20} \hlr{need}{0} \hlr{to}{0} \hlr{take}{0} \hlr{a}{0} \hlr{look}{0} \hlr{in}{0} \hlr{the}{0} \hlr{mirror}{0}  \,\,\,\,
        \hlr{soto}{0} \hlr{\ul{\textbf{cc}}}{10} \hlr{rib}{0} \hlr{\ul{\textbf{hebrew}}}{10} \hlr{armenian}{0} & 0.054 \\
        \hlg{\ul{\textbf{muslims}}}{20} \hlr{need}{0} \hlr{to}{0} \hlr{take}{0} \hlr{a}{0} \hlr{look}{0} \hlr{in}{0} \hlr{the}{0} \hlg{\ul{mirror}}{0}  \,\,\,\,
        \hlr{\ul{\textbf{paul}}}{10} \hlr{long}{0} \hlr{course}{0} \hlr{\ul{\textbf{parish}}}{10} \hlr{\ul{\textbf{body}}}{10} & 0.073 \\
	\end{tabular}
    \captionof{figure}{\small{
    A text example from \texttt{Civil Comments} with \textsc{Integrated Gradient}~\cite{Sundararajan2017AxiomaticAF,kokhlikyan2020captum}  highlighting important words that influence \textsc{ERM} model predictions.
    The text is concatenated with three triggers generated with different adversary weight.
    \hlg{\ul{\textbf{Green highlights}}}{20} the words that lean to toxic predictions and 
    \hlr{\ul{\textbf{Red highlights}}}{20} non-toxic leaning words.
    The model prediction tends to be correct after adding the triggers.
    }}
    \label{fig: vis_nlp}
\end{table*}

To further verify that the triggers encode demographic information, we trained a demographic classifier to predict the demographics from the input (texts or images) without triggers.
The obtained demographic classifiers can accurately identify the demographics contained in the inputs and achieve over 0.99 AUC for identifying demographics in the validation datasets.
Then, we use the demographic classifier to predict the demographic information of a null image/text\footnote{We use an empty string as the null text and an all-black image as the null image.} with the trigger.
Specifically, we select three triggers generated with different $\lambda$ values for both two datasets.
The results\footnote{One text could contain multiple religions so the probabilities do not sum to one for NLP triggers.} can be seen in Table~\ref{tab: demo_classifier}.
We see that the demographic classifier gives confident outputs on the triggers. 
For example, we see that the trigger \emph{paul long course parish body} is classified as containing \emph{christian} with 0.98 confidence, indicating that the found triggers are highly indicative of demographics.
This is consistent with our perspective in Section~\ref{sec:perspective} that the fairness triggers are encoding fake demographic information to obscure ML models from making biased predictions.

\subsection{Multi-Class Classification}

\begin{figure}[!t]
\centerline{
\begin{tabular}{cccc}
    \hspace*{-2mm} \includegraphics[width=.25\textwidth,height=!]{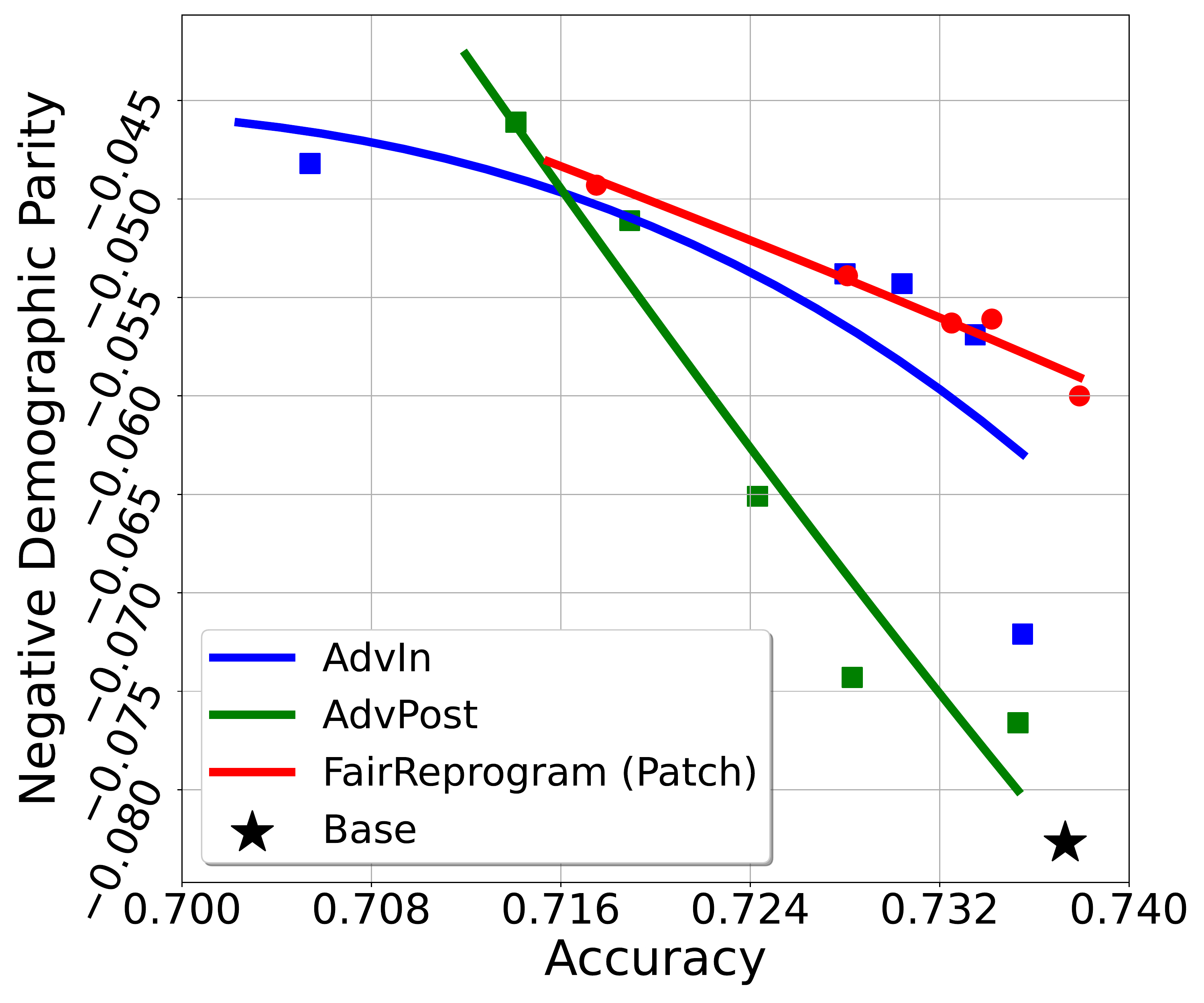} &
    \hspace*{-4mm}  \includegraphics[width=.25\textwidth,height=!]{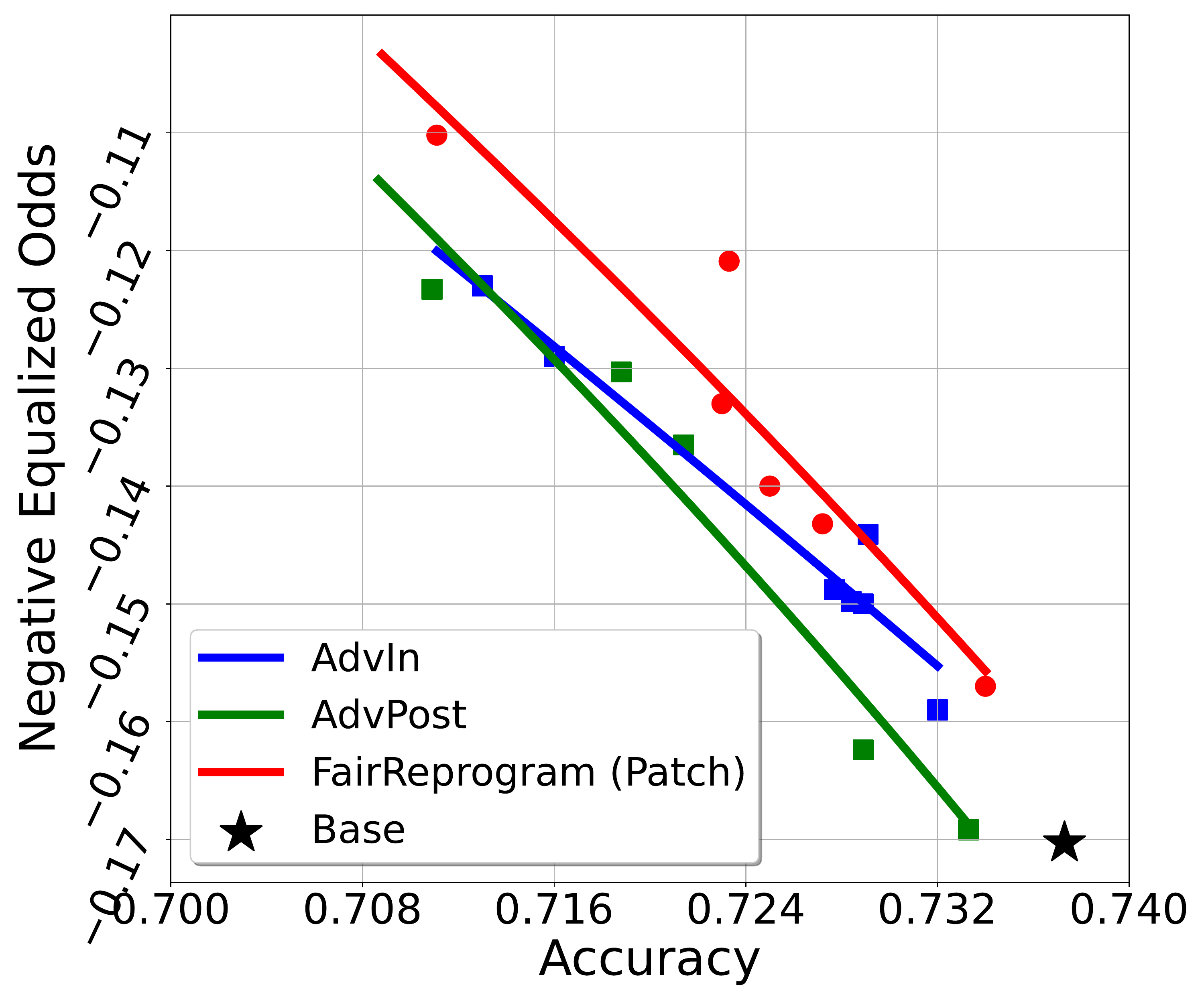} &
    \hspace*{-4mm}  \includegraphics[width=.25\textwidth,height=!]{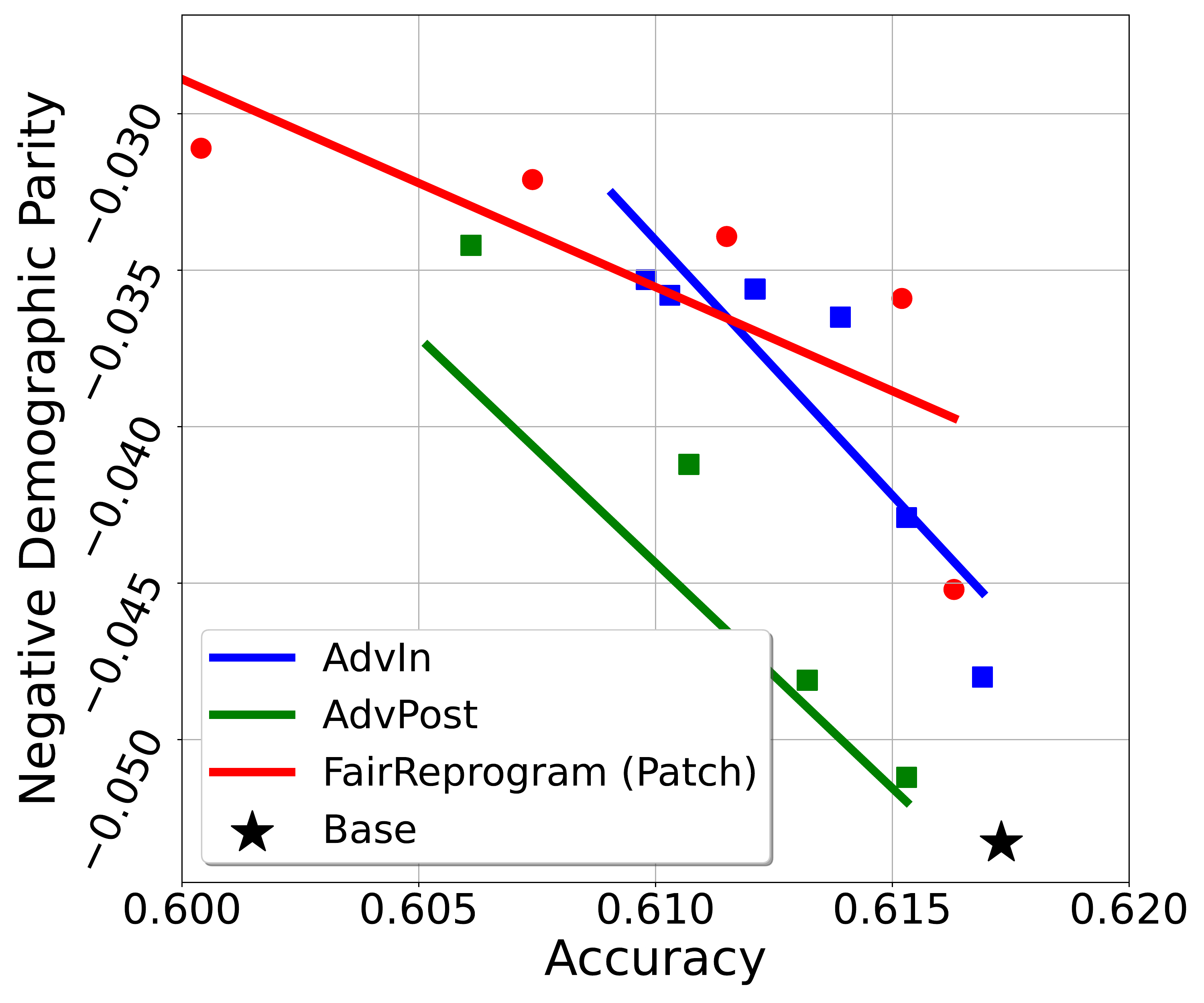} &
    \hspace*{-4mm} \includegraphics[width=.25\textwidth,height=!]{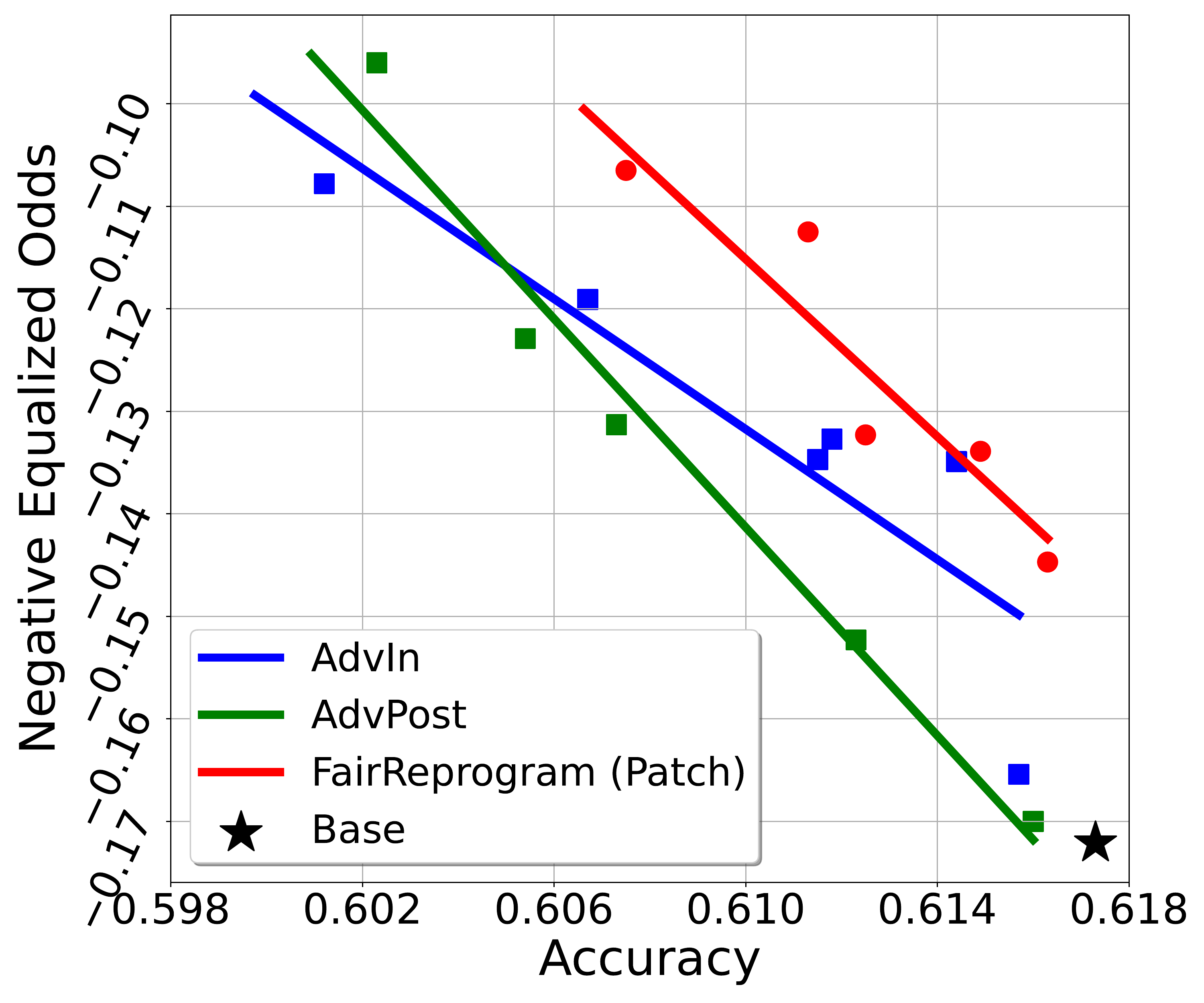} \\
    \footnotesize{(a) 8-class negative DP} &   \footnotesize{(b) 8-class negative EO} & \footnotesize{(c) 16-class negative DP} &   \footnotesize{(d) 16-class negative EO}
\end{tabular}}
\caption{\footnotesize{Performance of multi-class classification.  For (a) and (b), we use the attributes \textit{Blond Hair, Smiling, Attractive} for multi-class construction.  We add an addition attribute \textit{Wavy Hair} for (c) and (d).}}
\label{fig: multi-class}
\end{figure}
To extend our evaluation to a multi-class setting,  we use the \texttt{CelebA} dataset and select $n$ binary attributes that may be spuriously correlated with \emph{gender} \cite{xu2020investigating, dash2020counterfactual, hwang2020fairfacegan}.    Then, following \cite{zhuang2018multi}, we construct data groups by enumerating all $2^n$ possible binary vectors, where each dimension corresponds to a binary attribute.  We index these vectors and treat them as the class labels.  Fig.~\ref{fig: multi-class} shows the accuracy-fairness trade-off curves similar to Fig.~\ref{fig: exp_overview}. It can be observed that our method outperforms the other methods as the \textcolor{red}{red} curves are closer to the top-right corner.   Also, as the class label number increases, the post-processing-based {\advp} falls behind its in-processing counterpart {\adv}, indicating a larger class number may induce more challenges to post-processing methods.

\subsection{Ablation Studies}
\label{sec: ablation_study}

\begin{figure}[!t]
\centerline{
\begin{tabular}{cccc}
    \hspace*{-2mm} \includegraphics[width=.25\textwidth,height=!]{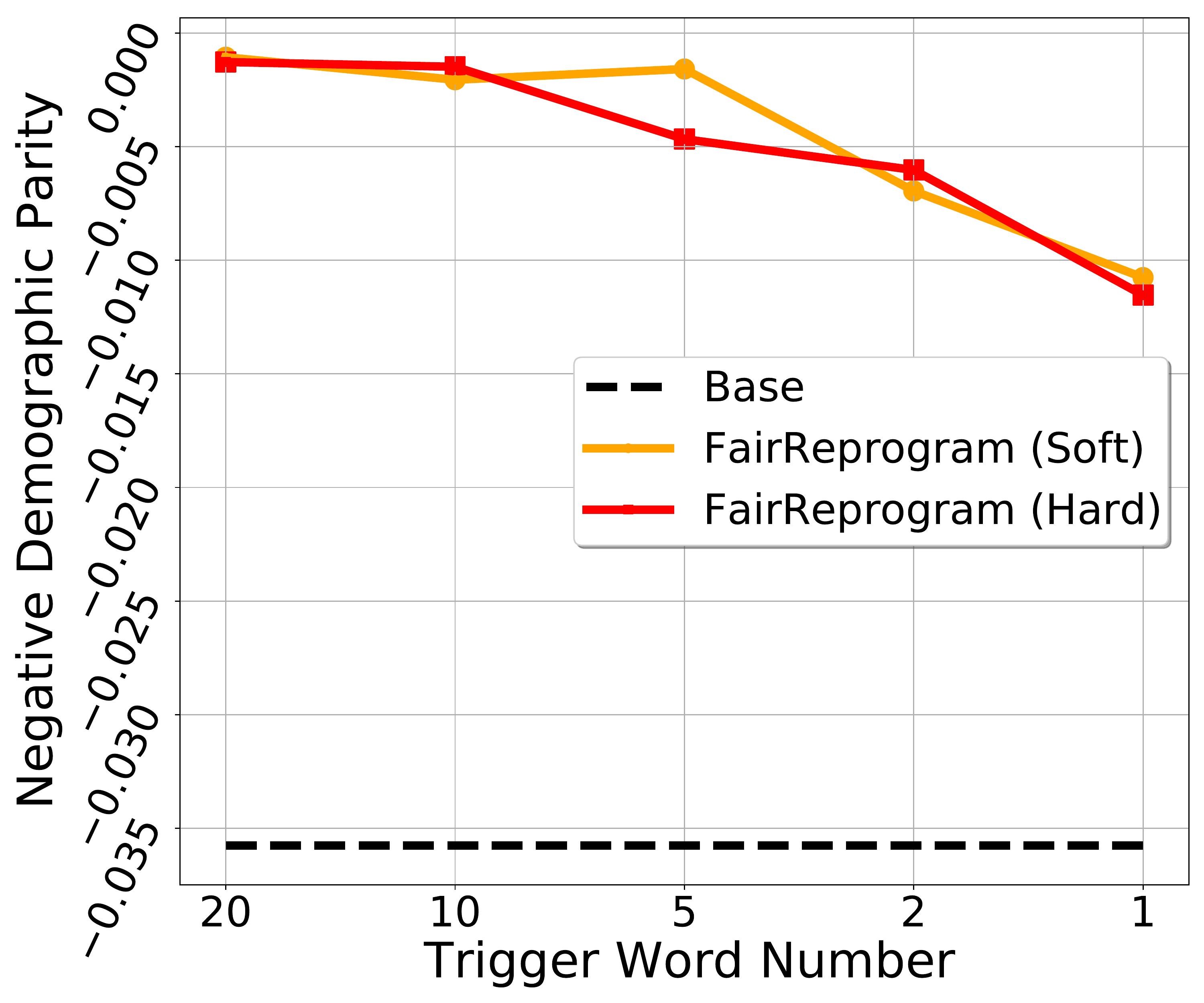} &
    \hspace*{-4mm}  \includegraphics[width=.25\textwidth,height=!]{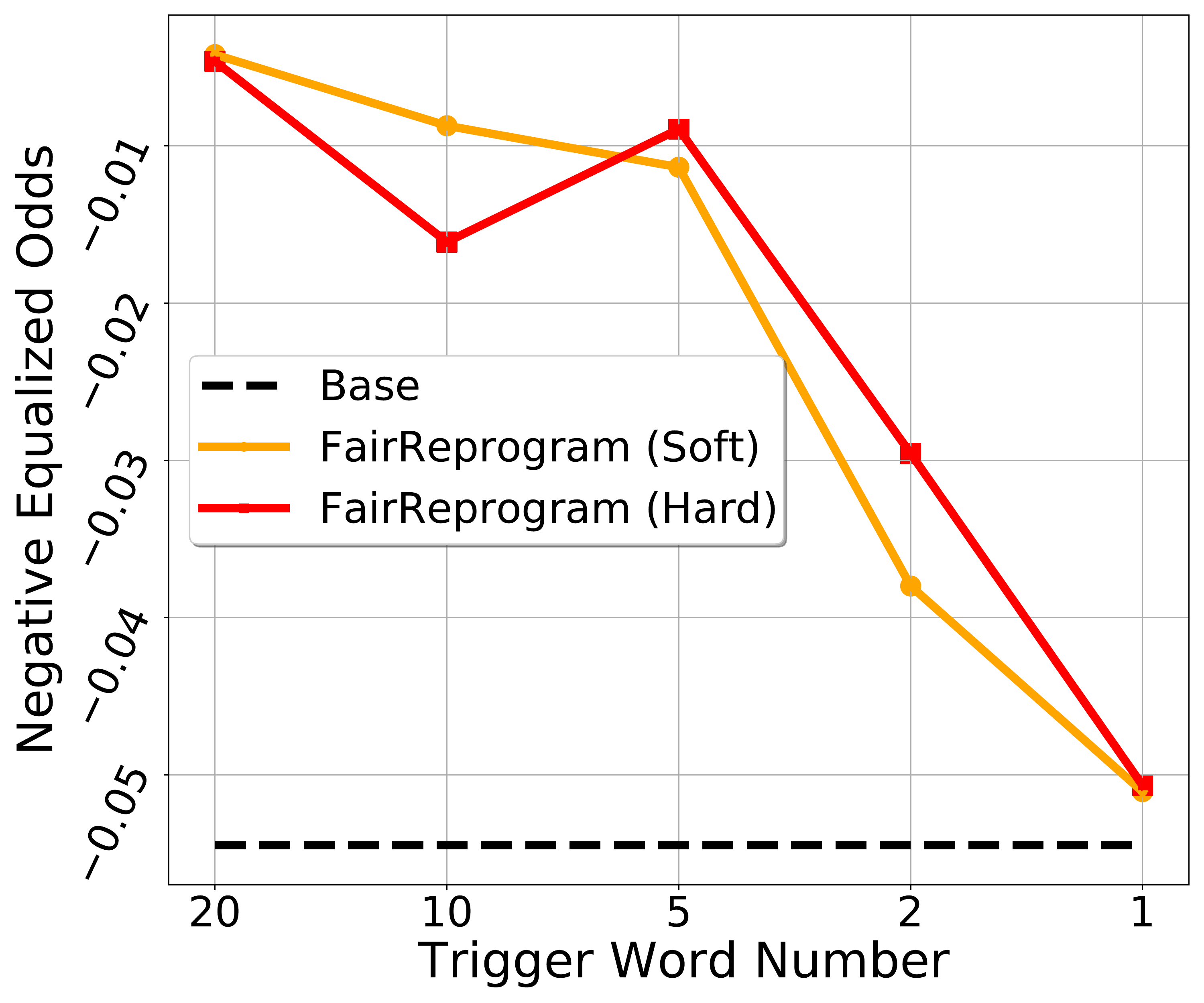} &
    \hspace*{-4mm}  \includegraphics[width=.25\textwidth,height=!]{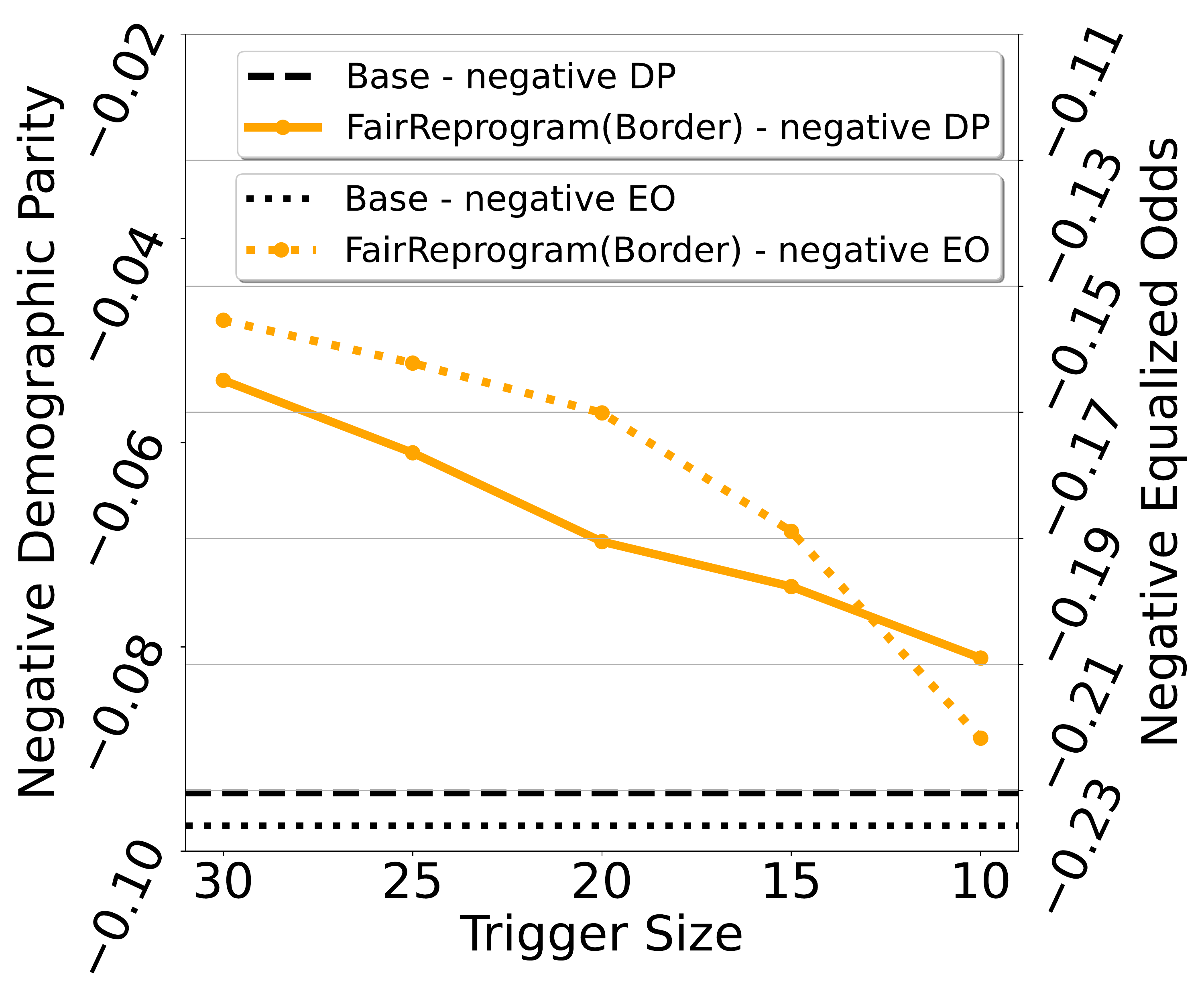} &
    \hspace*{-4mm} \includegraphics[width=.25\textwidth,height=!]{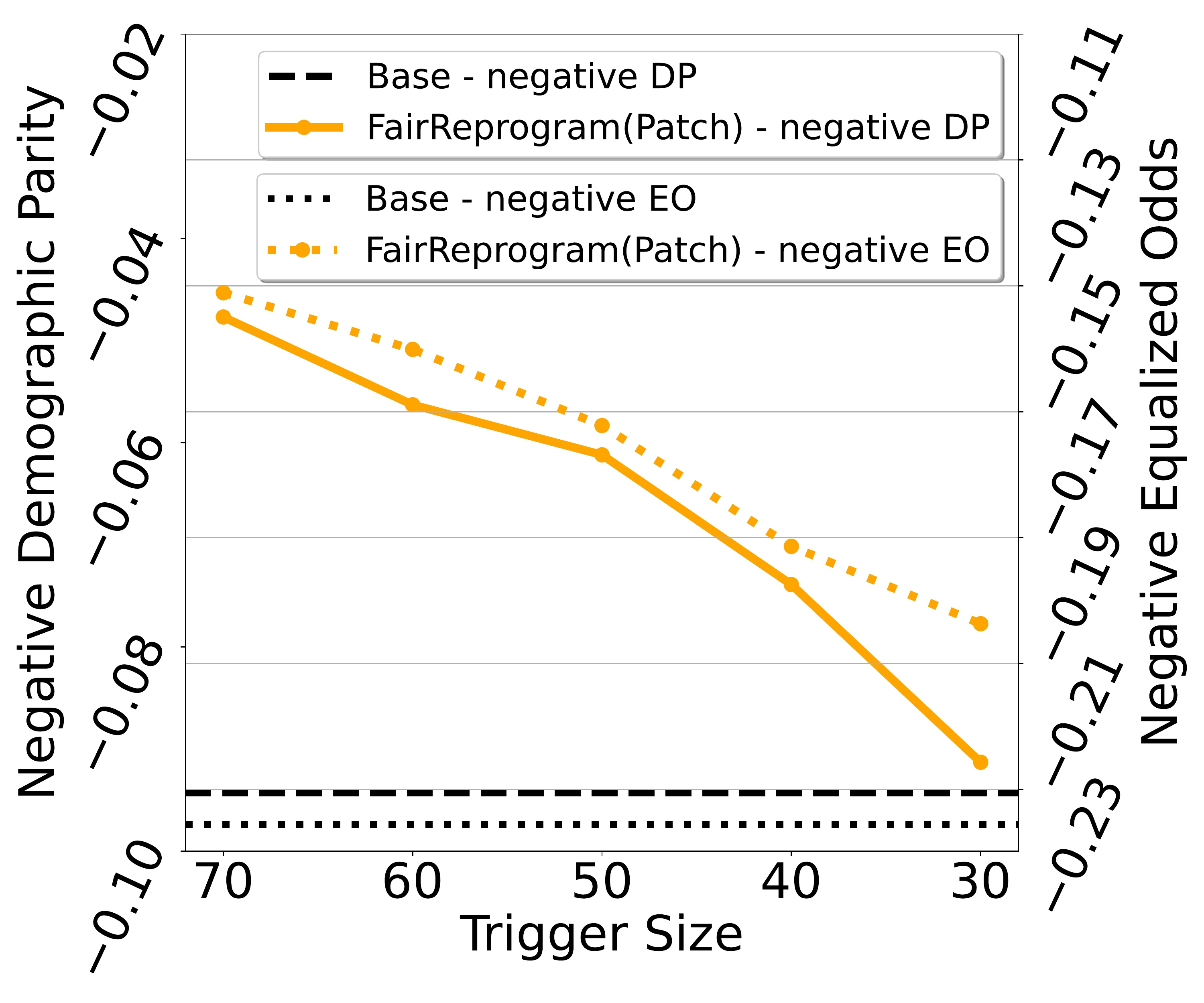} \\
    \multicolumn{2}{c}{\footnotesize{(a) \texttt{Civil Comments}}} & \multicolumn{2}{c}{\footnotesize{(b) \texttt{CelebA}}}
\end{tabular}}
\vspace*{-1mm}
\caption{\footnotesize{Ablation study of the trigger size.
We evaluate the bias scores with different trigger word numbers (\texttt{Civil Comments}) and different trigger size (\texttt{CelebA}) with fixed adversary weight $\lambda$.
}}

\label{fig: size_study}
\end{figure}

We perform an ablation study to investigate the effects of the trigger size.
Specifically, we run experiments with different numbers of trigger words / trigger patch sizes on the NLP / CV dataset.
We set a $\lambda$ value for each method such that all methods achieve comparable bias scores with the largest trigger size.
The detailed $\lambda$ choices can be seen in Appendix~\ref{app: training_detail}.
Then we train the triggers with different sizes in the tuning set using the fixed $\lambda$'s.  For the text trigger as shown in Fig.~\ref{fig: size_study}(a), we see that the negative bias score gets worse as the number of trigger words gets smaller.
However, our method can still improve fairness upon the \textsc{Base} model even with only a one-word trigger.
On the other hand, the results with five trigger words and above are all comparable, indicating that five words is enough to achieve the fairness goal.
Similarly, for the image trigger as shown in Fig.~\ref{fig: size_study}(b), the results suggest a larger trigger would consistently improve fairness.
On the other hand, we show that larger trigger size could hurt accuracy in Appendix~\ref{app: additional_results}, which is similar to the effect of increasing $\lambda$. 

\CR{\subsection{Summary of Additional Results}
We compare our proposed {\ours} with four additional baselines,
and we show the full results with variance in Tab.~\ref{tab: overview}. 
We further compare our method with MMD methods where $\mathcal{L}_{fair}$ in Eq.~\eqref{eq:loss} is replaced with Maximum Mean Discrepancy regularization~\cite{Louizos2016THEVF} to partial out the instability of adversarial training, and the results are shown in Fig.~\ref{fig: rebuttal_mmd}.
We also implement the fairness reprogramming in the black-box setting on \texttt{CelebA} dataset, where the model parameters are not available for training the reprogram, and the results are shown in Fig.~\ref{fig: blackbox}. 
Besides, we show that {\ours} could also be used in tabular data, and the corresponding experiment results on the \texttt{Adult} dataset are shown in Fig.~\ref{fig: rebuttal_tabular}.
}

\section{Conclusion} 
In this paper, we introduce a novel model reprogramming based fairness promoting method, termed {\ours}.
Specifically, {\ours} considers a fixed ML model and optimizes a set of vectors, named fairness trigger, concatenated on inputs to boost model fairness.
We introduce an information-theoretic framework to explain the rationales of why {\ours} can improve model fairness.  
As implied by our theoretic framework as well as our empirical findings, the fairness trigger can effectively mask out the true demographic information with its strong, false demographic information. 
Extensive experiments demonstrate that our method could achieve better fairness improvements to retraining based methods with far-less training cost.
We further empirically show fairness triggers enjoys great transferability and interpretability.
We hope that {\ours} can inspire new fairness learning paradigms that are more feasible and flexible in practice.

\newpage

{{
\bibliographystyle{IEEEbib}

\bibliography{ref}

\begin{thebibliography}{10}

\bibitem{Dixon2018MeasuringAM}
Lucas Dixon, John Li, Jeffrey~Scott Sorensen, Nithum Thain, and Lucy Vasserman,
\newblock ``Measuring and mitigating unintended bias in text classification,''
\newblock {\em Proceedings of the 2018 AAAI/ACM Conference on AI, Ethics, and
  Society}, 2018.

\bibitem{Park2018ReducingGB}
J.~Park, Jamin Shin, and Pascale Fung,
\newblock ``Reducing gender bias in abusive language detection,''
\newblock in {\em EMNLP}, 2018.

\bibitem{Zafar2017FairnessCM}
Muhammad~Bilal Zafar, Isabel Valera, Manuel Gomez-Rodriguez, and Krishna~P.
  Gummadi,
\newblock ``Fairness constraints: Mechanisms for fair classification,''
\newblock in {\em AISTATS}, 2017.

\bibitem{Agarwal2018ARA}
Alekh Agarwal, Alina Beygelzimer, Miroslav Dud{\'i}k, John Langford, and
  Hanna~M. Wallach,
\newblock ``A reductions approach to fair classification,''
\newblock {\em ArXiv}, vol. abs/1803.02453, 2018.

\bibitem{Kamishima2012FairnessAwareCW}
Toshihiro Kamishima, Shotaro Akaho, Hideki Asoh, and Jun Sakuma,
\newblock ``Fairness-aware classifier with prejudice remover regularizer,''
\newblock in {\em ECML/PKDD}, 2012.

\bibitem{Baharlouei2019RenyiFI}
Sina Baharlouei, Maher Nouiehed, and Meisam Razaviyayn,
\newblock ``R\'enyi fair inference,''
\newblock {\em arXiv: Learning}, 2019.

\bibitem{PrezSuay2017FairKL}
Adri{\'a}n P{\'e}rez-Suay, Valero Laparra, Gonzalo Mateo-Garc{\'i}a, Jordi
  Mu{\~n}oz-Mar{\'i}, Luis G{\'o}mez-Chova, and Gustau Camps‐Valls,
\newblock ``Fair kernel learning,''
\newblock in {\em ECML/PKDD}, 2017.

\bibitem{elsayed2018adversarial}
Gamaleldin~F Elsayed, Ian Goodfellow, and Jascha Sohl-Dickstein,
\newblock ``Adversarial reprogramming of neural networks,''
\newblock {\em arXiv preprint arXiv:1806.11146}, 2018.

\bibitem{tsai2020transfer}
Yun-Yun Tsai, Pin-Yu Chen, and Tsung-Yi Ho,
\newblock ``Transfer learning without knowing: Reprogramming black-box machine
  learning models with scarce data and limited resources,''
\newblock in {\em International Conference on Machine Learning}. PMLR, 2020,
  pp. 9614--9624.

\bibitem{hambardzumyan2021warp}
Karen Hambardzumyan, Hrant Khachatrian, and Jonathan May,
\newblock ``Warp: Word-level adversarial reprogramming,''
\newblock {\em arXiv preprint arXiv:2101.00121}, 2021.

\bibitem{Ding2022OpenPromptAO}
Ning Ding, Shengding Hu, Weilin Zhao, Yulin Chen, Zhiyuan Liu, Haitao Zheng,
  and Maosong Sun,
\newblock ``Openprompt: An open-source framework for prompt-learning,''
\newblock {\em ArXiv}, vol. abs/2111.01998, 2022.

\bibitem{Liu2021PretrainPA}
Pengfei Liu, Weizhe Yuan, Jinlan Fu, Zhengbao Jiang, Hiroaki Hayashi, and
  Graham Neubig,
\newblock ``Pre-train, prompt, and predict: A systematic survey of prompting
  methods in natural language processing,''
\newblock {\em ArXiv}, vol. abs/2107.13586, 2021.

\bibitem{Holstein2019ImprovingFI}
Kenneth Holstein, Jennifer~Wortman Vaughan, Hal Daum{\'e}, Miroslav Dud{\'i}k,
  and H.~Wallach,
\newblock ``Improving fairness in machine learning systems: What do industry
  practitioners need?,''
\newblock {\em Proceedings of the 2019 CHI Conference on Human Factors in
  Computing Systems}, 2019.

\bibitem{Chouldechova2018TheFO}
A.~Chouldechova and Aaron Roth,
\newblock ``The frontiers of fairness in machine learning,''
\newblock {\em ArXiv}, vol. abs/1810.08810, 2018.

\bibitem{Sun2019MitigatingGB}
Tony Sun, Andrew Gaut, Shirlyn Tang, Yuxin Huang, Mai ElSherief, Jieyu Zhao,
  Diba Mirza, Elizabeth~M. Belding-Royer, Kai-Wei Chang, and William~Yang Wang,
\newblock ``Mitigating gender bias in natural language processing: Literature
  review,''
\newblock in {\em ACL}, 2019.

\bibitem{Mehrabi2021ASO}
Ninareh Mehrabi, Fred Morstatter, Nripsuta~Ani Saxena, Kristina Lerman, and
  A.~G. Galstyan,
\newblock ``A survey on bias and fairness in machine learning,''
\newblock {\em ACM Computing Surveys (CSUR)}, vol. 54, pp. 1 -- 35, 2021.

\bibitem{Field2021ASO}
Anjalie Field, Su~Lin Blodgett, Zeerak Waseem, and Yulia Tsvetkov,
\newblock ``A survey of race, racism, and anti-racism in nlp,''
\newblock {\em ArXiv}, vol. abs/2106.11410, 2021.

\bibitem{Dwork2012FairnessTA}
Cynthia Dwork, Moritz Hardt, Toniann Pitassi, Omer Reingold, and Richard~S.
  Zemel,
\newblock ``Fairness through awareness,''
\newblock {\em ArXiv}, vol. abs/1104.3913, 2012.

\bibitem{Makhlouf2020SurveyOC}
Karima Makhlouf, Sami Zhioua, and Catuscia Palamidessi,
\newblock ``Survey on causal-based machine learning fairness notions,''
\newblock {\em ArXiv}, vol. abs/2010.09553, 2020.

\bibitem{Hashimoto2018FairnessWD}
T.~Hashimoto, Megha Srivastava, Hongseok Namkoong, and Percy Liang,
\newblock ``Fairness without demographics in repeated loss minimization,''
\newblock in {\em ICML}, 2018.

\bibitem{Calders2010ThreeNB}
Toon Calders and Sicco Verwer,
\newblock ``Three naive bayes approaches for discrimination-free
  classification,''
\newblock {\em Data Mining and Knowledge Discovery}, vol. 21, pp. 277--292,
  2010.

\bibitem{Hardt2016EqualityOO}
Moritz Hardt, Eric Price, and Nathan Srebro,
\newblock ``Equality of opportunity in supervised learning,''
\newblock in {\em NIPS}, 2016.

\bibitem{Rz2021GroupFI}
Tim R{\"a}z,
\newblock ``Group fairness: Independence revisited,''
\newblock {\em Proceedings of the 2021 ACM Conference on Fairness,
  Accountability, and Transparency}, 2021.

\bibitem{Caton2020FairnessIM}
Simon Caton and Christian Haas,
\newblock ``Fairness in machine learning: A survey,''
\newblock {\em ArXiv}, vol. abs/2010.04053, 2020.

\bibitem{Kamiran2011DataPT}
Faisal Kamiran and Toon Calders,
\newblock ``Data preprocessing techniques for classification without
  discrimination,''
\newblock {\em Knowledge and Information Systems}, vol. 33, pp. 1--33, 2011.

\bibitem{Zemel2013LearningFR}
Richard~S. Zemel, Ledell~Yu Wu, Kevin Swersky, Toniann Pitassi, and Cynthia
  Dwork,
\newblock ``Learning fair representations,''
\newblock in {\em ICML}, 2013.

\bibitem{Feldman2015CertifyingAR}
Michael Feldman, Sorelle~A. Friedler, John Moeller, Carlos~Eduardo Scheidegger,
  and Suresh Venkatasubramanian,
\newblock ``Certifying and removing disparate impact,''
\newblock {\em Proceedings of the 21th ACM SIGKDD International Conference on
  Knowledge Discovery and Data Mining}, 2015.

\bibitem{Calmon2017OptimizedPF}
Fl{\'a}vio du~Pin~Calmon, Dennis Wei, Bhanukiran Vinzamuri, Karthikeyan~Natesan
  Ramamurthy, and Kush~R. Varshney,
\newblock ``Optimized pre-processing for discrimination prevention,''
\newblock in {\em NIPS}, 2017.

\bibitem{Grover2020FairGM}
Aditya Grover, Kristy Choi, Rui Shu, and Stefano Ermon,
\newblock ``Fair generative modeling via weak supervision,''
\newblock in {\em ICML}, 2020.

\bibitem{Zhang2020DemographicsSN}
Guanhua Zhang, Bing Bai, Junqi Zhang, Kun Bai, Conghui Zhu, and T.~Zhao,
\newblock ``Demographics should not be the reason of toxicity: Mitigating
  discrimination in text classifications with instance weighting,''
\newblock in {\em ACL}, 2020.

\bibitem{Roh2020FRTrainAM}
Yuji Roh, Kangwook Lee, Steven~Euijong Whang, and Changho Suh,
\newblock ``Fr-train: A mutual information-based approach to fair and robust
  training,''
\newblock in {\em ICML}, 2020.

\bibitem{Zhang2019FAHTAA}
Wenbin Zhang and Eirini Ntoutsi,
\newblock ``Faht: An adaptive fairness-aware decision tree classifier,''
\newblock in {\em IJCAI}, 2019.

\bibitem{Zhang2018MitigatingUB}
B.~Zhang, Blake Lemoine, and Margaret Mitchell,
\newblock ``Mitigating unwanted biases with adversarial learning,''
\newblock {\em Proceedings of the 2018 AAAI/ACM Conference on AI, Ethics, and
  Society}, 2018.

\bibitem{Petersen2021PostprocessingFI}
Felix Petersen, Debarghya Mukherjee, Yuekai Sun, and Mikhail Yurochkin,
\newblock ``Post-processing for individual fairness,''
\newblock in {\em NeurIPS}, 2021.

\bibitem{Wei2020OptimizedST}
Dennis Wei, Karthikeyan~Natesan Ramamurthy, and Fl{\'a}vio du~Pin~Calmon,
\newblock ``Optimized score transformation for fair classification,''
\newblock in {\em AISTATS}, 2020.

\bibitem{Awasthi2020EqualizedOP}
Pranjal Awasthi, Matth{\"a}us Kleindessner, and Jamie~H. Morgenstern,
\newblock ``Equalized odds postprocessing under imperfect group information,''
\newblock in {\em AISTATS}, 2020.

\bibitem{Woodworth2017LearningNP}
Blake~E. Woodworth, Suriya Gunasekar, Mesrob~I. Ohannessian, and Nathan Srebro,
\newblock ``Learning non-discriminatory predictors,''
\newblock {\em ArXiv}, vol. abs/1702.06081, 2017.

\bibitem{Mishler2021FairnessIR}
Alan Mishler and Edward~H. Kennedy,
\newblock ``Fairness in risk assessment instruments: Post-processing to achieve
  counterfactual equalized odds,''
\newblock {\em Proceedings of the 2021 ACM Conference on Fairness,
  Accountability, and Transparency}, 2021.

\bibitem{Zhao2017MenAL}
Jieyu Zhao, Tianlu Wang, Mark Yatskar, Vicente Ordonez, and Kai-Wei Chang,
\newblock ``Men also like shopping: Reducing gender bias amplification using
  corpus-level constraints,''
\newblock in {\em EMNLP}, 2017.

\bibitem{Kim2019MultiaccuracyBP}
Michael~P. Kim, Amirata Ghorbani, and James~Y. Zou,
\newblock ``Multiaccuracy: Black-box post-processing for fairness in
  classification,''
\newblock {\em Proceedings of the 2019 AAAI/ACM Conference on AI, Ethics, and
  Society}, 2019.

\bibitem{Lohia2019BiasMP}
Pranay~Kr. Lohia, Karthikeyan~Natesan Ramamurthy, Manish Bhide, Diptikalyan
  Saha, Kush~R. Varshney, and Ruchir Puri,
\newblock ``Bias mitigation post-processing for individual and group
  fairness,''
\newblock {\em ICASSP 2019 - 2019 IEEE International Conference on Acoustics,
  Speech and Signal Processing (ICASSP)}, pp. 2847--2851, 2019.

\bibitem{Lohia2021PrioritybasedPB}
Pranay~Kr. Lohia,
\newblock ``Priority-based post-processing bias mitigation for individual and
  group fairness,''
\newblock {\em ArXiv}, vol. abs/2102.00417, 2021.

\bibitem{Dwork2018DecoupledCF}
Cynthia Dwork, Nicole Immorlica, Adam~Tauman Kalai, and Mark D.~M. Leiserson,
\newblock ``Decoupled classifiers for group-fair and efficient machine
  learning,''
\newblock in {\em FAT}, 2018.

\bibitem{Chzhen2019LeveragingLA}
Evgenii Chzhen, Christophe Denis, Mohamed Hebiri, L.~Oneto, and Massimiliano
  Pontil,
\newblock ``Leveraging labeled and unlabeled data for consistent fair binary
  classification,''
\newblock in {\em NeurIPS}, 2019.

\bibitem{bahng2022visual}
Hyojin Bahng, Ali Jahanian, Swami Sankaranarayanan, and Phillip Isola,
\newblock ``Visual prompting: Modifying pixel space to adapt pre-trained
  models,''
\newblock {\em arXiv preprint arXiv:2203.17274}, 2022.

\bibitem{neekhara2018adversarial}
Paarth Neekhara, Shehzeen Hussain, Shlomo Dubnov, and Farinaz Koushanfar,
\newblock ``Adversarial reprogramming of text classification neural networks,''
\newblock {\em arXiv preprint arXiv:1809.01829}, 2018.

\bibitem{neekhara2022cross}
Paarth Neekhara, Shehzeen Hussain, Jinglong Du, Shlomo Dubnov, Farinaz
  Koushanfar, and Julian McAuley,
\newblock ``Cross-modal adversarial reprogramming,''
\newblock in {\em Proceedings of the IEEE/CVF Winter Conference on Applications
  of Computer Vision}, 2022, pp. 2427--2435.

\bibitem{zheng2021adversarial}
Yang Zheng, Xiaoyi Feng, Zhaoqiang Xia, Xiaoyue Jiang, Ambra Demontis, Maura
  Pintor, Battista Biggio, and Fabio Roli,
\newblock ``Why adversarial reprogramming works, when it fails, and how to tell
  the difference,''
\newblock {\em arXiv preprint arXiv:2108.11673}, 2021.

\bibitem{Xu2018FairGANFG}
Depeng Xu, Shuhan Yuan, Lu~Zhang, and Xintao Wu,
\newblock ``Fairgan: Fairness-aware generative adversarial networks,''
\newblock {\em 2018 IEEE International Conference on Big Data (Big Data)}, pp.
  570--575, 2018.

\bibitem{Nobile2022ReprogrammingFW}
Beatrice Nobile, Gabriele Santin, Bruno Lepri, and Pierpaolo Brutti,
\newblock ``Reprogramming fairgans with variational auto-encoders: A new
  transfer learning model,''
\newblock {\em ArXiv}, vol. abs/2203.05811, 2022.

\bibitem{Gao2021MakingPL}
Tianyu Gao, Adam Fisch, and Danqi Chen,
\newblock ``Making pre-trained language models better few-shot learners,''
\newblock {\em ArXiv}, vol. abs/2012.15723, 2021.

\bibitem{Li2021PrefixTuningOC}
Xiang~Lisa Li and Percy Liang,
\newblock ``Prefix-tuning: Optimizing continuous prompts for generation,''
\newblock {\em Proceedings of the 59th Annual Meeting of the Association for
  Computational Linguistics and the 11th International Joint Conference on
  Natural Language Processing (Volume 1: Long Papers)}, vol. abs/2101.00190,
  2021.

\bibitem{Schick2021ItsNJ}
Timo Schick and Hinrich Sch{\"u}tze,
\newblock ``It’s not just size that matters: Small language models are also
  few-shot learners,''
\newblock {\em ArXiv}, vol. abs/2009.07118, 2021.

\bibitem{Shin2020ElicitingKF}
Taylor Shin, Yasaman Razeghi, Robert L~Logan IV, Eric Wallace, and Sameer
  Singh,
\newblock ``Eliciting knowledge from language models using automatically
  generated prompts,''
\newblock {\em ArXiv}, vol. abs/2010.15980, 2020.

\bibitem{Diao2022BlackboxPL}
Shizhe Diao, Xuechun Li, Yong Lin, Zhichao Huang, and Tong Zhang,
\newblock ``Black-box prompt learning for pre-trained language models,''
\newblock {\em ArXiv}, vol. abs/2201.08531, 2022.

\bibitem{liu2015faceattributes}
Ziwei Liu, Ping Luo, Xiaogang Wang, and Xiaoou Tang,
\newblock ``Deep learning face attributes in the wild,''
\newblock in {\em Proceedings of International Conference on Computer Vision
  (ICCV)}, December 2015.

\bibitem{Koh2021WILDSAB}
Pang~Wei Koh, Shiori Sagawa, Henrik Marklund, Sang~Michael Xie, Marvin Zhang,
  Akshay Balsubramani, Wei hua Hu, Michihiro Yasunaga, Richard~L. Phillips,
  Sara Beery, Jure Leskovec, Anshul Kundaje, Emma Pierson, Sergey Levine,
  Chelsea Finn, and Percy Liang,
\newblock ``Wilds: A benchmark of in-the-wild distribution shifts,''
\newblock in {\em ICML}, 2021.

\bibitem{AI2019JigsawUB}
Jigsaw/Conversation AI,
\newblock ``Jigsaw unintended bias in toxicity classification,'' 2019.

\bibitem{xu2020investigating}
Tian Xu, Jennifer White, Sinan Kalkan, and Hatice Gunes,
\newblock ``Investigating bias and fairness in facial expression recognition,''
\newblock in {\em European Conference on Computer Vision}. Springer, 2020, pp.
  506--523.

\bibitem{dash2020counterfactual}
Saloni Dash and Amit Sharma,
\newblock ``Counterfactual generation and fairness evaluation using
  adversarially learned inference,''
\newblock {\em ArXiv}, 2020.

\bibitem{hwang2020fairfacegan}
Sunhee Hwang, Sungho Park, Dohyung Kim, Mirae Do, and Hyeran Byun,
\newblock ``Fairfacegan: Fairness-aware facial image-to-image translation,''
\newblock {\em arXiv preprint arXiv:2012.00282}, 2020.

\bibitem{Devlin2019BERTPO}
Jacob Devlin, Ming-Wei Chang, Kenton Lee, and Kristina Toutanova,
\newblock ``Bert: Pre-training of deep bidirectional transformers for language
  understanding,''
\newblock {\em ArXiv}, vol. abs/1810.04805, 2019.

\bibitem{loshchilov2017decoupled}
Ilya Loshchilov and Frank Hutter,
\newblock ``Decoupled weight decay regularization,''
\newblock {\em arXiv preprint arXiv:1711.05101}, 2017.

\bibitem{he2016deep}
Kaiming He, Xiangyu Zhang, Shaoqing Ren, and Jian Sun,
\newblock ``Deep residual learning for image recognition,''
\newblock in {\em Proceedings of the IEEE conference on computer vision and
  pattern recognition}, 2016, pp. 770--778.

\bibitem{Bengio2013EstimatingOP}
Yoshua Bengio, Nicholas L{\'e}onard, and Aaron~C. Courville,
\newblock ``Estimating or propagating gradients through stochastic neurons for
  conditional computation,''
\newblock {\em ArXiv}, vol. abs/1308.3432, 2013.

\bibitem{selvaraju2017grad}
Ramprasaath~R Selvaraju, Michael Cogswell, Abhishek Das, Ramakrishna Vedantam,
  Devi Parikh, and Dhruv Batra,
\newblock ``Grad-cam: Visual explanations from deep networks via gradient-based
  localization,''
\newblock in {\em Proceedings of the IEEE international conference on computer
  vision}, 2017, pp. 618--626.

\bibitem{Sundararajan2017AxiomaticAF}
Mukund Sundararajan, Ankur Taly, and Qiqi Yan,
\newblock ``Axiomatic attribution for deep networks,''
\newblock {\em ArXiv}, vol. abs/1703.01365, 2017.

\bibitem{kokhlikyan2020captum}
Narine Kokhlikyan, Vivek Miglani, Miguel Martin, Edward Wang, Bilal Alsallakh,
  Jonathan Reynolds, Alexander Melnikov, Natalia Kliushkina, Carlos Araya, Siqi
  Yan, and Orion Reblitz-Richardson,
\newblock ``Captum: A unified and generic model interpretability library for
  pytorch,'' 2020.

\bibitem{zhuang2018multi}
Ni~Zhuang, Yan Yan, Si~Chen, Hanzi Wang, and Chunhua Shen,
\newblock ``Multi-label learning based deep transfer neural network for facial
  attribute classification,''
\newblock {\em Pattern Recognition}, vol. 80, pp. 225--240, 2018.

\bibitem{Louizos2016THEVF}
Christos Louizos, Kevin Swersky, Yujia Li, Max Welling, and Richard Zemel,
\newblock ``The variational fair autoencoder,''
\newblock {\em arXiv preprint arXiv:1511.00830}, 2015.

\bibitem{Pleiss2017OnFA}
Geoff Pleiss, M.~Raghavan, Felix Wu, J.~Kleinberg, and Kilian~Q. Weinberger,
\newblock ``On fairness and calibration,''
\newblock in {\em NIPS}, 2017.

\bibitem{Kamiran2012DecisionTF}
Faisal Kamiran, Asim Karim, and Xiangliang Zhang,
\newblock ``Decision theory for discrimination-aware classification,''
\newblock {\em 2012 IEEE 12th International Conference on Data Mining}, pp.
  924--929, 2012.

\bibitem{aif360-oct-2018}
Rachel K.~E. Bellamy, Kuntal Dey, Michael Hind, Samuel~C. Hoffman, Stephanie
  Houde, Kalapriya Kannan, Pranay Lohia, Jacquelyn Martino, Sameep Mehta,
  Aleksandra Mojsilovic, Seema Nagar, Karthikeyan~Natesan Ramamurthy, John
  Richards, Diptikalyan Saha, Prasanna Sattigeri, Moninder Singh, Kush~R.
  Varshney, and Yunfeng Zhang,
\newblock ``{AI Fairness} 360: An extensible toolkit for detecting,
  understanding, and mitigating unwanted algorithmic bias,'' Oct. 2018.

\bibitem{zhang2022robustify}
Yimeng Zhang, Yuguang Yao, Jinghan Jia, Jinfeng Yi, Mingyi Hong, Shiyu Chang,
  and Sijia Liu,
\newblock ``How to robustify black-box ml models? a zeroth-order optimization
  perspective,''
\newblock {\em arXiv preprint arXiv:2203.14195}, 2022.

\bibitem{Asuncion2007UCIML}
Arthur Asuncion and David Newman,
\newblock ``Uci machine learning repository,'' 2007.

\end{thebibliography}
}}

\newpage

\appendix

\section{Experiment Setup}
\label{app: exp_setup}
\subsection{Dataset Details}
\label{app: data_statistic}
\begin{table*}
    \caption{Statistics of the datasets. ``Pos. (\%)'' column indicates the ratio of positive labels (\emph{i.e.} ``blond hair'' for \texttt{CelebA}, ``toxic'' for \texttt{Civil Comments}).  We use ``Tr.'', ``Tun.'', ``Val.'' and ``test.'' columns to indicate the size of training set, tuning set, validation set and testing set. ``Demographics'' column indicates the considered demographics in each dataset. }
    \label{tab: data_statistic}
    \centering
    \resizebox{1.0\textwidth}{!}{
    \begin{tabular}{c|c|c|c|c|c|c|c}
        \toprule[1pt]
        \midrule
        Dataset & Task & Pos. (\%) & Tr. & Tun. & Val. & Test. & Demographics \\
        \midrule
        \texttt{CelebA} & Hair color recognition & 17.4 & 161143 & 1627 & 19867 & 19962 & Gender \\
        \texttt{Civil Comments} & Toxicity classification & 11.3 & 223858 & 45180 & 45180 & 133782 & Gender, Sex orientation, Race, Religion \\
        \midrule
        \bottomrule[1pt]
    \end{tabular}}
\end{table*}
The dataset splitting setting and demographic information of the datasets are shown in Tab.~\ref{tab: data_statistic}.

\subsection{Training Details}
\label{app: training_detail}

We specify the different $\lambda$ values used to generate the curves in Figs.~\ref{fig: exp_overview} and Figs.~\ref{fig: transfer} in Tab.~\ref{tab: overview}.

For \texttt{Civil Comments} in Figs~\ref{fig: limit_data} and \ref{fig: size_study}, we set $\lambda=0.5$ for {\advp}, $\lambda=50.0$ for \textsc{FairReprogram (Soft)} and $\lambda=1000.0$ for \textsc{FairReprogram (Hard)} with the DP measure;  
we set $\lambda=1.0$ for {\advp} and $\lambda=50.0$ for both of our methods with the EO measure.

For \texttt{CelebA}
in Fig.~\ref{fig: limit_data} and Fig.~\ref{fig: size_study}, we set $\lambda=1.0$ for \textsc{FairReprogram (Border)} and \textsc{FairReprogram (Patch)} with DP and we set $\lambda=10.0$ for both with EO. By default, the trigger size of \textsc{FairReprogram (Border)} is set to 20, which corresponds to the width of the trigger frame. The trigger size of \textsc{FairReprogram (Patch)} is fixed to 80, namely the width of the trigger block attached to the original input image. For {\adv} and {\advp}, the $\lambda$ is set to 0.1 in the setting with DP and $\lambda$ is fixed to 0.5 for training with EO.
The value of $\lambda$ is selected so that different methods achieve comparable bias scores.

\newpage

\section{Additional Experiment Results}
\label{app: additional_results}

\subsection{Experiments with Additional Post-processing Baselines}

\begin{table*}[!htb]
    \caption{\footnotesize{Numerical results with standard derivation on \texttt{Civil Comments} and \texttt{CelebA} shown in Fig.~\ref{fig: exp_overview}.
    All reported results are the average of three different random runs.
    We report the negative DP and the negative EO scores correspondingly for the ``Fairness'' column.
    Note that the best models are also selected based on corresponding fairness measures.}
    } 
    \label{tab: overview}
    \centering
    \resizebox{1.0\textwidth}{!}{
    \begin{tabular}{c|ccc|ccc|ccc|ccc}
        \toprule[1pt]
        \midrule
        \multirow{3}{*}{Method} & \multicolumn{6}{c}{\texttt{Civil Comments}} & \multicolumn{6}{|c}{\texttt{CelebA}} \\
        & \multicolumn{3}{c|}{\texttt{Demographic parity}} & \multicolumn{3}{c|}{\texttt{Equalized odds}} & \multicolumn{3}{c|}{\texttt{Demographic parity}} & \multicolumn{3}{c}{\texttt{Equalized odds}} \\
        & $ \lambda$ & Accuracy & Fairness & $ \lambda$ & Accuracy & Fairness  & $ \lambda$ & Accuracy & Fairness   & $ \lambda$ & Accuracy & Fairness\\
        \midrule
        \multirow{1}{*}{\erm} 
        & -  & $0.922 _{\pm 0.004}$ & $-0.036 _{\pm 0.005} $&- &  $0.923_{\pm 0.004}$ & $-0.054 _{\pm 0.025}$
        & - & $0.961_{\pm 0.004}$ & $-0.094_{\pm 0.002}$ & - & $0.961_{\pm 0.004}$ & $-0.231_{\pm 0.008}$\\
        \midrule
        \multirow{5}{*}{\adv} 
        & 0.0 & $0.926 _{\pm 0.004} $& $-0.036 _{\pm 0.025} $& 0.0 & $0.919 _{\pm 0.004} $& $-0.033  _{\pm 0.005}$
        &  0.01   & $0.944_{\pm 0.007}$ & $-0.084_{\pm 0.004}$ &   0.1    & $0.952_{\pm 0.005}$ & $-0.181_{\pm 0.009}$ 
         \\ 
        & 0.1 & $0.899_{\pm 0.014} $ & $-0.016_{\pm 0.004} $ & 0.1 & $0.899 _{\pm 0.006} $& $-0.016  _{\pm 0.009}$
        &  0.05   & $0.938_{\pm 0.005}$ & $-0.085_{\pm 0.007}$ &   0.3    & $0.941_{\pm 0.002}$ & $-0.177_{\pm 0.014}$ 
         \\ 
        & 0.5 & $0.905 _{\pm 0.003}$ &$ -0.018 _{\pm 0.035}$ & 1.0 & $0.919  _{\pm 0.006}$& $-0.023  _{\pm 0.002}$
        &  0.1   & $0.932_{\pm 0.006}$ & $-0.072_{\pm 0.005}$ &   0.5    & $0.936_{\pm 0.005}$ & $-0.175_{\pm 0.012}$ 
         \\ 
        & 5.0 & $0.889 _{\pm 0.007}$ & $-0.001 _{\pm 0.039}$ & 5.0 & $0.889 _ {\pm 0.005} $ & $-0.001 _{\pm 0.007}  $
        &  0.2   & $0.911_{\pm 0.003}$ & $-0.071_{\pm 0.009}$ &   1.0    & $0.913_{\pm 0.007}$ & $-0.168_{\pm 0.009}$ 
         \\ 
        & 20.0 &$ 0.888 _{\pm 0.023} $& $-0.000 _{\pm 0.039} $& 20.0 & $0.888 _ {\pm 0.009} $ & $-0.000 _{\pm 0.006} $
        &  0.3   & $0.897_{\pm 0.002}$ & $-0.064_{\pm 0.002}$ &   2.0    & $0.901_{\pm 0.004}$ & $-0.153_{\pm 0.007}$ 
         \\ 
        \midrule
        \multirow{5}{*}{\advp} 
        & 0.0 & $0.923_{\pm 0.003} $& $-0.035 _{\pm 0.019}$& 0.0& $0.921 _{\pm 0.005}$& $-0.055 _{\pm 0.004}$
        &  0.01   & $0.959_{\pm 0.004}$ & $-0.097_{\pm 0.004}$ &   0.1    & $0.947_{\pm 0.003}$ & $-0209._{\pm 0.007}$ 
         \\ 
        & 0.1 & $0.925 _{\pm 0.002}$& $-0.043_{\pm 0.011} $& 0.2& $0.925 _{\pm 0.002}$& $-0.045 _{\pm 0.002}$
        &  0.05   & $0.947_{\pm 0.009}$ & $-0.083_{\pm 0.005}$ &   0.3    & $0.931_{\pm 0.005}$ & $-0.201_{\pm 0.011}$ 
         \\ 
        & 0.5 & $0.909 _{\pm 0.011}$& $-0.007_{\pm 0.032}$ & 0.4& $0.925 _{\pm 0.002}$& $-0.041 _{\pm 0.004}$
        &  0.1   & $0.931_{\pm 0.009}$ & $-0.074_{\pm 0.007}$ &   0.5    & $0.918_{\pm 0.004}$ & $-0.168_{\pm 0.015}$ 
         \\ 
        & 1.0 & $0.888_{\pm 0.022}$ & $-0.000_{\pm 0.033}$ & 0.7& $0.924 _{\pm 0.002}$& $-0.042 _{\pm 0.005}$
        &  0.2   & $0.917_{\pm 0.003}$ & $-0.069_{\pm 0.005}$ &   1.0    & $0.911_{\pm 0.008}$ & $-0.165_{\pm 0.011}$ 
         \\ 
        & 5.0 & $0.888 _{\pm 0.022}$& $-0.000_{\pm 0.033}$ & 1.0 & $0.888 _{\pm 0.022}$& $-0.000 _{\pm 0.028}$
        &  0.3   & $0.873_{\pm 0.002}$ & $-0.058_{\pm 0.002}$ &   2.0    & $0.899_{\pm 0.002}$ & $-0.141_{\pm 0.013}$ 
         \\ 
        \midrule
        \multirow{1}{*}{\textsc{EqOdds}}
        &  -   & $0.913_{\pm 0.005}$ & $-0.032_{\pm 0.020}$ & - & $0.915_{\pm 0.003}$ & $-0.031_{\pm 0.005} $
        &  -   & $0.919_{\pm 0.009}$ & $-0.047_{\pm 0.005}$ &   -    & $0.919_{\pm 0.009}$ & $-0.172_{\pm 0.009}$ 
         \\
        \multirow{1}{*}{\textsc{CaliEqOdds}}
        & - & $0.922_{\pm 0.003}$ & $-0.044_{\pm 0.023}$ & - & $0.922_{\pm 0.004}$ & $-0.057_{\pm 0.011} $
        &  -   & $0.927_{\pm 0.007}$ & $-0.053_{\pm 0.005}$ &   -    & $0.927_{\pm 0.007}$ & $-0.169_{\pm 0.018}$ 
         \\
        \multirow{1}{*}{\textsc{RejectOption}} 
        & - & $0.886_{\pm 0.028} $& $-0.152_{\pm 0.052}$ & - & $0.874_{\pm 0.017} $& $-0.101_{\pm 0.002} $
        &  -   & $0.934_{\pm 0.003}$ & $-0.089_{\pm 0.004}$ &   -    & $0.934_{\pm 0.003}$ & $-0.189_{\pm 0.015}$  \\
        \multirow{1}{*}{\textsc{DIRemover}} 
        & - & $0.917_{\pm 0.008}$ & $-0.017_{\pm 0.017}$ & - & $0.922_{\pm 0.003} $& $-0.034_{\pm 0.003} $
        &  -   & $0.959_{\pm 0.004}$ & $-0.086_{\pm 0.003}$ &   -    & $0.959_{\pm 0.004}$ & $-0.183_{\pm 0.014}$  \\
        \midrule
        \multirow{5}{*}{\makecell[c]{\ours \\ (\textsc{Soft} / \textsc{Border}) }} 
        & 0.0 & $0.919_{\pm 0.005}$ & $-0.018_{\pm 0.021}$ & 0.0 & $0.920_{\pm 0.004} $& $-0.040_{\pm 0.002} $
        &  0.1   & $0.961_{\pm 0.002}$ & $-0.093_{\pm 0.005}$ &   2.0    & $0.961_{\pm 0.005}$ & $-0.171_{\pm 0.005}$ 
         \\ 
        & 0.5 & $0.911_{\pm 0.007} $& $-0.012_{\pm 0.018} $& 0.1 & $0.916_{\pm 0.007} $& $-0.026_{\pm 0.007}$
        &  0.5   & $0.959_{\pm 0.005}$ & $-0.087_{\pm 0.006}$ &   5.0    & $0.951_{\pm 0.007}$ & $-0.167_{\pm 0.004}$ 
         \\ 
        & 5.0 & $0.913_{\pm 0.008} $& $-0.009_{\pm 0.011} $& 10.0 & $0.918_{\pm 0.005} $& $-0.042_{\pm 0.006} $
        &  1.0   & $0.952_{\pm 0.007}$ & $-0.078_{\pm 0.005}$ &  10.0   & $0.933_{\pm 0.003}$ & $-0.163_{\pm 0.003}$ 
         \\ 
        & 20.0 & $0.901_{\pm 0.014}$ & $-0.005_{\pm 0.023} $& 20.0 & $0.917_{\pm 0.006} $& $-0.025_{\pm 0.012} $
        &  2.0   & $0.929_{\pm 0.003}$ & $-0.075_{\pm 0.004}$ &   20.0    & $0.926_{\pm 0.004}$ & $-0.162_{\pm 0.005}$ 
         \\ 
        & 100.0 & $0.907_{\pm 0.011}$ & $-0.001_{\pm 0.003} $& 50.0 & $0.917_{\pm 0.004} $& $-0.011_{\pm 0.010} $
        &  5.0   & $0.911_{\pm 0.002}$ & $-0.072_{\pm 0.002}$ &   30.0    & $0.918_{\pm 0.002}$ & $-0.161_{\pm 0.003}$ 
         \\ 
        \midrule
        \multirow{5}{*}{\makecell[c]{\ours \\ (\textsc{Hard} / \textsc{Patch}) }} 
        & 0.0 & $0.908_{\pm 0.008}$ & $-0.010_{\pm 0.016}$ & 0.0 & $0.920_{\pm 0.007} $&$-0.039_{\pm 0.001}  $
        &  0.1   & $0.955_{\pm 0.004}$ & $-0.088_{\pm 0.004}$ &   2.0    & $0.955_{\pm 0.004}$ & $-0.178_{\pm 0.011}$ 
         \\ 
        & 0.1 & $ 0.908_{\pm 0.011} $ & $ -0.008_{\pm 0.022} $ & 20.0 &  $0.918_{\pm 0.005} $ &  $-0.034_{\pm 0.002}  $
        &  0.5   & $0.950_{\pm 0.005}$ & $-0.078_{\pm 0.007}$ &   5.0    & $0.946_{\pm 0.008}$ & $-0.161_{\pm 0.009}$ 
         \\ 
        & 10.0 &  $0.906_{\pm 0.012}$  &  $-0.007_{\pm 0.019} $ & 200.0 &  $0.907_{\pm 0.015}  $&  $-0.023_{\pm 0.017}  $
        &  1.0   & $0.934_{\pm 0.005}$ & $-0.060_{\pm 0.003}$ &   10.0    & $0.934_{\pm 0.004}$ & $-0.152_{\pm 0.007}$ 
         \\ 
        & 30.0 & $0.894_{\pm 0.017} $ &  $-0.003_{\pm 0.021} $ & 600.0 & $ 0.902_{\pm 0.013} $ & $ -0.016_{\pm 0.014}  $
        &  2.0   & $0.917_{\pm 0.003}$ & $-0.040_{\pm 0.008}$ &   20.0    & $0.917_{\pm 0.002}$ & $-0.139_{\pm 0.012}$ 
         \\ 
        & 100.0 & $0.893_{\pm 0.015}$ & $-0.002_{\pm 0.017}$ & 1200.0 & $0.901_{\pm 0.017}$ & $-0.014_{\pm 0.011}$ 
        &  5.0   & $0.890_{\pm 0.001}$ & $-0.019_{\pm 0.002}$ &   30.0    & $0.890_{\pm 0.001}$ & $-0.121_{\pm 0.005}$ \\
        \midrule
        \bottomrule[1pt]
    \end{tabular}}
\end{table*}

We further compare our method with four extra post-processing fairness-promoting baselines.

$\bullet$ \textsc{EqOdds}~\cite{Hardt2016EqualityOO}: Method that alters model predictions to meet equalized odds by solving a linear program.

$\bullet$ \textsc{CaliEqOdds}~\cite{Pleiss2017OnFA}: Method that optimizes the model outputs to achieve a relaxed equalized odds objective together with calibration with information withholding.

$\bullet$ \textsc{RejectOption}~\cite{Feldman2015CertifyingAR}: Method that tunes model outputs with more favorable labels to minority groups (vice versa) in the low confidence region of classifiers to achieve better demographic parity. 

$\bullet$ \textsc{DIRemover}~\cite{Kamiran2012DecisionTF}:
Disparate impact remover is proposed as a pre-processing fairness promoting method, which modifies input features with rank-ordering preserving operations. We simply apply the method to modify model predictions as a post-processing method to promote demographic parity.

\noindent \textsc{EqOdds}, \textsc{CaliEqOdds} and \textsc{RejectOption} are trained on the tuning set and then applied on testing set while \textsc{DIRemover} directly tune the model predictions on the testing set. We use the implementation \citep{aif360-oct-2018} for all four baselines.

The results can be seen in Table~\ref{tab: overview}. We see that our method consistently outperforms these baselines with improved fairness-accuracy trade-off. 
For example, we see that \textsc{FairReprogram (Border)} can achieve -0.167 negative EO and 0.951 accuracy in \texttt{CelebA} with $\lambda=5.0$. By contrast, the best-performing post-processing baseline achieves 
much worse accuracy (0.927).
Similar comparisons can also be seen in \texttt{Civil Comments}, where the best post-processing baseline can achieve -0.031 negative EO score and 0.915 accuracy, while our method \textsc{FairReprogram (Soft)} can achieve -0.011 negative EO with 0.917 accuracy with $\lambda=50.0$.

\subsection{Experiments with Additional MMD Baselines}
\CR{
To partial out the instability of the adversarial training, we further compare our method with MMD method, where the adversarial loss $\mathcal{L}_{fair}$ in Eq.~\eqref{eq:loss} is replaced with the Maximum Mean Discrepancy regularization~\cite{Louizos2016THEVF}.
Specifically, we consider \textsc{MmdIn} and \textsc{MmdPost}, where model parameters are trained from scratch in an in-processing manner and fine-tuned in a post-processing manner, respectively, following the settings for adversarial training in Section~\ref{sec: setup}.
The experiment results on the \texttt{Civil Comments} dataset are presented in Fig.~\ref{fig: rebuttal_mmd}.
As we can see, our proposed method {\ours} outperforms the MMD baselines, which can alleviate the concern that fairness reprogramming has a better performance simply because of the instability of adversarial training of the baselines. }

\begin{figure}[thb]
\centerline{
\begin{tabular}{cccc}
    \includegraphics[width=.35\textwidth,height=!]{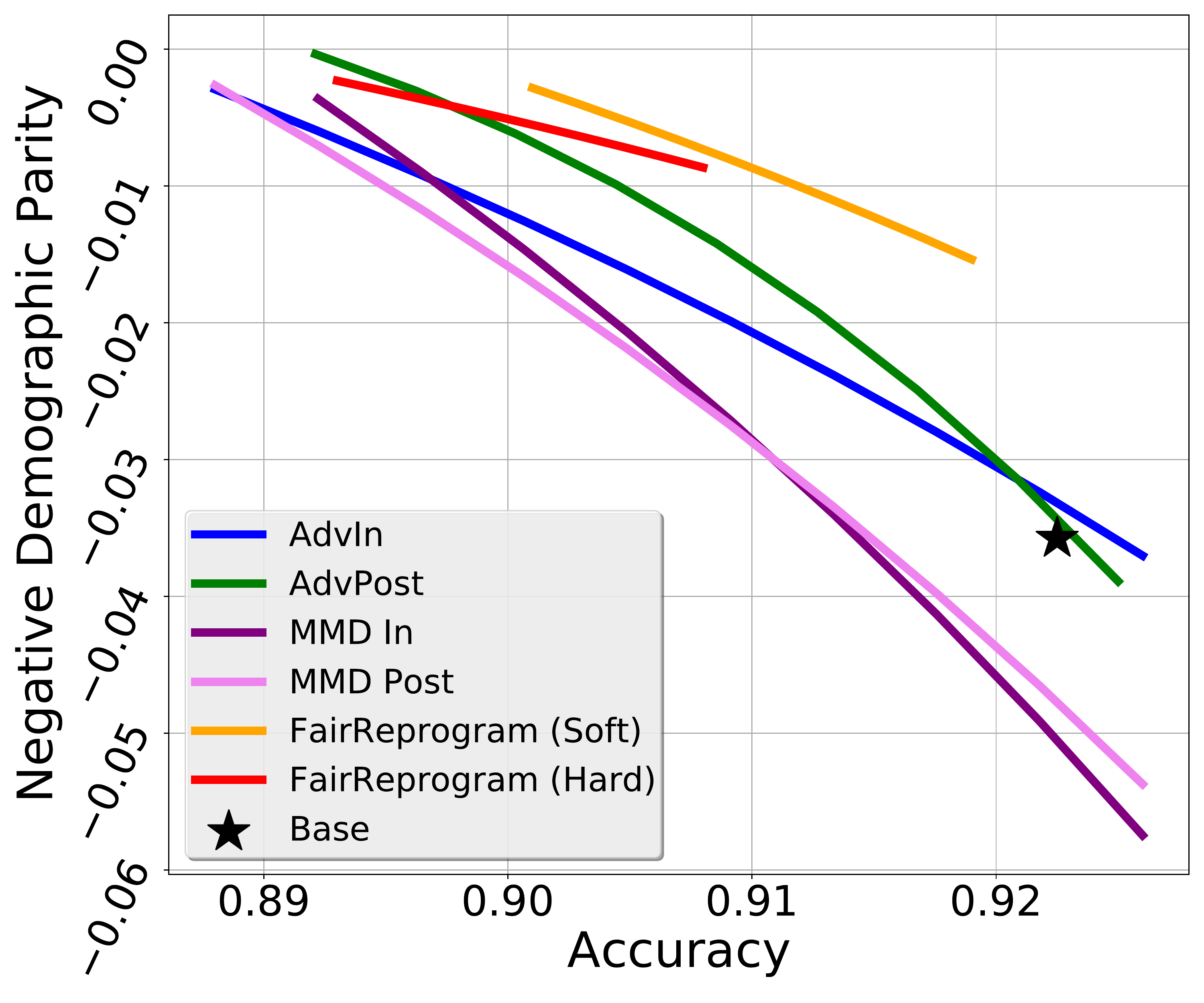}
    &
    \includegraphics[width=.35\textwidth,height=!]{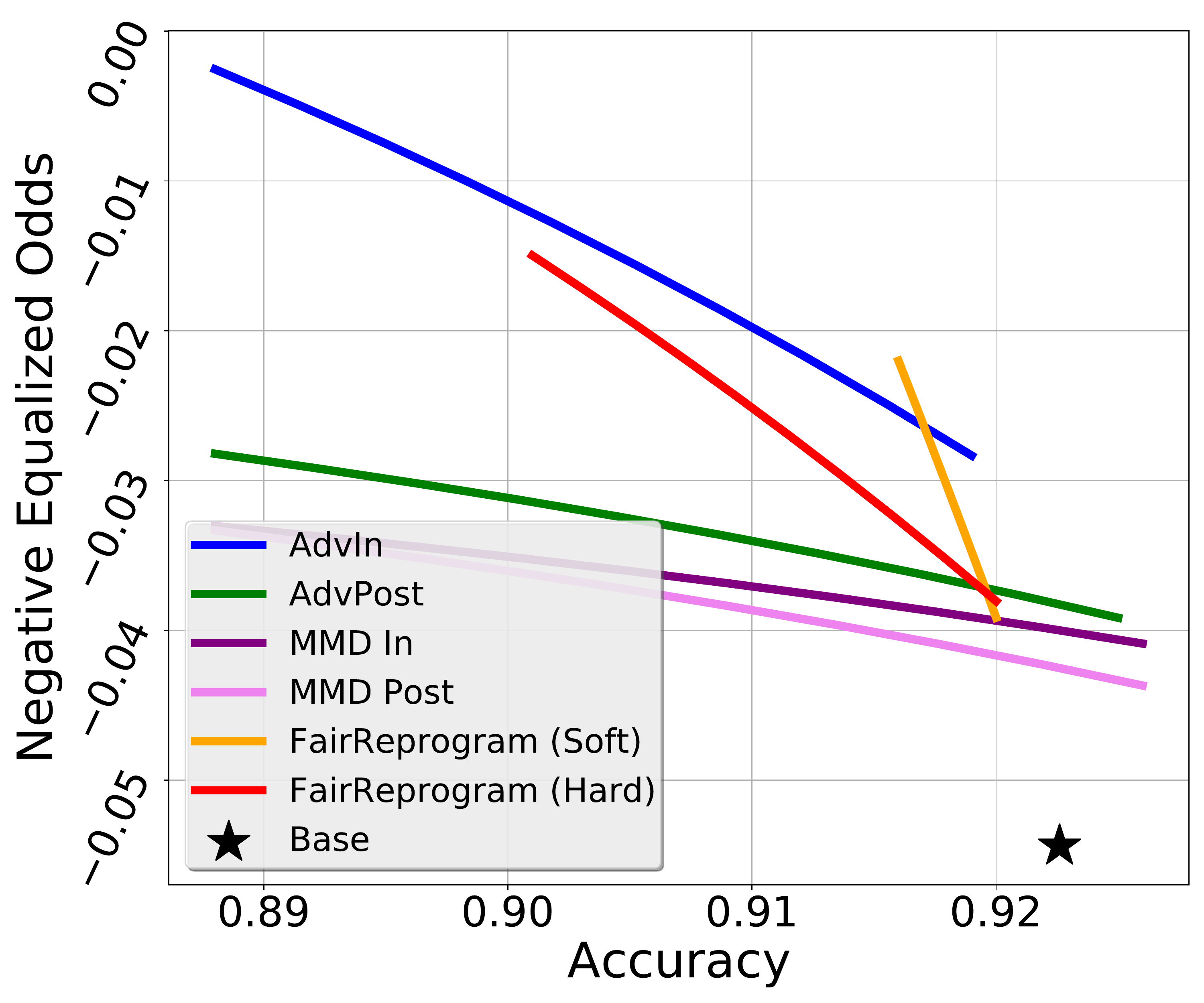} 
    \vspace*{-2mm}\\
    \footnotesize{(a)} 
    & 
    \footnotesize{(b)} 
    
\end{tabular}}
\vspace*{-3mm}
\caption{\footnotesize{
Results on \texttt{Civil Comments} with the new \textsc{MMD} baseline. We report the negative DP (left) and the negative EO (right) scores.  For each method, we vary the trade-off parameter $\lambda$ (as shown in \eqref{eq:loss}) to record the performance.  The closer a dot to the upper-right corner, the better the model is.   We consider four different $\lambda$s for each method. 
The solid curve is the fitted polynomial with order 30.
}}
\vspace*{-3mm}
\label{fig: rebuttal_mmd}
\end{figure}

\subsection{Black-box \textsc{FairReprogram} Generation} Previous experiments are all based on the white-box setting, which assumes access to the complete model information, such as model architectures and parameters. This precludes the use case of reprogramming a well-trained but access-limited model, \emph{e.g.}, a commercial APIs or other query-based software\,\cite{tsai2020transfer}. Thus, we further explore the feasibility of our method in the  black-box setup \cite{tsai2020transfer, zhang2022robustify},  where the gradients of the pre-trained model are estimated using only function queries. We follow the general black-box setting in \cite{tsai2020transfer} and adopt a query number of 30. 
The results are summarized in Fig.~\ref{fig: blackbox}.
As we can see, out algorithm can still improve the fairness without the knowledge of the model information. However, in such a case, the gain in fairness would sacrifice the accuracy largely when compared to our baselines. While in the future work, we will try to mitigate such degradation using more query numbers\cite{tsai2020transfer} and coordinate gradient estimation (CGE) \cite{zhang2022robustify} to achieve more accurate gradient estimation.

\begin{figure}[!htb]
\centerline{
\begin{tabular}{cc}
\hspace*{0mm}\includegraphics[width=.35\textwidth,height=!]{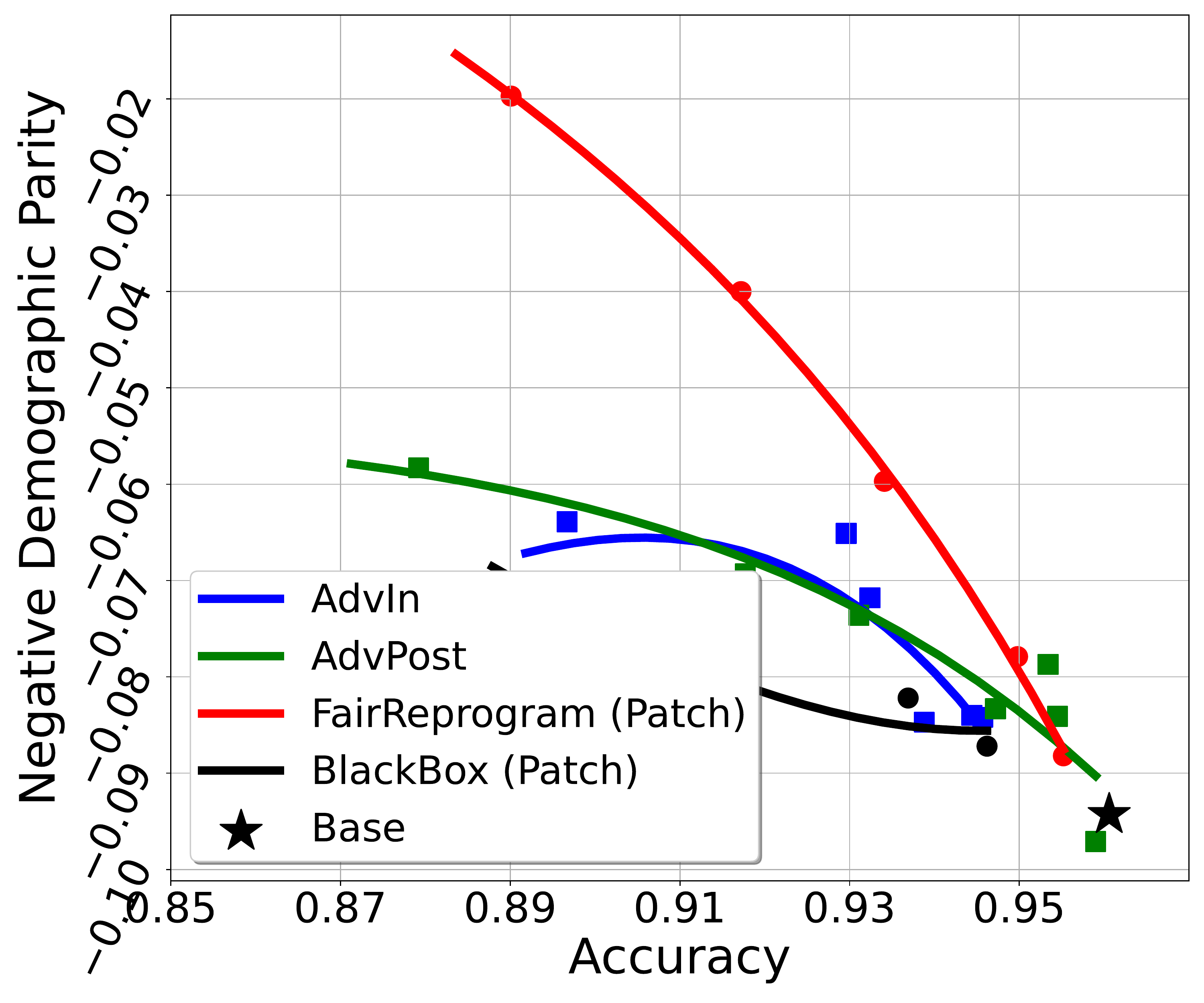}  
&\hspace*{-3mm}\includegraphics[width=.35\textwidth,height=!]{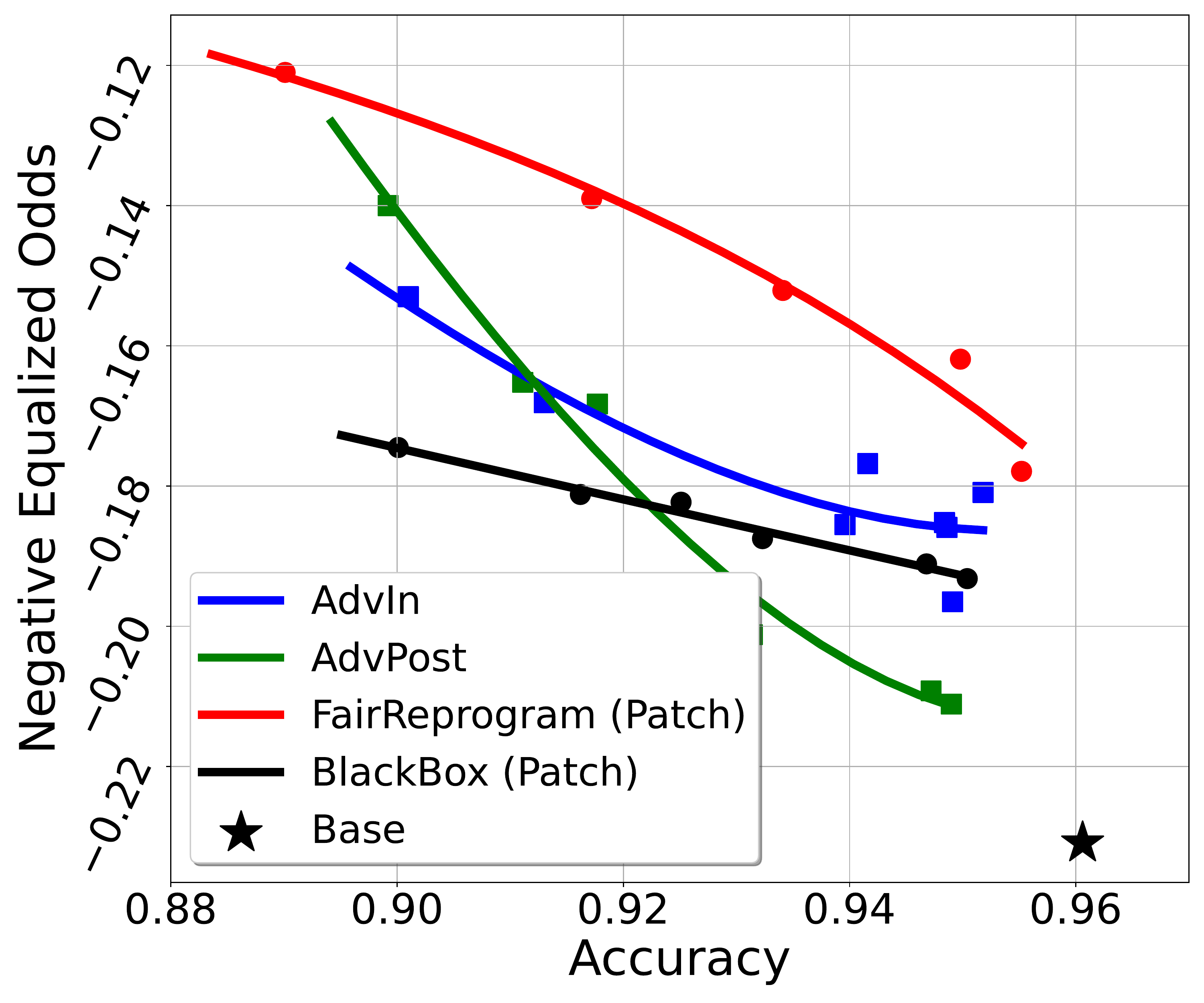}
\end{tabular}}
\caption{\footnotesize{Performance of {\ours} in the black-box setting. The left
Performance of the triggers trained in the black-box setting. Both the reprogrammer and the adversary are trained with query-based estimated gradients.
Different data samples represent different  }}
\label{fig: blackbox}
\end{figure}

\subsection{Results with Standard Derivation}
The numerical results in Figs.~\ref{fig: exp_overview}, \ref{fig: limit_data} and \ref{fig: size_study} with standard derivation are correspondingly presented in Tabs~\ref{tab: overview}, \ref{tab: limit_data} and \ref{tab: size_study}.

\begin{table*}[t]
    \caption{
    \footnotesize
    Numerical results with standard derivation on \texttt{Civil Comments} and \texttt{CelebA} with different tuning data ratio, corresponding to Fig.~\ref{fig: limit_data}. 
    All reported results are the average of three different random runs.
    We report the negative DP and the negative EO scores correspondingly for the ``Fairness'' column.
    We consider a fixed \textsc{Base} model trained with the training set, whose negative bias scores are presented as a black dashed line.
    Then we train other methods with different tuning data ratios to promote fairness of the \textsc{Base} model. 
    }  
    \label{tab: limit_data}
    \centering
    \resizebox{1.0\textwidth}{!}{
    \begin{tabular}{cc|cc|cc|cc|cc}
        \toprule[1pt]
        \midrule
        \multirow{3}{*}{Method} & \multirow{3}{*}{Tuning Data Ratio} &  \multicolumn{4}{c}{\texttt{Civil Comments}} & \multicolumn{4}{|c}{\texttt{CelebA}} \\
        && \multicolumn{2}{c|}{\texttt{Demographic parity}} & \multicolumn{2}{c|}{\texttt{Equalized odds}} & \multicolumn{2}{c|}{\texttt{Demographic parity}} & \multicolumn{2}{c}{\texttt{Equalized odds}} \\
        &&  Accuracy & Fairness &  Accuracy & Fairness  &  Accuracy & Fairness  &  Accuracy & Fairness\\
        \midrule
        \erm 
        & - & $0.922_{\pm0.004}$ & $-0.036_{\pm0.025}$ & $0.923_{\pm0.004}$ & $-0.054_{\pm0.005}$  
        & $0.961_{\pm0.004}$ & $-0.094_{\pm0.002}$ & $0.961_{\pm0.004}$ & $-0.231_{\pm0.008}$
        \\
        \midrule
        \multirow{6}{*}{\makecell[c]{\advp}} 
        & 1.0 
        & $0.909_{\pm0.011}$ & $-0.007_{\pm0.032}$  & $0.888_{\pm0.022}$ & $-0.000 _{\pm0.028} $
        & $0.908_{\pm0.005}$ & $-0.067_{\pm0.005}$ & $0.905_{\pm0.004}$ & $-0.150_{\pm0.007}$
         \\ 
        & 0.5 
        & $0.923_{\pm0.005}$ & $-0.044_{\pm0.009}  $&$ 0.919_{\pm0.005} $  & $-0.053_{\pm0.008}   $
        & $0.915_{\pm0.007}$ & $-0.069_{\pm0.007}$ & $0.911_{\pm0.006}$ & $-0.162_{\pm0.013}$
         \\ 
        & 0.2 
        & $0.923_{\pm0.004}$  & $-0.038_{\pm0.017}$  & $0.919_{\pm0.006} $ &$ -0.061_{\pm0.004} $
        & $0.939_{\pm0.002}$ & $-0.075_{\pm0.006}$ & $0.929_{\pm0.005}$ & $-0.171_{\pm0.010}$
         \\ 
        & 0.1 
        &$ 0.917_{\pm0.007} $&$ -0.038_{\pm0.015} $&$ 0.918_{\pm0.011} $&$ -0.056_{\pm0.013} $
        & $0.943_{\pm0.003}$ & $-0.083_{\pm0.004}$ & $0.933_{\pm0.007}$ & $-0.178_{\pm0.008}$
         \\ 
        & 0.01 
        & $0.922_{\pm0.002} $ & $-0.041_{\pm0.014}$ & $0.920_{\pm0.010}$  & $-0.060_{\pm0.014}$  
        & $0.948_{\pm0.005}$ & $-0.089_{\pm0.005}$ & $0.948_{\pm0.003}$ & $-0.202_{\pm0.012}$
         \\ 
        & 0.001 
        & $0.917_{\pm0.005}$ & $-0.083_{\pm0.018}$  & $0.921_{\pm0.006}$ & $-0.060_{\pm0.009}$  
        & $0.951_{\pm0.007}$ & $-0.091_{\pm0.005}$ & $0.955_{\pm0.002}$ & $-0.229_{\pm0.005}$
         \\ 
        \midrule
        \multirow{6}{*}{\makecell[c]{\ours \\ (\textsc{Soft}/\textsc{Border})}} 
        & 1.0 & $0.917_{\pm0.003}$  & $-0.002_{\pm0.001}$  & $0.917_{\pm0.004}$ & $-0.011_{\pm0.010} $
        & $0.935_{\pm0.003}$ & $-0.066_{\pm0.003}$ & $0.907_{\pm0.004}$ & $-0.153_{\pm0.009}$
         \\ 
        & 0.5 &$ 0.905_{\pm0.009} $&$ -0.002_{\pm0.004} $&$ 0.922_{\pm0.005} $&$ -0.018_{\pm0.013}  $
        & $0.941_{\pm0.003}$ & $-0.070_{\pm0.003}$ & $0.937_{\pm0.003}$ & $-0.162_{\pm0.008}$
         \\ 
        & 0.2 
        &$ 0.911_{\pm0.013}  $&$ -0.002_{\pm0.006} $&$ 0.917_{\pm0.008} $&$ -0.017_{\pm0.011} $
        & $0.947_{\pm0.003}$ & $-0.074_{\pm0.005}$ & $0.935_{\pm0.005}$ & $-0.162_{\pm0.011}$
         \\ 
        & 0.1 
        & $0.905_{\pm0.010}$&$ -0.001_{\pm0.005} $&$ 0.917_{\pm0.000} $&$ -0.025_{\pm0.007} $
        & $0.951_{\pm0.005}$ & $-0.079_{\pm0.004}$ & $0.951_{\pm0.006}$ & $-0.177_{\pm0.009}$
         \\ 
        & 0.01 
        & $ 0.911_{\pm0.007} $&$ -0.003_{\pm0.004} $&$ 0.918_{\pm 0.005} $&$ -0.033_{\pm0.017} $
        & $0.958_{\pm0.003}$ & $-0.087_{\pm0.002}$ & $0.959_{\pm0.003}$ & $-0.197_{\pm0.003}$
         \\ 
        & 0.001 
        & $ 0.908_{\pm0.176} $&$ -0.009_{\pm0.042} $&$ 0.921_{\pm0.181} $&$ -0.044_{\pm0.013} $
        & $0.957_{\pm0.008}$ & $-0.091_{\pm0.003}$ & $0.959_{\pm0.002}$ & $-0.221_{\pm0.008}$
         \\ 
        \midrule
        \multirow{6}{*}{\makecell[c]{\ours \\ (\textsc{Hard}/\textsc{Patch})}} 
        & 1.0 
        &$ 0.897_{\pm0.012} $&$ -0.005_{\pm0.004} $&$ 0.905_{\pm0.006} $&$ -0.009_{\pm0.007} $
        & $0.938_{\pm0.005}$ & $-0.065_{\pm0.014}$ & $0.931_{\pm0.002}$ & $-0.154_{\pm0.004}$
         \\ 
        & 0.5 
        & $0.905_{\pm0.014}$ & $-0.006_{\pm0.026}$ & $0.917_{\pm0.006}$ & $-0.028_{\pm0.007} $
        & $0.932_{\pm0.002}$ & $-0.062_{\pm0.002}$ & $0.937_{\pm0.004}$ & $-0.164_{\pm0.006}$
         \\ 
        & 0.2 
        &$ 0.902_{\pm0.013} $&$ -0.006_{\pm0.017} $&$ 0.909_{\pm 0.020} $&$ -0.025_{\pm0.020} $
        & $0.941_{\pm0.003}$ & $-0.073_{\pm0.005}$ & $0.945_{\pm0.005}$ & $-0.166_{\pm0.013}$
         \\ 
        & 0.1 
        &$ 0.900_{\pm0.014} $&$ -0.005_{\pm0.016} $&$ 0.909_{\pm 0.008} $&$ -0.024_{\pm0.011} $
        & $0.948_{\pm0.006}$ & $-0.079_{\pm0.003}$ & $0.951_{\pm0.002}$ & $-0.183_{\pm0.008}$
         \\ 
        & 0.01 
        &$ 0.896_{\pm0.013} $&$ -0.004_{\pm0.010} $&$ 0.918_{\pm 0.003} $&$ -0.035_{\pm0.005} $
        & $0.967_{\pm0.007}$ & $-0.087_{\pm0.004}$ & $0.955_{\pm0.004}$ & $-0.192_{\pm0.010}$
         \\ 
        & 0.001
        &$ 0.907_{\pm0.007} $&$ -0.008_{\pm0.012} $&$ 0.921_{\pm 0.000} $&$ -0.042_{\pm0.001} $
         & $0.955_{\pm0.004}$ & $-0.089_{\pm0.005}$ & $0.958_{\pm0.005}$ & $-0.228_{\pm0.013}$
         \\ 

        \midrule
        \bottomrule[1pt]
    \end{tabular}}
\end{table*}

\begin{table*}[t]
    \caption{
    \footnotesize
    Numerical results with standard derivation on \texttt{Civil Comments} and \texttt{CelebA} with different trigger size, corresponding to Fig.~\ref{fig: size_study}. 
    We evaluate the bias scores with different trigger word numbers (\texttt{Civil Comments}) and different trigger size (\texttt{CelebA}) with fixed adversary weight $\lambda$.
    All reported results are the average of three random runs.
    We report the negative DP and the negative EO scores correspondingly for the ``Fairness'' column.
    }  
    \label{tab: size_study}
    \centering
    \resizebox{1.0\textwidth}{!}{
    \begin{tabular}{cc|cc|cc|c|cc|cc}
        \toprule[1pt]
        \midrule
        \multirow{3}{*}{Method} & \multirow{3}{*}{Trigger Size} &  \multicolumn{4}{c|}{\texttt{Civil Comments}}& \multirow{3}{*}{Trigger Size} & \multicolumn{4}{c}{\texttt{CelebA}} \\
        && \multicolumn{2}{c|}{\texttt{Demographic parity}} & \multicolumn{2}{c|}{\texttt{Equalized odds}} &  &\multicolumn{2}{c|}{\texttt{Demographic parity}} & \multicolumn{2}{c}{\texttt{Equalized odds}} \\
        &&  Accuracy & Fairness &  Accuracy & Fairness &  &  Accuracy & Fairness  &  Accuracy & Fairness\\
        \midrule
        \erm 
        & - 
        &$ 0.922_{\pm 0.004}$ &$ -0.036_{\pm 0.025} $&$ 0.923_{\pm 0.004} $&$ -0.054_{\pm 0.005}$
        & -
        & $0.961_{\pm0.004}$ & $-0.094_{\pm 0.002}$ & $0.961_{\pm0.004}$ & $-0.231_{\pm0.008}$
        \\
        \midrule
        \multirow{6}{*}{\makecell[c]{\ours \\ (\textsc{Soft}/\textsc{Border})}} 
        & 20 
        & $0.890_{\pm 0.011}$ & $-0.001_{\pm 0.000}$ &$ 0.910_{\pm 0.005}$ &$ -0.004 _{\pm 0.001}$
        & 30
        & $0.914_{\pm0.002}$ & $-0.054_{\pm0.008}$ & $0.903_{\pm0.005}$ & $-0.155_{\pm0.005}$ 
         \\ 
        & 10 
        & $0.906_{\pm 0.004}$ &$ -0.002_{\pm 0.001}$ &$ 0.906_{\pm 0.004}$ &$ -0.009 _{\pm 0.010}$
        & 25
         & $0.933_{\pm0.003}$ &$ -0.061_{\pm0.006}$ & $0.917_{\pm0.003}$ & $-0.162_{\pm0.009}$ 
         \\ 
        & 5 
        & $0.917_{\pm 0.003}$ & $-0.002_{\pm 0.001}$ &$ 0.917_{\pm 0.004}$ &$ -0.011 _{\pm 0.010}$
        & 20
        & $0.939_{\pm0.006}$ & $-0.070_{\pm0.009}$ &  $0.923_{\pm0.007}$ & $-0.170_{\pm0.008}$ 
         \\ 
        & 2 
        &$ 0.912_{\pm 0.002}$ &$ -0.007_{\pm 0.001}$ &$ 0.921_{\pm 0.002}$ & $-0.038 _{\pm 0.009}$
        & 15
        & $0.943_{\pm0.004}$ & $-0.074_{\pm0.06}$ & $0.951_{\pm0.004}$ & $-0.189_{\pm0.011}$ 
         \\ 
        & 1 
        & $0.917_{\pm 0.001}$ &$ -0.011_{\pm 0.000}$ &$ 0.920_{\pm 0.000}$ &$ -0.051 _{\pm 0.002}$
        & 10
        & $0.951_{\pm0.004}$ & $-0.081_{\pm0.008}$ & $0.958_{\pm0.008}$ & $-0.222_{\pm0.012}$ 
         \\ 
        \midrule
        \multirow{6}{*}{\makecell[c]{\ours \\ (\textsc{Hard}/\textsc{Patch})}}
        & 20 
        & $0.890_{\pm 0.002}$ & $-0.001_{\pm 0.001}$ & $0.892 _{\pm 0.001}$& $-0.005 _{\pm 0.005}$
        & 70
        & $0.912_{\pm0.005}$ & $-0.048_{\pm0.005}$ & $0.932_{\pm0.003}$ & $-0.151_{\pm0.006}$ 
         \\ 
        & 10 &$ 0.891_{\pm 0.009}$ &$ -0.001_{\pm 0.003}$ &$ 0.901_{\pm 0.002}$ &$ -0.016 _{\pm 0.005}$
        & 60
        & $0.937_{\pm0.008}$ & $-0.056_{\pm0.004}$ & $0.947_{\pm0.004}$ & $-0.160_{\pm0.011}$ 
         \\ 
        & 5 &$ 0.897_{\pm 0.012}$ & $-0.005_{\pm 0.004}$ & $0.905 _{\pm 0.006}$ & $ -0.009 _{\pm 0.007}$
        & 50
        & $0.935_{\pm0.002}$ & $-0.061_{\pm0.008}$ & $0.954_{\pm0.004}$ & $-0.172_{\pm0.012}$ 
         \\ 
        & 2 &$ 0.905_{\pm 0.007}$ &$ -0.006 _{\pm 0.005}$&$ 0.911_{\pm 0.002}$ &$ -0.030 _{\pm 0.007}$
        & 40
        & $0.944_{\pm0.006}$ & $-0.074_{\pm0.010}$ & $0.959_{\pm0.008}$ & $-0.191_{\pm0.009}$ 
         \\ 
        & 1 &$ 0.913_{\pm 0.002}$ &$ -0.012_{\pm 0.001}$ & $0.921_{\pm 0.003}$ &$ -0.051 _{\pm 0.001}$
        & 30
        & $0.958_{\pm0.004}$ & $-0.091_{\pm0.007}$ & $0.958_{\pm0.002}$ & $-0.204_{\pm0.013}$ 
         \\ 
        \midrule
        \bottomrule[1pt]
    \end{tabular}}
\end{table*}

\CR{
\subsection{Transfer experiments with different tasks and model architectures}
\label{app:transfer}
We further test the transferability of reprogramming to different tasks and model architectures as shown in Fig.~\ref{fig: rebuttal_transfer}. 
Specifically, for transfer setting, 
the reprogram is optimized on the (ResNet-18, \texttt{CelebA}) with the task of predicting the hair color, and evaluated on (ResNet-20, \texttt{CelebA}) with the task of predicting smiling.
For both tasks, the attribute gender is chosen as the demographic information throughout the experiments.
We can see that the trigger still has good transferability with different model architectures.
Meanwhile, we find that the triggers are able to boost the fairness of the model in the task-transfer setting, but the accuracy is traded off more than the original setting. 
}

\begin{figure}[thb]
\centerline{
\begin{tabular}{cccc}
    \includegraphics[width=.35\textwidth,height=!]{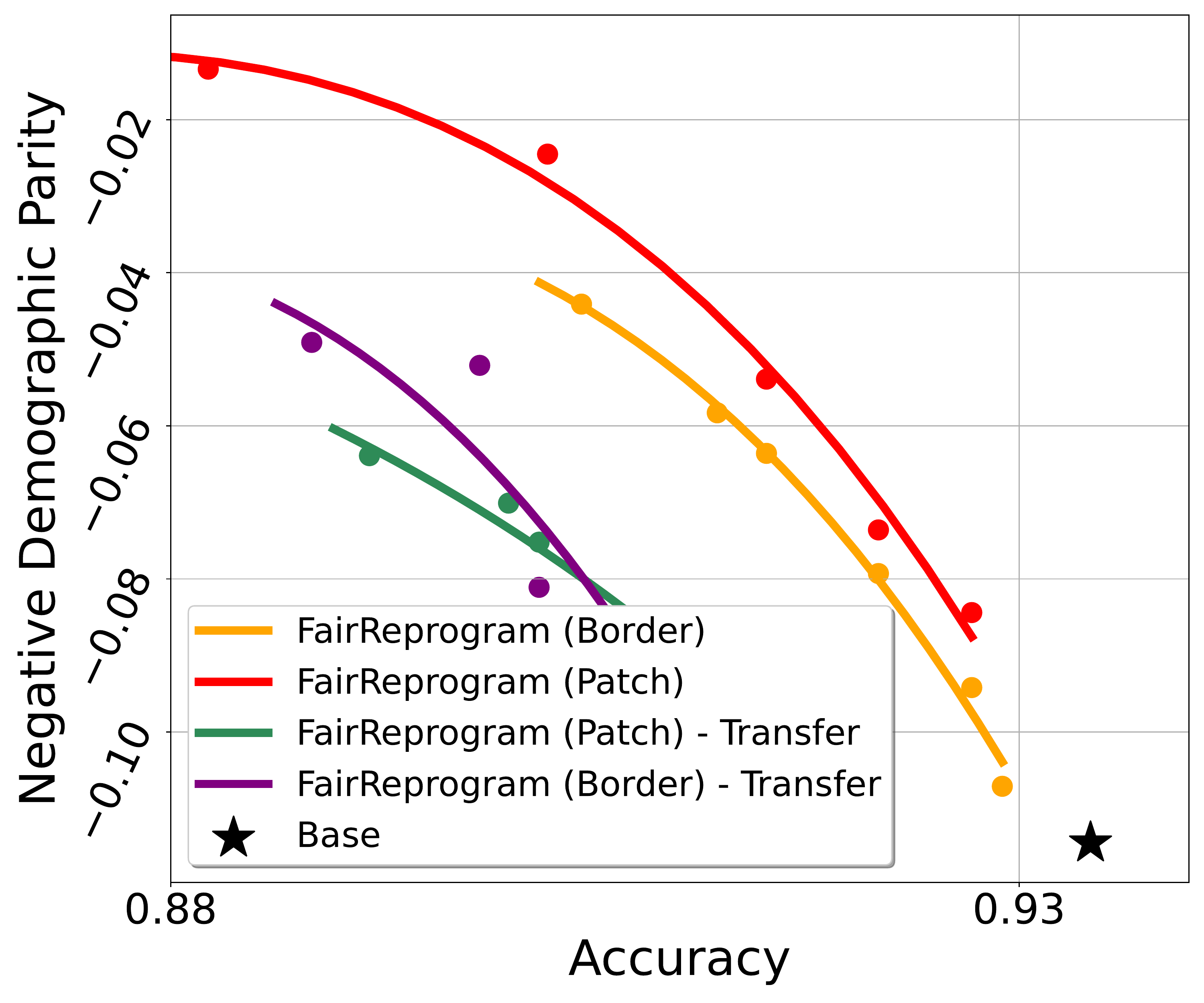} 
    &
    \includegraphics[width=.35\textwidth,height=!]{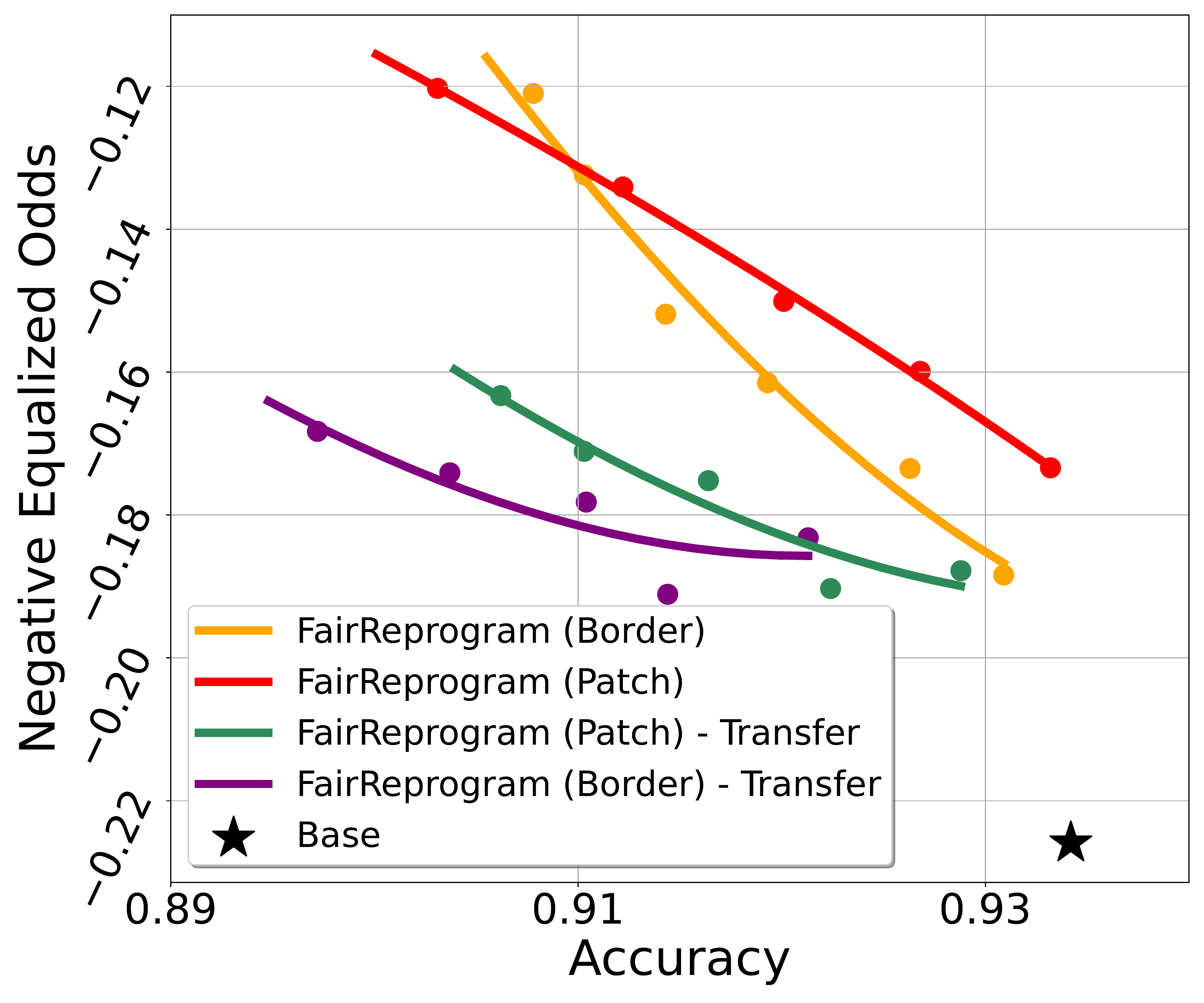} 
    \vspace*{-2mm}\\
    \footnotesize{(a)} 
    & 
    \footnotesize{(b)} 
    
\end{tabular}}
\vspace*{-3mm}
\caption{\footnotesize{Results of the transferability experiment on \texttt{CelebA} dataset with different tasks and model architectures. In each figure, we compare the reprogramming in the transferred setting (curves denoted with `transfer') with the reprogram directly trained on the target task. For transfer setting, the reprogram is optimized on the (ResNet-18, \texttt{CelebA}) with the task of predicting the hair color, and evaluated on (ResNet-20, \texttt{CelebA}) with the task of predicting smiling. For both tasks, the attribute gender is chosen as the demographic information throughout the experiments.}}
\vspace*{-3mm}
\label{fig: rebuttal_transfer}
\end{figure}

\CR{
\subsection{Experiments on Reprogramming Tabular Data}
We show that {\ours} could also be applied to tabular data.
For reprogramming, there are many ways to design triggers according to different tasks and requirements. 
Unlike NLP, where we append the trigger to the input or embeddings, the model for tabular data is sensitive to the input size.
As the tabular data have a fixed input size, we can directly apply the additive trigger to the input data to keep the input dimension unchanged (i.e., adding a perturbation on the original input), just as we adopted in image domains in Fig.~\ref{fig: trigger}.b. 
To verify our argument, we conducted additional experiments on the UCI Adult dataset~\cite{Asuncion2007UCIML} with a two-layer MLP model, and the results are shown in Fig.~\ref{fig: rebuttal_tabular}. 
Our method achieves comparable debiasing performance with the post-processing adversarial training method without modifying any model parameters.
The results suggest that our method could effectively improve model fairness for tabular data.
}

\begin{figure}[thb]
\centerline{
\begin{tabular}{cccc}
    \includegraphics[width=.35\textwidth,height=!]{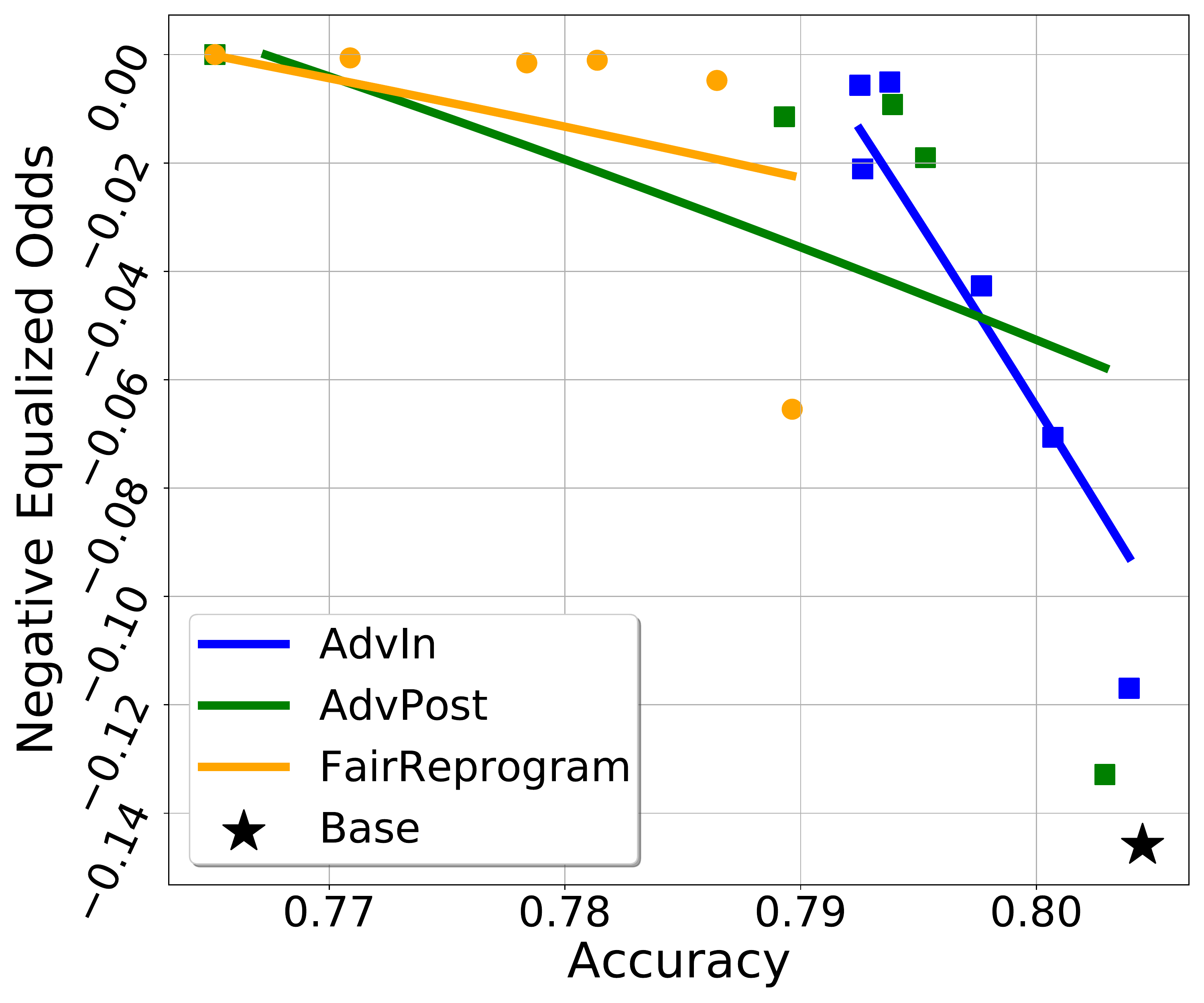}
    &
    \includegraphics[width=.35\textwidth,height=!]{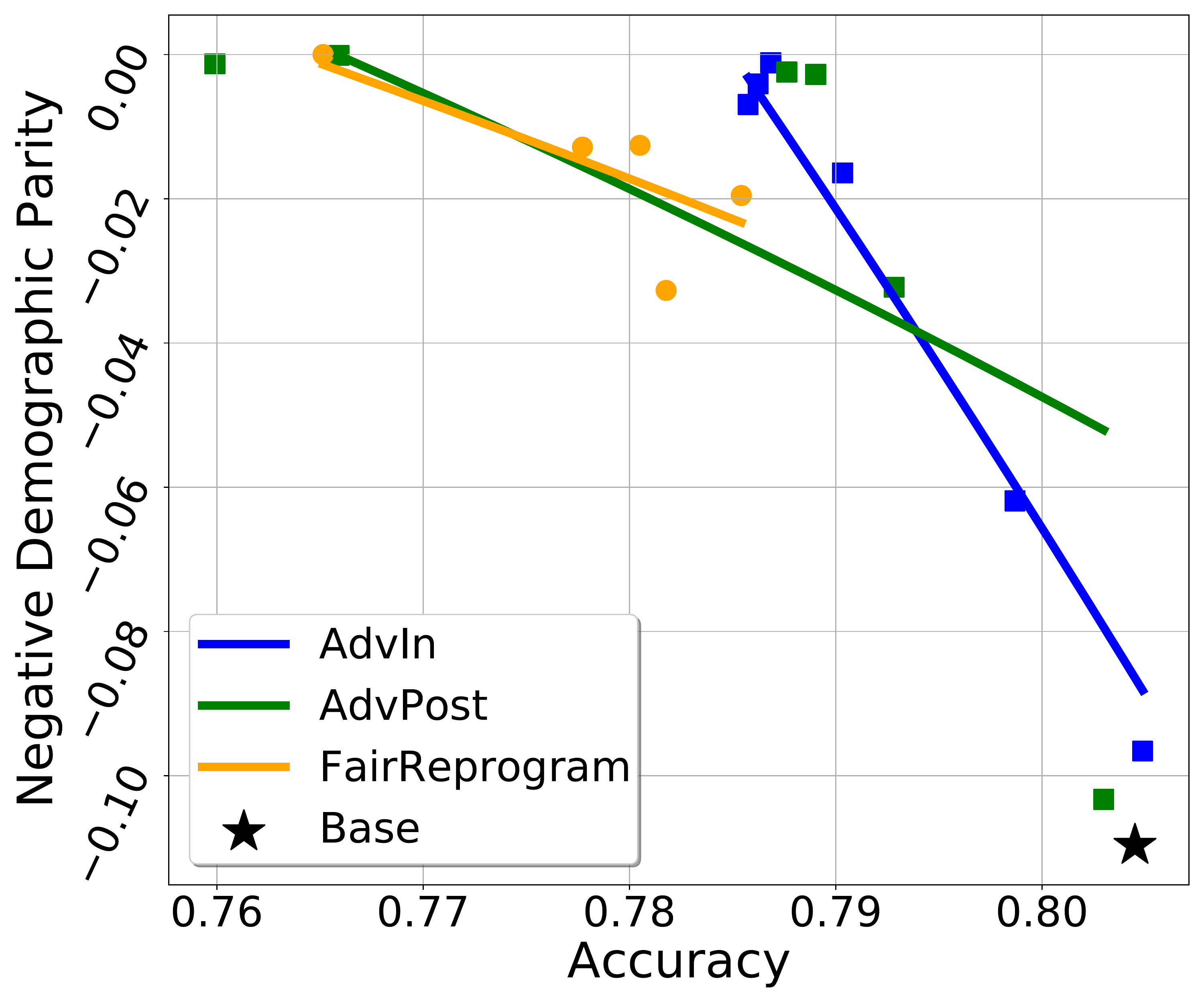}
    \vspace*{-2mm}\\
    \footnotesize{(a)} 
    & 
    \footnotesize{(b)} 
    
\end{tabular}}
\vspace*{-3mm}
\caption{\footnotesize{
Results on \texttt{Adult}. We report the negative DP (left) and the negative EO (right) scores.  For each method, we vary the trade-off parameter $\lambda$ (as shown in \eqref{eq:loss}) to record the performance.  The closer a dot to the upper-right corner, the better the model is.   We consider six different $\lambda$s for each method. 
The solid curve is the fitted polynomial with order 30.
}}
\vspace*{-3mm}
\label{fig: rebuttal_tabular}
\end{figure}

\newpage
\section{Theoretical Proofs}
\label{app: proof}

In this section, we will provide formal proofs to the claims and theorem in the main paper.

\subsection{Sufficient Statistics}

We will show that \e{p_{Y}(\cdot | \bm X^{(y)})} and \e{p_{Z}(\cdot | \bm X^{(z)})} are the sufficient statistics of \e{\bm X^{(y)}} and \e{\bm X^{(z)}} respectively for inferring \e{Y}. Formally, what we need to show is
\begin{equation}
    \small
    p(Y | \bm X^{(y)}) = p(Y | p_{Y}(\cdot | \bm X^{(y)}))
    \label{eq:suff_proof_1}
\end{equation}
and
\begin{equation}
\small
    p(Y | \bm X^{(z)}) = p(Y | p_{Z}(\cdot | \bm X^{(z)}))
    \label{eq:suff_proof_2}
\end{equation}
Eq.~\eqref{eq:suff_proof_1} is an identity. To show Eq.~\eqref{eq:suff_proof_2}:
{\small
\begin{equation*}
    \begin{aligned}
    p(Y | \bm X^{(z)}) &= \mathbb{E}_{Z \sim p_{Z}(\cdot | \bm X^{(z)})}[p(Y | Z, \bm X^{(z)})] \\
    &= \mathbb{E}_{Z \sim p_{Z}(\cdot | \bm X^{(z)})}[p(Y | Z, p_{Z}(\cdot | \bm X^{(z)}))] \\
    &= \mathbb{E}_{Z \sim p_{Z}(\cdot | p_{Z}(\cdot | \bm X^{(z)}))}[p(Y | Z, p_{Z}(\cdot | \bm X^{(z)}))] \\
    &= p(Y | p_{Z}(\cdot | \bm X^{(z)})).
    \end{aligned}
\end{equation*}
}
The second equality is because \e{Y} and \e{\bm X^{(z)}} are independent conditional on \e{Z}, so replacing \e{\bm X^{(z)}} with any functions of \e{\bm X^{(z)}} would not change the conditional probability. The third equality is implied from the identity \e{p(Z | \bm X^{(z)}) = p(Z | p_{Z}(\cdot | \bm X^{(z)}))}.

Using the sufficient statistics, it is very easy to show that \e{p(Y | \bm X)} is a special case of Eq.~\eqref{eq:classifier_assump}:
{\small
\begin{equation*}
  p(Y | \bm X) = p(Y | \bm X^{(y)}, \bm X^{(z)}) =  p(Y | p_{Y}(\cdot | \bm X^{(y)}), p_{Z}(\cdot | \bm X^{(z)})).
\end{equation*}
}%

\subsection{Proof to Thm.~\ref{thm:main}}

We first provide the regularity conditions as stated in Thm.~\ref{thm:main}.
\begin{enumerate}
    \item \textit{Conditional Independence.} The features in \e{\bm X^{(y)}} and \e{\bm X^{(z)}} are independent and identically distributed conditional on \e{Y} and \e{Z} respectively.
    \begin{equation}
        \small
        p^{tr}(\bm X^{(y)} | Y) = \prod_t p^{tr}(\bm X^{(y)}_t | Y), \quad p^{tr}(\bm X^{(z)} | Z) = \prod_t p^{tr}(\bm X^{(z)}_t | Z).
    \end{equation}
    \label{assump:iid}
    
    \item \emph{Infrequent Strong Demographic Features.} The probability of occurrence of features that are very strongly indicative against a certain demographic group is low. Formally \e{\forall z}, \e{\forall \varepsilon > 0}, \e{\exists \sigma > 0}, such that define
\begin{equation}
\small
    \mathcal{S}(\sigma) = \{\bm x^{(z)} \in \mathcal{X}^{(z)} :  p(Z=z | \bm X^{(z)} = \bm x^{(z)})  \leq \sigma\},
    \label{eq:S_def}
\end{equation}
we have
\begin{equation}
        \small
        p(\bm X^{(z)} \in \mathcal{S}(\sigma)) \leq \varepsilon.
        \label{eq:infrequent}
\end{equation}
    \label{assump:infrequent}
    
    \item \emph{Continuous Classifier.} \e{h(\cdot, \cdot)} is continuous with respect to both arguments.
\end{enumerate}

With these assumptions, we will state the following lemma.
\begin{lemma}
Consider the case where \e{Z} takes on \e{K} different values, \emph{i.e.}, there are \e{K} demographic groups. Then
\begin{equation}
\small
    \lim_{p^{tr}(Z=z|\bm X_0^{(z)}=\bm \delta) \rightarrow 1} H(Z |  h(p^{tr}_Y(\cdot | \bm X^{(y)}), p^{tr}_Z(\cdot | \bm X^{(z)} = \tilde{\bm X}^{(z)})), Y) = H(Z |  h(p^{tr}_Y(\cdot | \bm X^{(y)}), c), Y),
\end{equation}
where \e{c} is a {K}-dimensional one-hot vector with the \e{z}-th dimension equal to 1 and 0 elsewhere.
\label{lemma:main}
\end{lemma}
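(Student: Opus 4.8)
The plan is to show that the conditioning random variable $h(p^{tr}_Y(\cdot|\bm X^{(y)}), p^{tr}_Z(\cdot|\bm X^{(z)}=\tilde{\bm X}^{(z)}))$ converges, in a sense strong enough to pass to the conditional entropy, to $h(p^{tr}_Y(\cdot|\bm X^{(y)}), c)$ as $p^{tr}(Z=z|\bm X_0^{(z)}=\bm\delta)\to 1$. The argument has three logical stages. First I would establish that the second argument of $h$ converges to the one-hot vector $c$ with high probability: by the conditional-independence assumption (Assumption~\ref{assump:iid}), the posterior $p^{tr}_Z(\cdot|\bm X^{(z)}=\tilde{\bm X}^{(z)})$ factorizes over the features of $\tilde{\bm X}^{(z)}=[\bm\delta, g(\bm X^{(z)})]$ via Bayes' rule, so the contribution of $\bm\delta$ enters multiplicatively. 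Using the infrequent-strong-demographic-features assumption (Assumption~\ref{assump:infrequent}), I would argue that, except on an event of probability at most $\varepsilon$, none of the remaining features $g(\bm X^{(z)})$ is strongly indicative \emph{against} group $z$ (each has posterior for $z$ bounded below by $\sigma$), so once $p^{tr}(Z=z|\bm X_0^{(z)}=\bm\delta)$ is close enough to $1$ the product posterior $p^{tr}(Z=z|\bm X^{(z)}=\tilde{\bm X}^{(z)})\to 1$ as well — the trigger overwhelms the rest. Hence $p^{tr}_Z(\cdot|\bm X^{(z)}=\tilde{\bm X}^{(z)})\to c$ in probability.

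Second, I would invoke the continuity of $h$ (Assumption~3): since $h$ is continuous in its second argument and the first argument $p^{tr}_Y(\cdot|\bm X^{(y)})$ is held fixed as a random variable, the composed quantity $h(p^{tr}_Y(\cdot|\bm X^{(y)}), p^{tr}_Z(\cdot|\bm X^{(z)}=\tilde{\bm X}^{(z)}))$ converges in probability to $h(p^{tr}_Y(\cdot|\bm X^{(y)}), c)$. Third, I would push this convergence through the conditional entropy $H(Z\,|\,\cdot, Y)$. Here I would use that $Z$ ranges over a finite set of $K$ values, so the entropy functional is bounded and can be written as a finite sum of terms $-p\log p$; convergence in probability of the conditioning variable, combined with boundedness, gives convergence of $H(Z\,|\,h(\cdots),Y)$ to $H(Z\,|\,h(p^{tr}_Y(\cdot|\bm X^{(y)}),c),Y)$ — essentially a bounded-convergence / continuity-of-conditional-entropy argument, being careful that the exceptional $\varepsilon$-event contributes at most $\varepsilon\log K$ and then letting $\varepsilon\to 0$.

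The main obstacle I anticipate is the last step: conditional entropy is \emph{not} a continuous functional of the conditioning variable in general (small-probability perturbations of the conditioning sigma-algebra can change $H(Z|\cdot)$ discontinuously), so I cannot simply appeal to "continuity of entropy." The careful route is to bound $|H(Z|U,Y) - H(Z|V,Y)|$ where $U$ and $V$ are the two versions of $h(\cdots)$: on the high-probability event where $U$ and $V$ are both close to $h(p^{tr}_Y,c)$, one controls the difference via uniform continuity of $x\mapsto -x\log x$ on $[0,1]$ and the finiteness of the alphabet; on the low-probability exceptional event one uses the crude bound $H(Z|\cdot)\le \log K$. This requires the quantitative form of Assumption~\ref{assump:infrequent} (the $\varepsilon$–$\sigma$ statement) precisely so that the exceptional event can be made arbitrarily small \emph{before} taking the limit in $p^{tr}(Z=z|\bm X_0^{(z)}=\bm\delta)$. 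A secondary technical point is making rigorous that "the trigger overwhelms the rest" — i.e., getting a uniform lower bound on $p^{tr}(Z=z|\bm X^{(z)}=\tilde{\bm X}^{(z)})$ over the good event — which is where the restriction to $\mathcal{S}(\sigma)^c$ and the i.i.d. factorization do the real work.
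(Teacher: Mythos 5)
Your proposal follows essentially the same route as the paper's proof: a good/bad event decomposition driven by the quantitative form of Assumption~\ref{assump:infrequent} (the paper's sets $\mathcal{A}$ and $\mathcal{B}$ defined via the likelihood ratio $r(\bm x^{(z)})$), a Bayes-rule/factorization argument showing the trigger forces $p^{tr}_Z(\cdot|\bm X^{(z)}=[\bm\delta,\bm x^{(z)}])$ within $\eta$ of $c$ uniformly over the good event, uniform continuity of the entropy-of-$h$ composite there, and the crude $H(Z|\cdot)\le H(Z)$ bound on the exceptional event. The technical obstacles you flag (non-continuity of conditional entropy, needing a uniform posterior bound on the good set) are exactly the points the paper's argument is built around, so the proposal is correct and matches the paper's approach.
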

\begin{proof}
According to Assumption~\ref{assump:infrequent} (Eq.~\eqref{eq:infrequent}),
{\small
\begin{equation}
    \forall \varepsilon > 0, \quad \exists 0 < \sigma < 1, \quad p(\bm X^{(z)} \in \mathcal{S}(\sigma)) \leq \frac{\varepsilon}{4H(Z)},
    \label{eq:infrequent2}
\end{equation}
}
where \e{\mathcal{S}(\sigma)} is defined in Eq.~\eqref{eq:S_def}.
On the other hand, consider the following composite function
\begin{equation}
    \small
     H(Z |  h(p^{tr}_Y(\cdot | \bm X^{(y)}), p^{tr}_Z(\cdot | \bm X^{(z)} = [\bm \delta, \bm x^{(z)}])), Y).
     \label{eq:cond_entropy_fixed}
\end{equation}
Note that this is \emph{different} from \e{H(Z |  h(p^{tr}_Y(\cdot | \bm X^{(y)}), p^{tr}_Z(\cdot | \bm X^{(z)} = \tilde{\bm X}^{(z)})), Y)}, which is essentially the expectation of Eq.~\eqref{eq:cond_entropy_fixed} over different values of \e{\bm x^{(z)}}.

Since the conditional entropy is continuous and bounded, and \e{h(\cdot, \cdot)} is continuous over both of its arguments with finite support, Eq.~\eqref{eq:cond_entropy_fixed} is \emph{uniformly} continuous with respect to \e{p^{tr}_Z(\cdot | \bm X^{(z)}))}. Therefore, given the aforementioned \e{\varepsilon},
\begin{equation}
\small
\begin{aligned}
    & \exists 0 < \eta < 1, \quad \forall \bm \delta,  \bm x^{(z)} \mbox{ s.t. } \Vert p_Z(\cdot | \bm X^{(z)} = [\bm \delta, \bm x^{(z)}]) - c \Vert_1  \leq \eta \\
    & \Rightarrow \Big\vert H(Z |  h(p^{tr}_Y(\cdot | \bm X^{(y)}), p^{tr}_Z(\cdot | \bm X^{(z)} = [\bm \delta, \bm x^{(z)}])), Y) - H(Z |  h(p^{tr}_Y(\cdot | \bm X^{(y)} ), c), Y) \Big\vert \leq \frac{\varepsilon}{2}.
\end{aligned}
\label{eq:composite_continuity}
\end{equation}
Now, divide the support of \e{\bm X^{(z)}} into two disjoint sets. For notational conciseness, define
\begin{equation}
    \small
    r(\bm x^{(z)}) = \max_{z' \neq z} \frac{p^{tr}(Z=z')p^{tr}(\bm X^{(z)} = \bm x^{(z)} | Z = z')}{p^{tr}(Z=z)p^{tr}(\bm X^{(z)} = \bm x^{(z)} | Z = z)}.
    \label{eq:rz}
\end{equation}
Then the two sets, denoted as \e{\mathcal{A}} and \e{\mathcal{B}} respectively, are divided according to whether \e{r(\bm x^{(z)})} exceeds a threshold, \emph{i.e.}
\begin{equation}
    \small
    \mathcal{A} = \{\bm x^{(z)} \in \mathcal{X}^{(z)}: r(\bm x^{(z)}) \leq \sigma^{-1} -1 \}, \quad 
    \mathcal{B} = \{\bm x^{(z)} \in \mathcal{X}^{(z)}: r(\bm x^{(z)}) > \sigma^{-1} -1 \}.
    \label{eq:set_ab}
\end{equation}
\e{\forall \bm x^{(z)} \in \mathcal{A}}, define
\begin{equation}
\small
    \zeta = 1 - \left[ \frac{(1-\eta/2)^{-1}-1}{(K-1)(\sigma^{-1}-1)G} + 1 \right]^{-1}, \quad \mbox{where } G = \max_{z' \neq z}\frac{p(Z = z)}{p(Z = z')}.
    \label{eq:zeta_def}
\end{equation}
Then we will show that
\begin{equation}
\small
    \forall \bm x^{(z)} \in \mathcal{A}, \forall \bm \delta  \mbox{ s.t. } p^{tr}(Z=z|\bm X_0^{(z)}=\bm \delta) \geq 1 - \zeta \quad \Rightarrow \quad \Vert p^{tr}_Z(\cdot | \bm X^{(z)} = [\bm \delta, \bm x^{(z)}]) - c \Vert_1  \leq \eta,
    \label{eq:bound_delta}
\end{equation}
and hence Eq.~\eqref{eq:composite_continuity} holds. This is because, according to the Bayesian rule,
\begin{equation}
    \small
    p^{tr}(Z=z|\bm X_0^{(z)}=\bm \delta) = \frac{p^{tr}(Z = z)p^{tr}(\bm X_0^{(z)}=\bm \delta | Z = z)}{\sum_{z'\neq z} p^{tr}(Z = z')p^{tr}(\bm X_0^{(z)}=\bm \delta | Z = z')}.
    \label{eq:bayes}
\end{equation}
Therefore
\begin{equation}
    \small
    \begin{aligned}
        p^{tr}(Z=z|\bm X_0^{(z)}=\bm \delta) \geq 1 - \zeta \quad &\Rightarrow \quad 1 + \sum_{z' \neq z} \frac{p^{tr}(Z = z')p^{tr}(\bm X_0^{(z)}=\bm \delta | Z = z')}{p^{tr}(Z = z)p^{tr}(\bm X_0^{(z)}=\bm \delta | Z = z)} \leq (1 - \zeta)^{-1} \\
        & \Rightarrow \quad \frac{p^{tr}(Z = z')p^{tr}(\bm X_0^{(z)}=\bm \delta | Z = z')}{p^{tr}(Z = z)p^{tr}(\bm X_0^{(z)}=\bm \delta | Z = z)} \leq (1 - \zeta)^{-1} - 1, \forall z' \neq z \\
        & \Rightarrow \quad \frac{p^{tr}(\bm X_0^{(z)}=\bm \delta | Z = z')}{p^{tr}(\bm X_0^{(z)}=\bm \delta | Z = z)} \leq G[(1 - \zeta)^{-1} - 1], \forall z' \neq z.
    \end{aligned}
    \label{eq:ratio_bound}
\end{equation}
As a result,
\begin{equation}
    \small
    \begin{aligned}
        p^{tr}(Z = z | \bm X^{(z)} = [\bm \delta, \bm x^{(z)}]) ^{-1} &= 1 + \sum_{z' \neq z} \frac{p^{tr}(Z = z')p^{tr}(\bm X^{(z)}=\bm x^{(z)} | Z = z')p^{tr}(\bm X_0^{(z)}=\bm \delta | Z = z')}{p^{tr}(Z = z)p^{tr}(\bm X^{(z)}=\bm x^{(z)} | Z = z)p^{tr}(\bm X_0^{(z)}=\bm \delta | Z = z)} \\
        & \leq 1 + r(\bm x^{z}) \sum_{z' \neq z} \frac{p^{tr}(\bm X_0^{(z)}=\bm \delta | Z = z')}{p^{tr}(\bm X_0^{(z)}=\bm \delta | Z = z)} \\
        & \leq 1 + (\sigma^{-1}-1) \sum_{z' \neq z} \frac{p^{tr}(\bm X_0^{(z)}=\bm \delta | Z = z')}{p^{tr}(\bm X_0^{(z)}=\bm \delta | Z = z)} \\
        & \leq 1 + (\sigma^{-1}-1)(K-1)G[(1 - \zeta)^{-1} - 1] = (1 - \eta / 2)^{-1},
    \end{aligned}
    \label{eq:bound_delta_x}
\end{equation}
where the first line is implied from the Bayesian rule and assumption~\ref{assump:iid} (similar to Eq.~\eqref{eq:bayes}); the second line is implied from the definition of \e{r(\bm x^{z})} as in Eq.~\eqref{eq:rz}; the third line is due to the definition of set \e{\mathcal{A}} as in Eq.~\eqref{eq:set_ab} (note that the scope of Eq.~\eqref{eq:bound_delta} is confined to \e{ \forall \bm x^{(z)} \in \mathcal{A}}); the last line is implied from Eq.~\eqref{eq:ratio_bound} and the definition of \e{\zeta} as in Eq.~\eqref{eq:zeta_def}.

It then follows that
\begin{equation}
\small
    \begin{aligned}
    \Vert p^{tr}_Z(\cdot | \bm X^{(z)} = [\bm \delta, \bm x^{(z)}]) - c \Vert_1 &= 1 - p^{tr}(Z=z | \bm X^{(z)}) + \sum_{z' \neq ' } p^{tr}(Z=z' | \bm X^{(z)}) \\
    &= 2(1 - p^{tr}(Z=z | \bm X^{(z)})) \\
    & \leq \eta,
    \end{aligned}
\end{equation}
where the first line is implied from the definition of the one-hot vector \e{c} as well as the fact that the probability mass function is alwasy between 0 and 1; the second line is given by the fact that any probability mass functions sum to 1; and the last line is given by Eq.~\eqref{eq:bound_delta_x}. This concludes the proof to Eq.~\eqref{eq:bound_delta}.

Next, notice that
\begin{equation}
    \small
    \begin{aligned}
    & H(Z |  h(p^{tr}_Y(\cdot | \bm X^{(y)}), p^{tr}_Z(\cdot | \bm X^{(z)} = \tilde{\bm X}^{(z)})), Y) \\
    = & \sum_{\bm x^{(z)} \in \mathcal{X}^{(z)}} H(Z |  h(p^{tr}_Y(\cdot | \bm X^{(y)}), p^{tr}_Z(\cdot | \bm X^{(z)} = [\bm \delta, \bm x^{(z)}])), Y) p(\bm X^{(z)} = \bm x^{(z)}) \\
    = & \sum_{\bm x^{(z)} \in \mathcal{A}} H(Z |  h(p^{tr}_Y(\cdot | \bm X^{(y)}), p^{tr}_Z(\cdot | \bm X^{(z)} = [\bm \delta, \bm x^{(z)}])), Y) p(\bm X^{(z)} = \bm x^{(z)}) \\
    & + \sum_{\bm x^{(z)} \in \mathcal{B}} H(Z |  h(p^{tr}_Y(\cdot | \bm X^{(y)}), p^{tr}_Z(\cdot | \bm X^{(z)} = [\bm \delta, \bm x^{(z)}])), Y) p(\bm X^{(z)} = \bm x^{(z)}).
    \end{aligned}
\end{equation}
Thus
\begin{equation}
    \small
    \begin{aligned}
    &\Big\vert H(Z |  h(p^{tr}_Y(\cdot | \bm X^{(y)}), p^{tr}_Z(\cdot | \bm X^{(z)} - \tilde{\bm X}^{(z)})), Y) - H(Z |  h(p^{tr}_Y(\cdot | \bm X^{(y)}), c), Y) \Big\vert \\
    =&  \sum_{\bm x^{(z)} \in \mathcal{A}} \Big\vert H(Z |  h(p^{tr}_Y(\cdot | \bm X^{(y)}), p^{tr}_Z(\cdot | \bm X^{(z)} = [\bm \delta, \bm x^{(z)}])), Y) - H(Z |  h(p^{tr}_Y(\cdot | \bm X^{(y)} ), c), Y) \Big\vert p(\bm X^{(z)} = \bm x^{(z)}) \\
    +&  \sum_{\bm x^{(z)} \in \mathcal{B}} \Big\vert H(Z |  h(p^{tr}_Y(\cdot | \bm X^{(y)}), p^{tr}_Z(\cdot | \bm X^{(z)} = [\bm \delta, \bm x^{(z)}])), Y) - H(Z |  h(p^{tr}_Y(\cdot | \bm X^{(y)} ), c), Y) \Big\vert p(\bm X^{(z)} = \bm x^{(z)})
    \end{aligned}
    \label{eq:decompose}
\end{equation}
In the following, we will bound the two terms respectively. For the first term in Eq.~\eqref{eq:decompose}, Eq.~\eqref{eq:bound_delta} applies because \e{\bm x^{(z)} \in \mathcal{A}}. Therefore, according to Eq.~\eqref{eq:bound_delta} and \eqref{eq:composite_continuity}, when \e{p^{tr}(Z=z|\bm X_0^{(z)}=\bm \delta) \geq 1 - \zeta}, we have
\begin{equation}
    \small
    \begin{aligned}
    & \sum_{\bm x^{(z)} \in \mathcal{A}} \Big\vert H(Z |  h(p^{tr}_Y(\cdot | \bm X^{(y)}), p^{tr}_Z(\cdot | \bm X^{(z)} = [\bm \delta, \bm x^{(z)}])), Y) - H(Z |  h(p^{tr}_Y(\cdot | \bm X^{(y)} ), c), Y) \Big\vert p(\bm X^{(z)} = \bm x^{(z)}) \\
    \leq & \sum_{\bm x^{(z)} \in \mathcal{A}} \frac{\varepsilon}{2} p(\bm X^{(z)} = \bm x^{(z)}) \leq \frac{\varepsilon}{2}.
    \end{aligned}
    \label{eq:bound_set_a}
\end{equation}
For the second term, notice that when \e{\bm x^{(z)} \in \mathcal{B}}, \e{r(\bm x^{(z)}) > \sigma^{-1}-1} (according to Eq.~\eqref{eq:set_ab}). So it follows that
\begin{equation}
    \small
    \begin{aligned}
     p(Z=z | \bm X^{(z)} = \bm x^{(z)})^{(-1)} &= 1 + \sum_{z' \neq z} \frac{p^{tr}(Z = z')p^{tr}(\bm X_0^{(z)}=\bm \delta | Z = z')}{p^{tr}(Z = z)p^{tr}(\bm X_0^{(z)}=\bm \delta | Z = z)} \\
     & \geq 1 + r(\bm x^{(z)}) \geq \sigma^{-1}
     \end{aligned}
\end{equation}
According to \eqref{eq:infrequent2}, this implies
\begin{equation}
    \small
    p(\bm X^{(z)} \in \mathcal{B}) \leq p(\bm X^{(z)} \in \mathcal{S}(\sigma)) \leq \frac{\varepsilon}{4H(Z)},
\end{equation}
and further
\begin{equation}
    \small
    \begin{aligned}
     &\sum_{\bm x^{(z)} \in \mathcal{B}} \Big\vert H(Z |  h(p^{tr}_Y(\cdot | \bm X^{(y)}), p^{tr}_Z(\cdot | \bm X^{(z)} = [\bm \delta, \bm x^{(z)}])), Y) - H(Z |  h(p^{tr}_Y(\cdot | \bm X^{(y)} ), c), Y) \Big\vert p(\bm X^{(z)} = \bm x^{(z)}) \\
     \leq& \Big[H(Z |  h(p^{tr}_Y(\cdot | \bm X^{(y)}), p^{tr}_Z(\cdot | \bm X^{(z)} = [\bm \delta, \bm x^{(z)}])), Y) + H(Z |  h(p^{tr}_Y(\cdot | \bm X^{(y)} ), c), Y)\Big] p(\bm X^{(z)} = \bm x^{(z)}) \\
     \leq& 2H(Z) p(\bm X^{(z)} = \bm x^{(z)}) \leq \frac{\varepsilon}{2}.
    \end{aligned}
    \label{eq:bound_set_b}
\end{equation}
Plugging Eqs.~\eqref{eq:bound_set_a} and \eqref{eq:bound_set_b} into Eq.~\eqref{eq:decompose}, we can finally establish that \e{\forall \varepsilon > 0}, \e{\exists \zeta > 0} (one possible \e{\zeta} as defined in Eq.~\eqref{eq:zeta_def}), when \e{p^{tr}(Z=z|\bm X_0^{(z)}=\bm \delta) \geq 1 - \zeta}, we have
\begin{equation}
    \small
    \Big\vert H(Z |  h(p^{tr}_Y(\cdot | \bm X^{(y)}), p^{tr}_Z(\cdot | \bm X^{(z)} - \tilde{\bm X}^{(z)})), Y) - H(Z |  h(p^{tr}_Y(\cdot | \bm X^{(y)}), c), Y) \Big\vert \leq \varepsilon.
\end{equation}
Hence this concludes the proof to Lemma~\ref{lemma:main}.
\end{proof}
With Lemma~\ref{lemma:main}, we are ready to prove Thm~\ref{thm:main}.
\begin{proof}
Note that
\begin{equation}
    \small
    \begin{aligned}
     H(Z |  h(p^{tr}_Y(\cdot | \bm X^{(y)}), c), Y) \geq H(Z |  p^{tr}_Y(\cdot | \bm X^{(y)}), Y)  = H(Z | Y).
    \end{aligned}
    \label{eq:entropy_ineq1}
\end{equation}
The inequality sign is given by the data processing inequality; the equality is given by the fact that \e{Z} and \e{\bm X^{(y)}} are independent conditional on \e{Y}. On the other hand,
\begin{equation}
    \small
    H(Z |  h(p^{tr}_Y(\cdot | \bm X^{(y)}), c), Y) \leq H(Z | Y).
    \label{eq:entropy_ineq2}
\end{equation}
Combining Eqs.~\eqref{eq:entropy_ineq1} and \eqref{eq:entropy_ineq2}, we have
\begin{equation}
\small
    H(Z |  h(p^{tr}_Y(\cdot | \bm X^{(y)}), c), Y) = H(Z | Y).
\end{equation}

According to Lemma~\ref{lemma:main}, when \e{p^{tr}(Z=z|\bm X_0^{(z)}=\bm \delta) \rightarrow 1},
\begin{equation}
    \small
    \begin{aligned}
     H(Z |  h(p^{tr}_Y(\cdot | \bm X^{(y)}), p^{tr}_Z(\cdot | \bm X^{(z)} = \tilde{\bm X}^{(z)})), Y) & \rightarrow H(Z |  h(p^{tr}_Y(\cdot | \bm X^{(y)}), c), Y) = H(Z | Y).
    \end{aligned}
\end{equation}
\end{proof}

\CR{
\subsection{Discussion on Feature Disentanglement Assumption}
In Section~\ref{sec:perspective}, we made a simplifying assumption that all features could be divided into two disentangled groups, \emph{i.e.}, $\bm X = [\bm X^y, \bm X^z]$, which are governed by the output label $Y$ and demographic information $Z$, respectively. 
The corresponding data generation process could be seen in Figure~\ref{fig:theory}.
On the other hand, however, if features are entangled in practice, we show that {\ours} can still provide false demographic information to overshadow the true demographics in Table~\ref{tab: demo_classifier}. }

\newpage

\section{Broader Impact}
\label{app: broader_impact}
Although there has been a proliferation of works in promoting ML fairness, most methods require training or finetuning the existing models to meet certain fairness notions.
However, this could bring large computational and storage costs, low data efficiency, and model privacy issues with those large-scaled trained models.

Inspired by recent advances in model reprogramming techniques, we propose a new generic post-processing fairness learning framework.
Specifically, we consider a fixed ML model and optimize a fairness trigger that is appended to the inputs with a min-max formulation.
The proposed method enjoys a better fairness-accuracy trade-off compared with vast fairness promoting baselines with far less training costs.

Despite the effectiveness of our method, we note that our method still has some limitations.
As a future research remark, our method still requires demographic annotations to remove biases, which could be hard to acquire in practice.
It remains an open problem to develop a fairness-promoting technique without the use of demographics annotations.

We do not observe any potential negative societal impacts of our method.
Instead, we believe that the outcome of our work could help enhance fairness of AI systems in a computationally-efficient and constraint-least manner. It can also provide broad positive impacts on diverse areas where AI techniques are applied.

\end{document}